\def\eqref#1{Equation~\ref{#1}}
\def\1{\bm{1}}
\DeclareMathAlphabet{\mathsfit}{\encodingdefault}{\sfdefault}{m}{sl}
\SetMathAlphabet{\mathsfit}{bold}{\encodingdefault}{\sfdefault}{bx}{n}
\DeclareMathOperator*{\argmin}{arg\,min}
\newtcbox{\bracketeq}{
  nobeforeafter, enhanced, colback=gray!10, frame empty, math upper,
  boxsep=2pt, left=8pt, right=8pt, top=4pt, bottom=4pt, arc=4pt,
  tcbox raise base, % proper baseline for inline math
  overlay={
    \def\thk{1.0pt}   % line thickness (change to taste)
    \def\stub{4pt}    % length of the corner ticks
    % vertical sides
    \draw[line width=\thk] (frame.north west) -- (frame.south west);
    \draw[line width=\thk] (frame.north east) -- (frame.south east);
    % corner ticks (top/bottom, left/right)
    \draw[line width=\thk] (frame.north west) -- ([xshift=\stub]frame.north west);
    \draw[line width=\thk] (frame.south west) -- ([xshift=\stub]frame.south west);
    \draw[line width=\thk] (frame.north east) -- ([xshift=-\stub]frame.north east);
    \draw[line width=\thk] (frame.south east) -- ([xshift=-\stub]frame.south east);
  }
}
\newtcbox{\boxeq}{%
  enhanced,
  on line,
  nobeforeafter,
  math upper,          % typeset the argument in math mode
  frame empty,
  colback=gray!10,
  boxsep=2pt, left=4pt, right=4pt, top=4pt, bottom=4pt,
  arc=3pt,
  tcbox raise base,
  overlay={
    % parameters (adjust to taste)
    \def\r{3pt}         % <-- give a unit
    \def\thk{1.0pt}     % line thickness of the bracket
    \def\stub{4pt}      % how much of the top/bottom to KEEP near corners
    \def\wipe{1pt}      % vertical clearance to fully cover the stroke

    % 1) draw the full rounded rectangle exactly on the gray box boundary
    \draw[line width=\thk, rounded corners=\r]
      (frame.north west) rectangle (frame.south east);

    % 2) wipe middle of top and bottom edges with the page background color
    \fill[white]
      ([xshift=\stub,yshift=\wipe]frame.north west)
      rectangle
      ([xshift=-\stub,yshift=-\wipe]frame.north east);
    \fill[white]
      ([xshift=\stub,yshift=\wipe]frame.south west)
      rectangle
      ([xshift=-\stub,yshift=-\wipe]frame.south east);
  }
}
\theoremstyle{plain}
\newtheorem{theorem}{Theorem}
\newtheorem{proposition}[theorem]{Proposition}
\newtheorem{corollary}[theorem]{Corollary}
\theoremstyle{definition}
\newtheorem{remark}{Remark}
\newcommand{\shad}{\cellcolor{gray!25}} 
\title{Chance-constrained Flow Matching for High-Fidelity Constraint-aware Generation
%%%% Cite as
%%%% Update your official citation here when published 
% \thanks{\textit{\underline{Citation}}: 
% \textbf{Authors. Title. Pages.... DOI:000000/11111.}} 
}
\author{%
  Jinhao Liang \\
  University of Virginia\\
  \texttt{jliang@email.virginia.edu} \\
  \And
  Yixuan Sun \\
  Argonne National Laboratory\\
  \texttt{yixuan.sun@anl.gov} \\
  \And
  Anirban Samaddar \\
  Argonne National Laboratory\\
  \texttt{asamaddar@anl.gov} \\
  \And
  Sandeep Madireddy\thanks{Joint advising.} \\
  Argonne National Laboratory \\
  \texttt{smadireddy@anl.gov} \\
  \And
  Ferdinando Fioretto\footnotemark[1] \\
  University of Virginia \\
  \texttt{fioretto@virginia.edu} 
}
\begin{document}
\maketitle

\begin{abstract}
Generative models excel at synthesizing high-fidelity samples from complex data distributions, but they often violate hard constraints arising from physical laws or task specifications. A common remedy is to project intermediate samples onto the feasible set; however, repeated projection can distort the learned distribution and induce a mismatch with the data manifold. Thus, recent multi-stage procedures attempt to defer projection to ``clean'' samples during sampling, but they increase algorithmic complexity and accumulate errors across steps.  
This paper addresses these challenges by proposing a novel \emph{training-free method}, \emph{Chance-constrained Flow Matching} (CCFM), that integrates stochastic optimization into the sampling process, enabling effective enforcement of hard constraints while maintaining high-fidelity sample generation. Importantly, CCFM guarantees feasibility in the same manner as conventional repeated projection, yet, despite operating directly on noisy intermediate samples, it is theoretically equivalent to projecting onto the feasible set defined by clean samples. This yields a sampler that mitigates distributional distortion.
Empirical experiments show that CCFM outperforms current state-of-the-art constrained generative models in modeling complex physical systems governed by partial differential equations and molecular docking problems, delivering higher feasibility and fidelity.
\end{abstract}

% keywords can be removed
\keywords{Flow Matching \and Constrained Generation \and Generative Modeling \and Physics Constraints}

\section{Introduction}
\label{sec:introduction}
Diffusion and flow-matching generative models have achieved remarkable performance across domains, including image generation~\citep{ho2020denoising,lipman2022flow}, robotic motion planning~\citep{zhang2024robot,liang2025discrete}, physical system modeling~\citep{cheng2024gradient,utkarsh2025physics}, and other scientific applications~\citep{yim2024improved,miller2024flowmm,christopher2025neuro}. 
Although these models are highly effective at producing content that follows complex distributions, a key challenge is ensuring that generated samples strictly satisfy specific constraints or physical laws. For example, physical systems modeling should adhere to partial differential equations, or designing protein-ligand docking must satisfy steric and chemical constraints.  

To address these challenges, a growing literature has started exploring constrained handling for generative models. However, these approaches present several notable limitations:
{\em (i)} First, gradient-based guidance methods are dominant in many scientific applications, but, while elegant, such methods cannot guarantee the feasibility of the imposed constraints~\citep{huang2024diffusionpde};
{\em (ii)} Second, repeated projection methods, on the other hand, can enforce strict feasibility, but often at the expense of sample quality~\citep{christopher2024constrained,santos2025discrete}; 
{\em (iii)} Finally, a more recent line of work attempts to mitigate these issues via an \emph{extrapolation–correction–interpolation} (ECI) procedure~\citep{cheng2024gradient,utkarsh2025physics}, which reduces sample distortion. However, the performance of these methods strongly depends on the accuracy of the one-step extrapolation step and constraint formulation, leaving the integration of hard constraints during sampling largely ad hoc and often accompanied by increased overhead, without a unifying theoretical foundation.

To address these challenges, this paper introduces \emph{Chance-constrained Flow Matching} (CCFM), a training-free framework for constrained generation that guarantees feasibility while maintaining high-fidelity sample generation. 
Fidelity here goes beyond feasibility, capturing closeness to the target distribution, such as realistic bound structures in molecular docking or low mean squared error against ground-truth solutions in PDEs.
The proposed ideas rely on a key observation: 
under the standard flow matching parametrization using the optimal transport (OT) path, each intermediate (noisy) state can be expressed in an affine relation to the initial state (noise) and the clean terminal sample~\citep{liu2022flow,lipman2024flow}.  
This allows us to make a novel connection between flow matching and chance-constrained programming (CCP)~\cite{kall1994stochastic}, an optimization framework widely used in operations research to enforce constraints with probabilistic guarantees.
Modeling the initial state as a random variable and posing the constraint enforcement as a chance-constrained program allows us to derive a feasible set on noisy samples that is theoretically equivalent to the feasible set defined on clean samples (for a chosen risk level $\alpha$). This enables direct projection of intermediate states, avoiding extrapolation used in ECI sampling, while remaining theoretically equivalent to projection on the clean sample (with probability $1-\alpha$) and thus preserving feasibility. A schematic illustration of the idea is provided in Figure~\ref{alg:ccfm}.

\noindent\textbf{Contribution.} This work makes the following key contributions:
\textbf{(1)} It introduces \emph{Chance-constrained Flow Matching} (CCFM), a novel framework that casts the projection step from a stochastic optimization perspective, enabling concise and effective enforcement of hard constraints while preserving the high-fidelity sample generation.
\textbf{(2)} It establishes a theoretical framework for enforcing hard constraints in the sampling process of generative models through the perspective of chance-constrained programming. The analysis demonstrates that CCFM guarantees feasibility while preserving sample quality.
\textbf{(3)} It demonstrates the effectiveness of CCFM across two scientific domains: molecular docking, specifically modeling the structural transformation between unbound and bound protein conformations, and modeling physical systems governed by PDEs, including Reaction–Diffusion and Navier–Stokes systems. In both domains, CCFM produces samples that satisfy complex constraints and physical principles, achieving new state-of-the-art performance.

\begin{figure*}[!t]
  \centering
    \includegraphics[width=0.8\textwidth]{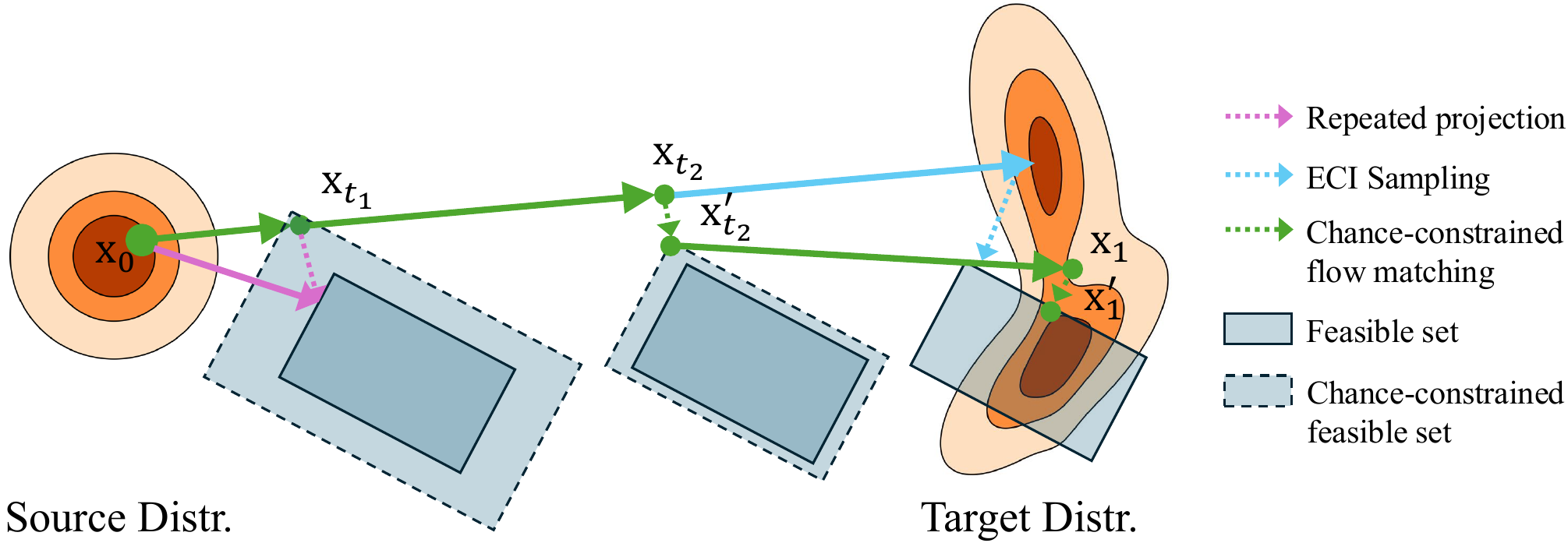}
  \caption{Overview of chance-constrained flow matching (CCFM). Solid arrows follow the sampling path; dashed arrows indicate projection steps. In CCFM, early states remain unchanged ($x_{t_1}$), intermediate states are minimally corrected ($x_{t_2} \rightarrow x_{t_2}'$), and the final state is guaranteed feasible ($x_{1}'$) in a high-probability region. Repeated projection distorts the sampling path. ECI sampling relies on the quality of one-step prediction and may project outside high-probability regions.}
  \label{fig:overview}
\end{figure*}

\section{Related Work}
Recent progress on constrained generative modeling seeks to modify the training or inference stages of these models to account for constraints underlying the data-generating processes. Three predominant strategies can be identified: 
{\bf (1)} \emph{Penalty Training} introduces constraints violation penalties into the training objectives~\citep{naderiparizi2025constrained,li2025gauge,baldan2025flow}. While simple, such penalties rarely deliver hard guarantees and can be brittle in out-of-distribution settings.
{\bf (2)} \emph{Classifier guidance} incorporates an additional penalty term during sampling~\citep{huang2024diffusionpde, jacobsen2024cocogenphysicallyconsistentconditionedscorebased}. These methods offer much flexibility (as surrogate models can be used in place of constraints), but provide no feasibility guarantees and, additionally, degrade under complex constraint structures (e.g., nonlinear or nonconvex constraints). 
{\bf (3)} Finally, \emph{Projection-based methods} enforce feasibility by projecting samples onto the feasible mainfold~\citep{christopher2024constrained,liang2025simultaneous}. However, these projections are applied at every step of the generative process, and when projections are performed in the early, noisy, states, the process has been shown to substantially distort the training data manifold~\citep{chung2022diffusion,santos2025discrete}, as the projection disrupts the exactness of sampling dynamics. To mitigate this distortion issue, \citet{cheng2024gradient} and \citet{utkarsh2025physics} develop ECI-style sampling methods that apply projection only at the final clean sample through a one-step extrapolation step. This, however, comes with an increased sampling complexity cost and, as we will show in our experiments, introduces additional errors from final-state prediction and interpolation.

In contrast, the proposed \emph{CCFM} exploits the optimal transport path in flow matching to enable direct projection at intermediate states, mitigating degradation from projection methods and the complexity of ECI sampling, while ensuring feasibility and high-fidelity generation.

%%%%%%%%%%%%%%%%%%%%%%%%
\section{Preliminaries}
%%%%%%%%%%%%%%%%%%%%%%%%
Prior to introducing the proposed CCFM approach, the paper starts by reviewing the concepts of generative flow matching and chance-constrained programming. 

\textbf{Flow matching.}
Flow matching (FM) is a generative framework that learns a time-dependent velocity field \(u_t:[0,1]\times\mathbb{R}^d\to\mathbb{R}^d\), which defines a mapping (\emph{flow}) $\psi_t: [0, 1] \times \mathbb{R}^d \to \mathbb{R}^d$ that transforms samples \(x_0\) from a source distribution \(p\) into samples \(x_1 \;=\; \psi_1(x_0)\) on a target distribution \(q\) via solving the ordinary differential equation (ODE)~\citep{lipman2024flow}:
\begin{equation}
\frac{d}{dt}\psi_t(x) = u_t(\psi_t(x)), \quad \text{with initial condition} \quad \psi_0(x) = x. 
\end{equation}

Here, the variable $t \in [0,1]$ represents the time in FM, with $t=0$ corresponding to the initial state (noise) and $t=1$ to the final state (clean sample). To learn the velocity field, a probability path \(\{p_t\}_{t\in[0,1]}\) is defined to connect \(p_0=p\) and \(p_1=q\). This reduces distribution matching to a supervised learning problem, as the path provides intermediate states corresponding to target velocities for training \(u_t\). The optimal transport (OT) displacement path is widely adopted as it ensures smooth and cost-minimizing interpolations between distributions that yield stable intermediate states for training~\citep{liu2022flow,kerrigan2023functional,kornilov2024optimal}. The quadratic-cost OT displacement path is implemented conditionally on data \(X_1\sim q\) by mixing with a simple source \(X_0\sim p\) (e.g., \(p=\mathcal{N}(0,I)\))~\citep{mccann1997convexity,villani2008optimal,villani2021topics}:
\begin{equation}
X_t \;=\; \psi_t (X_0|X_1) \;=\; (1-t)\,X_0 \;+\; t\,X_1,\qquad t\in[0,1].    
\end{equation}

The training loss is given by:
$\mathcal{L}_{\mathrm{CFM}}(\theta)
\;=\;
\mathbb{E}_{X_0\sim p,\; X_1\sim q,\; t\sim \mathrm{Unif}[0,1]}\,
\left\|
u_\theta\!\left(X_t, t\right)
-\left(X_1-X_0\right)
\right\|^2,$
where \(u_\theta\) is the learnable velocity field parameterized by \(\theta\), and $\big(X_1-X_0\big)$ denotes the target velocity. After training, the initial state is drawn by \(x_0\sim p\) and propagated through the flow ODE $\frac{d}{dt}x_t \;=\; u_\theta(x_t,t)$, which can be solved with the Euler method~\citep{iserles2009first}:
\begin{equation}
x_{t+\Delta t} \;=\; x_t \;+\; \Delta t\,u_\theta(x_t,t). \label{eq:euler_method}
\end{equation}
If \(u_\theta\) matches the ground-truth marginal velocity, the induced flow transports \(p\) to \(q\).
In deterministic settings, where \(X_1\) is determined by \(X_0\), the trajectories are linear~\citep{lipman2022flow,lipman2024flow}:
\begin{equation}
    x_t=(1-t)\,x_0+t\,x_1 . \label{eq:linearity}
\end{equation}

\textbf{Chance-constrained Programming.}
In operations research and stochastic optimization, \emph{chance-constrained programming} (CCP) \citep{kall1994stochastic} formalizes decision-making under uncertainty by requiring constraints to hold with a specific probability, rather than for every realization of the uncertainty. Consider a general optimization problem with objective function $f(\cdot)$, constraints $g(\cdot)$, and decision variable $x$:
\begin{equation}
\min_y f(x) \quad \text{s.t.} \quad g(x) \le 0.
\end{equation}
When the constraints depend on a random input $\xi$, i.e., $g(x,\xi) \le 0$, it is typically intractable to enforce feasibility for all realizations of $\xi$. To address this, CCP imposes a probabilistic feasibility requirement~\citep{nemirovski2007convex,liang2024joint}:
\begin{equation}
\min_y f(x) \quad \text{s.t.} \quad 
\mathbb{P}\big(g(x,\xi) \le 0\big) \ge 1-\alpha,
\label{eq:ccp}
\end{equation}
where $\alpha \in (0,1)$ is a user-specified risk level controlling the allowable constraint violation probability. In this way, the solution to (Problem \ref{eq:ccp}) ensures that the constraints are satisfied with probability at least $1-\alpha$ under the uncertainty in $\xi$. As $\alpha \to 0$, CCP approaches exact feasibility. Variants and relaxations (e.g., convex approximations or scenario-based surrogates) are standard in Operations Research practice when Problem \ref{eq:ccp} is nonconvex or intractable, but the core modeling principle remains the probabilistic guarantee.

\section{Methodology}
This section proposes CCFM, a novel integration of flow matching with chance-constrained programming to generate high-fidelity samples with feasibility guarantees. We begin by revisiting the classical repeated projection method. Then, we investigate how to incorporate chance constraints into flow matching, followed by an equivalent tractable reformulation that enables efficient implementation. Finally, we provide a theoretical analysis that examines feasibility guarantees and shows equivalence to repeated projection and ECI sampling.

\textbf{Repeated Projection.}
Flow matching evolves an initial noise sample toward a target sample via the learned vector field using explicit Euler update (see \eqref{eq:euler_method}). To enforce the feasibility of the generated samples, several works apply  \emph{repeated projections}~\citep{christopher2024constrained,liang2025simultaneous}. Let $\mathcal{C} = \{x : g(x) \leq 0\}$ denote thes feasible region. At each step, a projection operator is introduced to map intermediate samples onto $\cal C$:
\begin{equation}
    \mathcal{P}(x_t) = \argmin_{x_t' \in \cal C} \| x_t' - x_t \|_2^2,
\end{equation}
where $x_t'$ is the nearest feasible point satisfying the constraints. With this operator, the (projected) sampling rule \eqref{eq:euler_method} becomes
\begin{equation}
\label{eq:projection}
x_{t+\Delta t} = \mathcal{P}\!\left(x_t + \Delta t\,u_\theta(x_t,t)\right),
\end{equation}
that is, each tentative Euler step is followed by a projection onto $\cal C$. This projected-Euler scheme preserves per-step feasibility (when $\cal C$ is convex). 

\subsection{Chance-constrained Flow Matching}
Observe that the projection operator $\mathcal{P}(\cdot)$ is applied to intermediate noisy samples, while the constraint set is defined with respect to the clean sample. As discussed earlier, this mismatch has been shown to alter the sampling trajectory relative to the learned flow, potentially distorting the induced distribution~\citep{kynkaanniemi2024applying,rojas2025theory}. 
To address this challenge, \emph{Chance-constrained Flow Matching} reformulates Problem \ref{eq:projection} via a stochastic optimization lens which allows us to adaptively adjust the constraints tightness over the sampling time $t$. 

To do so, we exploit a key property of FM: the linearity of the flow under the OT path, as shown in \eqref{eq:linearity}. This linear structure allows us to characterize the dependency between the clean final state $x_1$, any intermediate state $x_t$, and the initial noise $x_0$. 
In particular, the final state $x_1$ can be expressed in closed form as a function of $x_t$ and $x_0$ by rearranging Equation~\ref{eq:linearity} as
\begin{equation}
    x_1 = t^{-1} x_t - (1-t)t^{-1}x_0.
    \label{eq:x_1}
\end{equation}
Since the constraints are defined on the clean sample $x_1$, we substitute Equation~\ref{eq:x_1} into constraints $g(x_1)\le 0$, yielding \[g\!\left(t^{-1}x_t - (1-t)t^{-1}x_0\right)\le 0.\] 
Note that, because $x_0$ is sampled from the noise distribution, the term $(1-t)t^{_-1}x_0$ is random. This allow us to introduce a random variable $\xi = (1-t)t^{-1}x_0$ which isolates the stochastic component, and rewrite the constraint compactly as 
\[g(t^{-1}x_t - \xi)\le 0.\] 
This reformulation isolates the source of uncertainty explicitly and naturally motivates a chance-constrained treatment at each time step $t$:
\begin{equation}
\label{eq:cc}
\mathbb{P}_{\xi}\!\left(g(t^{-1}x_t - \xi)\le 0\right) \ge 1-\alpha,
\end{equation}
where $\xi := (1-t)t^{-1}x_0$ and $\mathbb{P}_{\xi}$ denotes probability with respect to the distribution of $\xi$. The corresponding chance-constrained projection operator is then defined as the closest feasible adjustment of \(x_t\) that satisfies the probabilistic constraints:
\begin{equation}
    \mathcal{P}_{\mathrm{cc}}(x_t, 1-\alpha)
    \;=\;
    \argmin_{x_t'}\;\|x_t' - x_t\|_2^2
    \quad \text{s.t.}\quad
    \mathbb{P}_{\xi}\!\left(g(t^{-1}x_t - \xi)\le 0\right) \ge 1-\alpha.
\end{equation}
Then, after embedding the projection operator $\mathcal{P}_{\mathrm{cc}}$ after each sampling step, the update rule for the CCFM sampling process is given by
\begin{equation}
x_{t+\Delta t} = \mathcal{P}_{\mathrm{cc}}\left(x_t + \Delta t\,u_\theta(x_t,t), \phi(t)\right).
\end{equation}
where $\phi(t) = (t/2)^n$ serves as a probabilistic scheduler that adaptively modulates the satisfaction, where $n$ is a user-specified value. As the sampling time $t$ increases, the contribution of $x_0$ to $x_t$ decreases, while the probability of satisfaction of the constraints increases simultaneously. 
\begin{wrapfigure}[9]{r}{0.6\textwidth}
\vspace{-0pt}
    \begin{minipage}{0.55\textwidth}
        \begin{algorithm}[H]
            \DecMargin{10pt}
            \caption{Chance-constrained Flow Matching}
            \label{alg:ccfm}
            \KwIn{Learned vector field $u_{\theta}$, Euler steps $N$.}
            Sample noise function $x_0 \sim p$.\;
            \For{$t \leftarrow 0, 1/N, 2/N, \dots, (N-1)/N$}{
                $x_{t+1/N} \leftarrow x_t + u_{\theta}(x_t, t) / N$\;
                $x_{t+1/N} \leftarrow \mathcal{P}_{\mathrm{cc}}(x_{t+1/N}, \phi(t))$\;
            }
            \KwRet{$x_1$}\;
        \end{algorithm}
    \end{minipage}
\end{wrapfigure}
These effects yield an adaptive tightening of the constraints, and as $t \rightarrow 1$, the random variable $\xi$ vanishes, leading the chance-constrained projection operator to degenerate into its deterministic counterpart. The detailed procedure of CCFM is summarized in Algorithm~\ref{alg:ccfm}.

\begin{remark}
    CCFM can be contrasted with existing projection strategies.  \emph{Repeated projection} enforces the same constraints at every step. When the constraints are too tight, especially early in sampling, the projection may significantly distort the learned velocity. This deviation from the learned distribution reduces generation quality. On the other hand, at each step $t$, \emph{ECI} predicts the final state in one step, projects it, and interpolates back to the intermediate state. Although it avoids perturbing noisy samples, the velocity depends on a projected one-step prediction rather than the learned distribution, making performance sensitive to prediction quality. In contrast, \emph{CCFM} adaptively adjusts constraint tightness over time $t$. This yields minimal modifications of the learned velocity toward feasibility and enables high-fidelity generation without reliance on \emph{one-step predictions}.
\end{remark}

\subsection{Tractable Reformulation for CCFM}
While the chance-constrained projection operator is conceptually appealing, directly optimizing over a probabilistic constraint is generally intractable. Fortunately, the source distribution used in FM is typically a simple distribution (e.g., Gaussian);  this structure enables a tractable reformulation of the chance-constrained projection operator into an equivalent deterministic problem. Such reformulations are well studied in the optimization communities \cite{nemirovski2007convex}.  The following proposition establishes this tractable reformulation:

\begin{proposition}[Tractable reformulations of chance-constrained projection operator]\label{prop:tractable_reformulation}
Let $t\in(0,1]$ and fix a risk level $\alpha\in(0,1)$. Consider $a,x_t\in\mathbb{R}^d$, $b\in\mathbb{R}$, and $\xi\sim\mathcal N(0,\sigma^2 I_d)$. In particular, when $\xi=(1-t)t^{-1}x_0$ with $x_0\sim\mathcal N(0,I_d)$ (the standard linear interpolation path), one has $\sigma=\frac{1-t}{t}$. Denote the Euclidean norm by $\|\cdot\|_2$, and let $z_{1-\alpha}:=\Phi^{-1}(1-\alpha)$ and $z_{1-\alpha/2}:=\Phi^{-1}(1-\alpha/2)$, where $\Phi$ is the standard normal CDF. Then:
\begin{enumerate}[left=0pt,nosep,topsep=0pt,label=\roman*.]
\item {\bf Linear constraints}: For $g(x):=a^{\top}x-b$, the chance constraint
$
\mathbb{P}_{\xi}\!\big(g(t^{-1}x_t-\xi)\le 0\big) \!\ge\! 1-\alpha
$
is equivalent to the deterministic constraint:
$
a^{\top}x_t\ \le\ t\,b\ -\ t\,\sigma\,\|a\|_2\,z_{1-\alpha}.
$
\item {\bf Quadratic constraints}: For $g(x):=(a^{\top}x)^2-b$ with $b\!>\!0$, the chance constraint
$
\mathbb{P}_{\xi}\!\big((a^{\top}(t^{-1}x_t-\xi))^2\le b\big) \!\ge\! 1-\alpha
$
is enforced by
$
\big|a^{\top}x_t\big|\ \le\ t\Big(\sqrt{b}\ -\ \sigma\,\|a\|_2\,z_{1-\nicefrac{\alpha}{2}}\Big),$ if $\sqrt{b}\ \ge\ \sigma\,\|a\|_2\,z_{1-\nicefrac{\alpha}{2}}.$
\end{enumerate}
\end{proposition}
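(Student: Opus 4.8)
The plan is to reduce both chance constraints to univariate Gaussian tail conditions, exploiting the fact that $\xi\sim\mathcal N(0,\sigma^2 I_d)$ makes any linear functional $a^\top\xi$ a scalar Gaussian; the probabilistic inequalities can then be inverted through the standard normal CDF $\Phi$. Throughout I would use that $t\in(0,1]$ so $t>0$, allowing me to clear factors of $t$ without flipping inequalities.

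For the linear case (i), I would first expand $g(t^{-1}x_t-\xi)=t^{-1}a^\top x_t-a^\top\xi-b$, so the event $\{g\le 0\}$ is equivalent to $\{a^\top\xi\ge t^{-1}a^\top x_t-b\}$. Setting $Y:=a^\top\xi$, I observe $Y\sim\mathcal N(0,\sigma^2\|a\|_2^2)$, so with $c:=t^{-1}a^\top x_t-b$ the chance constraint $\mathbb P(Y\ge c)\ge 1-\alpha$ is equivalent to $\mathbb P(Y<c)\le\alpha$, i.e. $\Phi\!\big(c/(\sigma\|a\|_2)\big)\le\alpha$. Applying $\Phi^{-1}$ and using $\Phi^{-1}(\alpha)=-z_{1-\alpha}$ gives $c\le-\sigma\|a\|_2 z_{1-\alpha}$; substituting $c$ and multiplying through by $t$ recovers the stated deterministic constraint $a^\top x_t\le tb-t\sigma\|a\|_2 z_{1-\alpha}$. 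This direction is an \emph{exact} equivalence.

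For the quadratic case (ii), the key step is to rewrite $(a^\top(t^{-1}x_t-\xi))^2\le b$ as the two-sided event $-\sqrt{b}\le W\le\sqrt{b}$, where $W:=a^\top(t^{-1}x_t-\xi)\sim\mathcal N(\mu,s^2)$ with mean $\mu:=t^{-1}a^\top x_t$ and standard deviation $s:=\sigma\|a\|_2$. The exact probability is the difference of CDFs $\Phi\!\big((\sqrt{b}-\mu)/s\big)-\Phi\!\big((-\sqrt{b}-\mu)/s\big)$, which does not invert in closed form. To obtain a tractable sufficient condition I would control each tail separately by $\alpha/2$: since the tail events $\{W>\sqrt{b}\}$ and $\{W<-\sqrt{b}\}$ are disjoint, requiring both $\mathbb P(W>\sqrt{b})\le\alpha/2$ and $\mathbb P(W<-\sqrt{b})\le\alpha/2$ forces the two-sided probability to be at least $1-\alpha$. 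Inverting each tail via $\Phi^{-1}$ yields $\sqrt{b}-\mu\ge s\,z_{1-\alpha/2}$ and $\sqrt{b}+\mu\ge s\,z_{1-\alpha/2}$, which combine into $|\mu|\le\sqrt{b}-s\,z_{1-\alpha/2}$. Substituting $\mu=t^{-1}a^\top x_t$ and multiplying by $t$ gives $|a^\top x_t|\le t\big(\sqrt{b}-\sigma\|a\|_2 z_{1-\alpha/2}\big)$, and feasibility of this bound (its right-hand side being nonnegative) requires exactly the stated proviso $\sqrt{b}\ge\sigma\|a\|_2 z_{1-\alpha/2}$.

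The main obstacle is the quadratic case: unlike the linear constraint, the two-sided Gaussian probability has no clean inverse, so an exact equivalence is unavailable and only a sufficient condition is attainable — which is why the statement says the constraint ``is enforced by'' rather than ``is equivalent to.'' The symmetric $\alpha/2$ split is a Bonferroni-type bound and is therefore conservative: it discards the information that the mean shift $\mu$ makes one tail strictly smaller than $\alpha/2$ whenever $\mu\neq 0$. I would emphasize that this conservatism is the price for a clean, implementable deterministic surrogate, and that the nonnegativity proviso is not an extra assumption but precisely the condition under which the reformulated feasible set is nonempty.
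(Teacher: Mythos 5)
Your proposal is correct and follows essentially the same route as the paper's proof: the linear case is handled as an exact equivalence by reducing $a^\top\xi$ to a univariate Gaussian and inverting the quantile, and the quadratic case uses the identical Bonferroni/union-bound split of $\alpha/2$ per tail on $W\sim\mathcal N(\mu,\sigma^2\|a\|_2^2)$, yielding the same two one-sided quantile conditions, the same combined bound $|\mu|\le\sqrt{b}-\sigma\|a\|_2 z_{1-\alpha/2}$, and the same nonvacuity proviso. Your closing observation that the split is conservative (sufficient, not necessary) matches the paper, which additionally notes the bound is tight exactly when $a^\top x_t=0$.
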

Since $z_{1-\alpha}=\Phi^{-1}(1-\alpha)$ is monotone in $\alpha$ and the variance $\sigma=\tfrac{1-t}{t}$ decreases as $t\!\to\!1$, the effective constraint bound tightens progressively with the sampling time $t$. The paper reports proofs for all theoretical results in Appendix~\ref{app:missing_proof}.

\subsection{Theoretical Analysis for CCFM}
Having established a tractable chance-constrained projection for any time $t \in (0,1]$, the next result shows that this procedure guarantees the feasibility of generated samples. 
\begin{corollary}[Feasibility Guarantee]\label{cor:cc_feas}
At the sampling time $t=1$, the chance-constrained projection operator $\mathcal{P}_{\mathrm{cc}}$ from Proposition~\ref{prop:tractable_reformulation} reduces to the standard Euclidean projection onto the deterministic feasible set $\mathcal{F}_{\mathrm{det}} := \{x' \in \mathbb{R}^d \mid g(x') <= 0\}$. That is,
\[
\mathcal{P}_{\mathrm{cc}}(x_1, 1-\alpha) = \argmin_{x_1' \in \mathcal{F}_{\mathrm{det}}} \|x_1' - x_1\|_2^2.
\]
\end{corollary}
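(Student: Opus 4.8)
The plan is to exploit the fact that the entire randomness in the chance constraint is carried by $\xi = (1-t)t^{-1}x_0$, whose scale $\sigma = (1-t)/t$ collapses to zero at $t=1$. At that endpoint $\xi$ degenerates to a deterministic point mass at the origin, so the probabilistic feasibility requirement must reduce to a pointwise deterministic one; I would then check that the resulting feasible set is exactly $\mathcal{F}_{\mathrm{det}}$ and conclude that the two projection operators, sharing the same quadratic objective, coincide.

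First I would substitute $t=1$ into the chance-constrained projection operator. The inner constraint event $g(t^{-1}x_1' - \xi) \le 0$ becomes $g(x_1' - \xi) \le 0$, and since $\xi \equiv 0$ almost surely, this is the deterministic event $g(x_1') \le 0$. Its probability under $\xi$ is therefore the indicator $\mathbf{1}[g(x_1') \le 0] \in \{0,1\}$. The chance constraint $\mathbb{P}_{\xi}(g(x_1') \le 0) \ge 1-\alpha$ then forces this indicator to equal $1$: because $\alpha \in (0,1)$ we have $1-\alpha > 0$, so the only way the indicator can meet the threshold is to take the value $1$, i.e. $g(x_1') \le 0$. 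Conversely, any feasible $x_1'$ trivially yields probability $1 \ge 1-\alpha$. Hence the chance-constrained feasible set equals $\mathcal{F}_{\mathrm{det}}$ exactly, and minimizing $\|x_1' - x_1\|_2^2$ over it reproduces the standard Euclidean projection.

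As a consistency check aligned with the statement's reference to Proposition~\ref{prop:tractable_reformulation}, I would specialize the tractable reformulations at $\sigma = 0$ (equivalently $t=1$). In the linear case the safety margin $t\sigma\|a\|_2 z_{1-\alpha}$ vanishes, leaving $a^\top x_1' \le b$, i.e. $g(x_1') \le 0$; in the quadratic case the margin $\sigma\|a\|_2 z_{1-\alpha/2}$ vanishes, leaving $|a^\top x_1'| \le \sqrt{b}$, i.e. $(a^\top x_1')^2 \le b$. Both recover precisely the deterministic constraint, confirming the degeneracy argument on the two constraint families treated by the proposition.

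The step I expect to require the most care is the degenerate-distribution argument rather than any computation: one must treat $\xi$ at $t=1$ as a genuine Dirac mass at $0$ (so that the probability is a two-valued indicator) and invoke $1-\alpha > 0$, guaranteed by $\alpha < 1$, to pin the indicator to $1$. It is worth noting that the algorithm feeds the scheduler value $\phi(1) = (1/2)^n$ in place of $1-\alpha$; since this too lies in $(0,1)$ and is strictly positive, the same argument applies verbatim, so the reduction to $\mathcal{F}_{\mathrm{det}}$ is insensitive to which strictly positive satisfaction level is used.
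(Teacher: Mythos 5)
Your proposal is correct and follows essentially the same route as the paper's proof: at $t=1$ the random term $\xi=(1-t)t^{-1}x_0$ degenerates to zero, the constraint probability collapses to a $\{0,1\}$-valued indicator, and since $1-\alpha>0$ the chance-constrained feasible set coincides with $\mathcal{F}_{\mathrm{det}}$, so the two projections agree. Your additional touches --- explicitly invoking $1-\alpha>0$ to pin the indicator to $1$, the $\sigma=0$ consistency check against Proposition~\ref{prop:tractable_reformulation}, and the remark that the scheduler value $\phi(1)=(1/2)^n$ works equally well --- are sound refinements of the same argument rather than a different proof.
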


Proposition~\ref{prop:tractable_reformulation} and Corollary~\ref{cor:cc_feas} indicate that for $t<1$ the operator provides a probabilistic guarantee, while at $t=1$ it degenerates to the deterministic projection operator, thereby ensuring feasibility.
Next, we demonstrate that feasibility at the final state propagates backward along the linear OT path, and show that the projection operators at intermediate times admit an explicit geometric relation to the projection on final clean samples. 
All the results below assume $g:\mathbb{R}^d\to\mathbb{R}$ proper, lower semicontinuous, and convex, as recurrent in optimization theory. We define the clean feasible set as $\mathcal{C}_1:=\{x_1\in\mathbb{R}^d:\ g(x_1)\le 0\}$, which is assumed to be nonempty. Finally, for $t\in(0,1]$, we assume that the sampling path is $x_t=(1-t)x_0+t x_1$, and we use the affine map $M_t(x):=\tfrac{\big(x-(1-t)x_0\big)}{t}$. 

\begin{theorem}[Pathwise feasibility propagation]\label{thm:feasibility_propagation}
Under the asssumptions above, the probability $\mathbb{P}_{\xi}$ w.r.t.~the degenerate law concentrated at $\xi$ of satisfying the imposed constraints at time $t\in(0,1]$ is:
\begin{equation}
\label{eq:pathwise_feasibility}
\mathbb{P}_{\xi}\!\left(g\big(t^{-1}x_t-\xi\big)\le 0\right)
\;=\;
\begin{cases}
1, & \text{if } g(x_1)\le 0,\\[2pt]
0, & \text{otherwise,}
\end{cases}
\end{equation}
where $\xi:=\frac{1-t}{t}x_0$. In particular, the chance constraint \emph{\ref{eq:cc}} holds if and only if $x_1\in\mathcal{C}_1$.
\end{theorem}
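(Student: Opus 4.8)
The plan is to exploit the exact algebraic structure of the linear OT path together with a careful reading of how $\xi$ enters this particular statement. The crucial modeling point is that here $\xi$ is \emph{not} the Gaussian random variable of Proposition~\ref{prop:tractable_reformulation}, but a deterministic quantity governed by a degenerate (Dirac) law concentrated at the single value $\xi=\tfrac{1-t}{t}x_0$. The first step is then a direct substitution: using the path $x_t=(1-t)x_0+t\,x_1$ and the definition of $\xi$, I would compute
\[
t^{-1}x_t-\xi \;=\; t^{-1}\big[(1-t)x_0+t\,x_1\big]-\tfrac{1-t}{t}\,x_0 \;=\; x_1,
\]
observing that the two $x_0$-terms cancel exactly. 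Hence the argument of $g$ collapses to the clean sample $x_1$ for every $t\in(0,1]$.

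The second step is to interpret the probability. Because $\xi$ follows a point mass, the quantity $g(t^{-1}x_t-\xi)=g(x_1)$ is deterministic, so the event $\{g(t^{-1}x_t-\xi)\le 0\}$ either occurs with certainty or not at all. Concretely,
\[
\mathbb{P}_{\xi}\!\left(g\big(t^{-1}x_t-\xi\big)\le 0\right) \;=\; \mathbf{1}\big[g(x_1)\le 0\big],
\]
which is precisely the claimed two-case expression: the value $1$ when $g(x_1)\le 0$ and $0$ otherwise. I would note that the convexity and lower-semicontinuity hypotheses on $g$ play no role at this stage — they are needed only for the later projection-geometry results — so this step requires nothing beyond measurability of the sublevel event.

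For the final ``if and only if'' claim, I would combine the indicator identity with the chance constraint \eqref{eq:cc}, namely $\mathbb{P}_{\xi}(g(t^{-1}x_t-\xi)\le 0)\ge 1-\alpha$. Since $\alpha\in(0,1)$ forces $1-\alpha>0$, and the left-hand side takes only the values $0$ or $1$, the inequality can hold only when the probability equals $1$, i.e.\ exactly when $g(x_1)\le 0$, which is the definition of $x_1\in\mathcal{C}_1$. This yields the equivalence directly.

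I do not anticipate a genuine computational obstacle; the only real subtlety — and the single point I would state with care — is the \emph{reinterpretation of $\xi$} from the Gaussian variable of the tractable reformulation to a degenerate law, which is what makes the probability degenerate into a $\{0,1\}$-valued indicator rather than a normal-CDF expression. Once that distinction is made explicit, the theorem is an immediate consequence of the exact cancellation in the first step.
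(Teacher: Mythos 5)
Your proof is correct and follows essentially the same route as the paper's: the exact cancellation $t^{-1}x_t-\xi=x_1$ under the linear path, followed by the observation that under the degenerate law the probability collapses to the indicator $\mathbf{1}[g(x_1)\le 0]$. Your additional spelled-out argument for the ``if and only if'' claim (using $1-\alpha>0$ and the $\{0,1\}$-valuedness of the probability) is a detail the paper leaves implicit, but it is the intended reasoning.
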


\begin{theorem}[Geometric structure of the samplewise feasible sets]\label{thm:projection_commutation}
For every \(t\in(0,1]\),
\[
\mathcal C_t(x_0)=(1-t)x_0+t\,\mathcal C_1,
\]
where \(\mathcal C_t(x_0):=\{x_t:\,g(M_t(x_t))\le 0\}\) and \(\mathcal C_1:=\{x_1:\,g(x_1)\le 0\}\). 
Then, let \(\mathbb{P}_t\) and \(\mathbb{P}_1\) denote Euclidean projections onto \(\mathcal C_t(x_0)\) and \(\mathcal C_1\), respectively, for any \(x\in\mathbb{R}^d\),
\begin{equation}
\label{eq:geom_struct}
\boxeq{\;\mathbb{P}_t(x)=(1-t)x_0+t\,\mathbb{P}_1\!\big(M_t(x)\big),\;}
\end{equation}
and the minimizer is unique.
\end{theorem}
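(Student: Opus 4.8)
The plan is to exploit that $M_t(x)=t^{-1}\big(x-(1-t)x_0\big)$ is an invertible affine map with strictly positive scaling factor $t$, so that both claims reduce to the affine equivariance of the Euclidean projection onto a convex set. First I would record that $M_t$ is a bijection of $\mathbb{R}^d$ with inverse $M_t^{-1}(y)=(1-t)x_0+t\,y$. The set identity is then immediate from the definitions: $x_t\in\mathcal C_t(x_0)$ iff $g(M_t(x_t))\le 0$ iff $M_t(x_t)\in\mathcal C_1$ iff $x_t\in M_t^{-1}(\mathcal C_1)=(1-t)x_0+t\,\mathcal C_1$. This establishes $\mathcal C_t(x_0)=(1-t)x_0+t\,\mathcal C_1$ with no further work.

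Next I would establish that both projections are well-defined and single-valued. Since $g$ is proper, lower semicontinuous, and convex, the sublevel set $\mathcal C_1=\{x:g(x)\le 0\}$ is closed and convex, and it is nonempty by assumption; hence the Hilbert projection theorem guarantees that $\mathbb P_1$ exists and is unique. Because $\mathcal C_t(x_0)$ is the image of $\mathcal C_1$ under the invertible affine map $M_t^{-1}$ (equivalently the preimage under $M_t$), it is likewise nonempty, closed, and convex, so $\mathbb P_t$ is also unique. This is precisely where the hypotheses on $g$ are consumed.

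For the core computation I would parametrize any feasible $x'\in\mathcal C_t(x_0)$ as $x'=(1-t)x_0+t\,y$ with $y\in\mathcal C_1$, using the set identity just proved. Observing that $x-(1-t)x_0=t\,M_t(x)$, the projection objective factors as
\[
\|x'-x\|_2^2=\|t\,y-t\,M_t(x)\|_2^2=t^2\,\|y-M_t(x)\|_2^2 .
\]
Since $t^2>0$ for $t\in(0,1]$, minimizing $\|x'-x\|_2^2$ over $x'\in\mathcal C_t(x_0)$ is equivalent to minimizing $\|y-M_t(x)\|_2^2$ over $y\in\mathcal C_1$, whose unique minimizer is $y^\star=\mathbb P_1(M_t(x))$. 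Mapping back through $M_t^{-1}$ yields $\mathbb P_t(x)=(1-t)x_0+t\,\mathbb P_1(M_t(x))$, and uniqueness is inherited from the uniqueness of $y^\star$ together with the injectivity of $M_t^{-1}$.

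The argument is essentially a single affine-equivariance fact once the bookkeeping is arranged, so I do not anticipate a hard analytic obstacle. The only points demanding care are the convex-analytic justification of paragraph two (nonemptiness, closedness, convexity of both feasible sets, so the projections are genuinely single-valued) and the explicit use of $t>0$: the factorization of the objective by $t^2$ and the consequent preservation of minimizers both rely on the scaling being strictly positive, which is exactly why the statement is restricted to $t\in(0,1]$ and $M_t$ is undefined at $t=0$.
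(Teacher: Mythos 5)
Your proposal is correct and follows essentially the same route as the paper's proof: the set identity read off from the definition of $M_t$, the change of variables $x'=(1-t)x_0+t\,y$ that factors the objective into $t^2\|y-M_t(x)\|_2^2$, and the appeal to nonemptiness, closedness, and convexity of $\mathcal C_1$ for existence and uniqueness of $\mathbb{P}_1$. Your second paragraph is in fact slightly more careful than the paper, since you explicitly verify that $\mathcal C_t(x_0)$ itself is nonempty, closed, and convex (so $\mathbb{P}_t$ is well-defined), a point the paper leaves implicit.
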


The above is a key result for this work, as it offers a key geometric insight. It implies that projecting onto the time-varying feasible set $\mathcal{C}_t(x_0)$,
for any initial state $x_0$, can be geometrically decomposed: it is equivalent to first mapping the point to the final state via $M_t$, then performing a single projection onto $\mathcal{C}_1$, and finally mapping the result forward. \emph{Crucially, this shows that the geometry of the feasible space is preserved throughout the generation process.} 

\begin{remark}
Notice that, while it is customary in optimization to assume convexity in the constraint spaces to prove theoretical results, empirically, our results will show that these guarantees hold up in highly non-convex cases.
Appendix \ref{app:implementation} details the chance-constraint implementation for both convex and non-convex constraints adopted in our experiments.
\end{remark}
\begin{remark}
In practice, numerical discretization may introduce slight curvature around the ideal straight-line characteristics in \eqref{eq:linearity}. 
Nevertheless, recent theoretical advances show that the ground-truth velocity field can be approximated using Lipschitz neural networks, with approximation error bounded by $(\sqrt{d}+1)\epsilon$ for arbitrary $\epsilon>0$, where $d$ is the dimension of data~\citep{zhouerror}. This guarantee ensures that the discrepancy between the learned and ground-truth velocity fields remains small.
Moreover, although an approximation error exists, \citet{gao2024convergence} show that both estimation and discretization errors are controllable: the velocity estimation error decays at a rate $\tilde{O}((nt^2)^{-1/(d+3)})$, where $n$ is the number of training samples and $\tilde{O}$ hides polylogarithmic factors. 
The discretization error scales as $O(\sqrt{\Upsilon})$, thus smaller step sizes $\Upsilon$ directly reduce error at a square-root rate.
The use of straight-line characteristics has also been adopted in prior works with strong empirical performance~\citep{liu2022flow,liu2023instaflow}, and our experiments further corroborate their effectiveness.
\end{remark}

\section{Experiments}
\label{sec:experiments}
This section evaluates CCFM against state-of-the-art constrained generative models and shows its generality on two tasks in diverse scientific domains: \textbf{Molecular Docking} and \textbf{PDE Solution Generation}. We focus on two key aspects of constrained generation: (i) satisfaction of hard constraints and (ii) sample fidelity. 
Further details on the problem formulation, constraint enforcement methods, dataset, and model specifications are provided in Appendix~\ref{app:implementation}. Sensitivity analysis and additional experimental results are presented in Appendix~\ref{app:additional_results}.

\vspace{-12pt}
\subsection{Experimental setting}
\vspace{-4pt}
\textbf{Molecular Docking.}
The task is to predict ligand poses within protein pockets while respecting geometric and physical constraints. Experiments are performed on the PDBBind docking benchmark~\citep{liu2017forging}, which provides a standard testbed for evaluating protein–ligand docking methods. 
Comparisons are made against \emph{FlexDock}~\citep{corso2025composing}, the current state-of-the-art docking framework, which enforces structural constraints by embedding penalty terms during training and further ensuring validity at inference through rejection sampling.

The evaluation also focuses on: {\bf (1)} \emph{Fidelity}, which is quantified by ligand RMSD (\underline{L-RMSD}) (fractions below 1\AA and 2\AA) and all-atom RMSD (\underline{A-RMSD}) (fractions below 2\AA). {\bf (2)} \emph{Feasibility}, which is assessed by PoseBusters validity (\underline{PB Valid})~\citep{buttenschoen2024posebusters} and its conjunction with RMSD thresholds, measuring the physical plausibility of predicted complexes.  

\textbf{PDE Solution Generation.}
Next, following~\citep{utkarsh2025physics}, we consider a generative task on two representative PDE systems: the 1-D \emph{reaction–diffusion equation}, subject to linear initial conditions (IC) and nonlinear global mass conservation laws (CL), and the 2-D \emph{Navier–Stokes equation}, subject to linear initial conditions (IC) and linear conservation laws (CL). The reaction–diffusion problem is discretized on a $(128 \times 100)$ space–time grid, while the Navier–Stokes problem is posed on a $64 \times 64 \times 50$ grid. Unlike~\citep{cheng2024gradient,huang2024diffusionpde}, which evaluate on only 10 test initial conditions, we adopt 90 unseen (diverse) conditions to yield a challenging and discriminative benchmark.

Comparisons are conducted against several representative baselines:
(a) \emph{Physics-Constrained Flow Matching (PCFM)}\citep{utkarsh2025physics}, which is the current state of the art; it extends ECI sampling~\citep{cheng2024gradient} to nonlinear constraints and reduces to ECI in the linear setting; (b) \emph{DiffusionPDE}~\citep{huang2024diffusionpde}, which enforces physical priors through penalty terms during iterative sampling;
and (c) \emph{Functional Flow Matching (FFM)}~\citep{kerrigan2023functional}, a pretrained unconstrained FM. All methods employ the same pretrained FFM backbone and are evaluated with identical sampling steps, differing only in how constraints are embedded into the sampling process.

Performance is evaluated along two complementary dimensions: 
{\bf (1)} \emph{Feasibility} is quantified by the constraint violation (\underline{CV}), which measures deviations from the imposed boundary, initial, and conservation constraints. 
{\bf (2)} \emph{Fidelity} is evaluated using the pointwise mean squared error of the mean (\underline{MMSE}) and the pointwise mean squared error of the standard deviation (\underline{SMSE}), capturing distributional alignment in terms of first- and second-order statistics. 

\begin{table}[t]
\caption{PDBBind docking results. 
\#S denotes the number of samples generated per complex, with the best pose selected for evaluation; \#Stp is the number of inference steps. 
L-RMSD/A-RMSD report fractions of complexes below the given thresholds, PB Valid is the proportion passing PoseBusters checks, with +L and +A further requiring L-RMSD$<2$\AA{} or A-RMSD$<2$\AA{}, respectively.}
\centering
\sisetup{detect-weight=true}
\resizebox{0.95\linewidth}{!}{%
\begin{tabular}{ccc
  *{2}{c} 
  c       
  *{3}{c} 
  c       
}
\toprule
\multirow[c]{2}{*}{\textbf{\#S}} & 
\multirow[c]{2}{*}{\textbf{\#Stp}} & 
\multirow[c]{2}{*}{\textbf{Methods}} & 
\multicolumn{2}{c}{\textbf{L-RMSD (\%)}} &
\multicolumn{1}{c}{\textbf{A-RMSD (\%)}} &
\multicolumn{3}{c}{\textbf{PB Valid (\%)}} &
\textbf{Running} \\ 
% --- ---
\cmidrule(lr){4-5}\cmidrule(lr){6-6}\cmidrule(lr){7-9}
% --- ---
& 
& 
& 
\textbf{$< 1 \text{\AA}$} & 
\textbf{$< 2\,\text{\AA}$} & 
\textbf{$< 2\,\text{\AA}$} &
\textbf{All} & 
\textbf{\textbf{+L $<2$\AA}} & 
\textbf{\textbf{+A $<2$\AA}} & 
\textbf{Time (s)} \\% --- ---
\midrule
% --- ---
\multirow[c]{2}{*}{1} & \multirow[c]{2}{*}{2} & CCFM
& \shad {\bfseries 8.19} & \shad {\bfseries 32.20} & \shad {\bfseries 74.01} & \shad {\bfseries 47.74} & \shad {\bfseries 22.03} & \shad {\bfseries 35.88}
& \shad {0.64} \\
& & FlexDock
& 7.06 & 31.36 & 73.73 & 20.34 & 9.32 & 14.97 & {0.52} \\

\addlinespace
\multirow[c]{2}{*}{10} & \multirow[c]{2}{*}{2} & CCFM
& \shad {\bfseries 14.69} & \shad 37.01 & \shad {\bfseries 74.86} & \shad {\bfseries 67.51} & \shad {\bfseries 31.36} & \shad {\bfseries 51.98}
& \shad {4.34} \\
& & FlexDock
& 12.99 & {\bfseries 38.14} & 73.16 & 43.50 & 21.19 & 33.05 & {4.20} \\

\addlinespace
\multirow[c]{2}{*}{20} & \multirow[c]{2}{*}{2} & CCFM
& \shad 12.43 & \shad {\bfseries 39.27} & \shad {\bfseries 74.01} & \shad {\bfseries 67.80} & \shad {\bfseries 33.62} & \shad {\bfseries 51.41}
& \shad {8.51} \\
& & FlexDock
& {\bfseries 12.71} & {\bfseries 39.27} & 72.88 & 44.63 & 23.16 & 33.05 & {7.82} \\

\addlinespace
\multirow[c]{2}{*}{1} & \multirow[c]{2}{*}{10} & CCFM
& \shad {\bfseries 7.91} & \shad {\bfseries 32.20} & \shad 72.32 & \shad {\bfseries 78.81} & \shad {\bfseries 30.79} & \shad {\bfseries 58.19}
& \shad {0.79} \\
& & FlexDock
& 7.63 & 31.64 & {\bfseries 72.88} & 57.91 & 24.29 & 40.68 & {0.57} \\

\addlinespace
\multirow[c]{2}{*}{1} & \multirow[c]{2}{*}{20} & CCFM
& \shad {\bfseries 8.19} & \shad {\bfseries 31.92} & \shad 72.32 & \shad {\bfseries 76.84} & \shad {\bfseries 29.94} & \shad {\bfseries 57.06}
& \shad {0.88}  \\
& & FlexDock
& 7.63 & 31.07 & {\bfseries 72.60} & 60.73 & 25.99 & 43.50 & {0.62} \\

\addlinespace
\multirow[c]{2}{*}{10} & \multirow[c]{2}{*}{10} & CCFM
& \shad {\bfseries 15.54} & \shad 37.85 & \shad 72.03 & \shad {\bfseries 85.03} & \shad {\bfseries 35.88} & \shad {\bfseries 62.15}
& \shad {4.57} \\
& & FlexDock
& {\bfseries 15.54} & {\bfseries 38.14} & {\bfseries 72.60} & 80.51 & 35.31 & 57.06 & {3.99} \\

\bottomrule
\end{tabular}%
}
\label{tab:PDBBind docking results}
\end{table}

\subsection{Performance Evaluation for Molecular Docking}
Table~\ref{tab:PDBBind docking results} reports docking results on the PDBBind docking benchmark under varying numbers of generated samples and inference steps. Notice how, across all configurations, \emph{CCFM} consistently outperforms the state-of-the-art baseline \emph{FlexDock}, \emph{with particularly large gains when the computational budget is limited}. Remarkably, with only one sample and two inference steps, CCFM achieves comparable L-RMSD and A-RMSD values, confirming that high-fidelity pose generation is preserved. At the same time, CCFM raises PB Valid from $20.34\%$ (FlexDock) to $47.74\%$, representing a \textbf{2.3$\times$} improvement over FlexDock. With this setting, CCFM even surpasses FlexDock with twenty samples, yielding a \textbf{20$\times$} improvement in sample–step efficiency (measured as samples~$\times$~steps) and a \textbf{12$\times$} gain in runtime (7.82s vs.~0.64s). On stricter metrics that jointly account for fidelity and feasibility (PB Valid+L and PB Valid+A), CCFM also achieves more than $2\times$ higher pass rates, again matching the performance of FlexDock at 20 samples while using far fewer samples and computational time. Increasing the inference steps to 10 or 20 further amplifies this advantage: at 10 steps, CCFM improves PB Valid by $20.9\%$, PB Valid+L by $6.5\%$, and PB Valid+A by $17.5\%$, while simultaneously yielding higher fidelity and feasibility than the baseline. Comparable trends are observed when both the sample count and inference steps are set to ten. Additional comparisons across more metrics and experimental settings are provided in Appendix~\ref{app:molecular_docking}.

Figure~\ref{fig:protein_sample} compares the binding structures generated by our method (CCFM) and the FlexDock on the same protein–ligand pair. In molecular docking, protein–ligand atoms must stay at least the sum of their van der Waals radii minus 0.75 \AA~\cite{corso2025composing}. In this case, the van der Waals radii of the carbon (1.70 \AA) and oxygen (1.52 \AA) atoms yield a lower bound of 2.47 \AA. As shown in Figure~\ref{fig:protein_sample_ccfm}, CCFM produces a feasible structure with the distance of 3.8 \AA \, that satisfies the constraint, whereas FlexDock in Figure~\ref{fig:protein_sample_flexdock} generates an infeasible structure with a distance of 2.4 \AA.

\begin{figure}[t]
  \centering
  \begin{subfigure}[t]{0.475\textwidth}
    \centering
    \includegraphics[width=\textwidth]{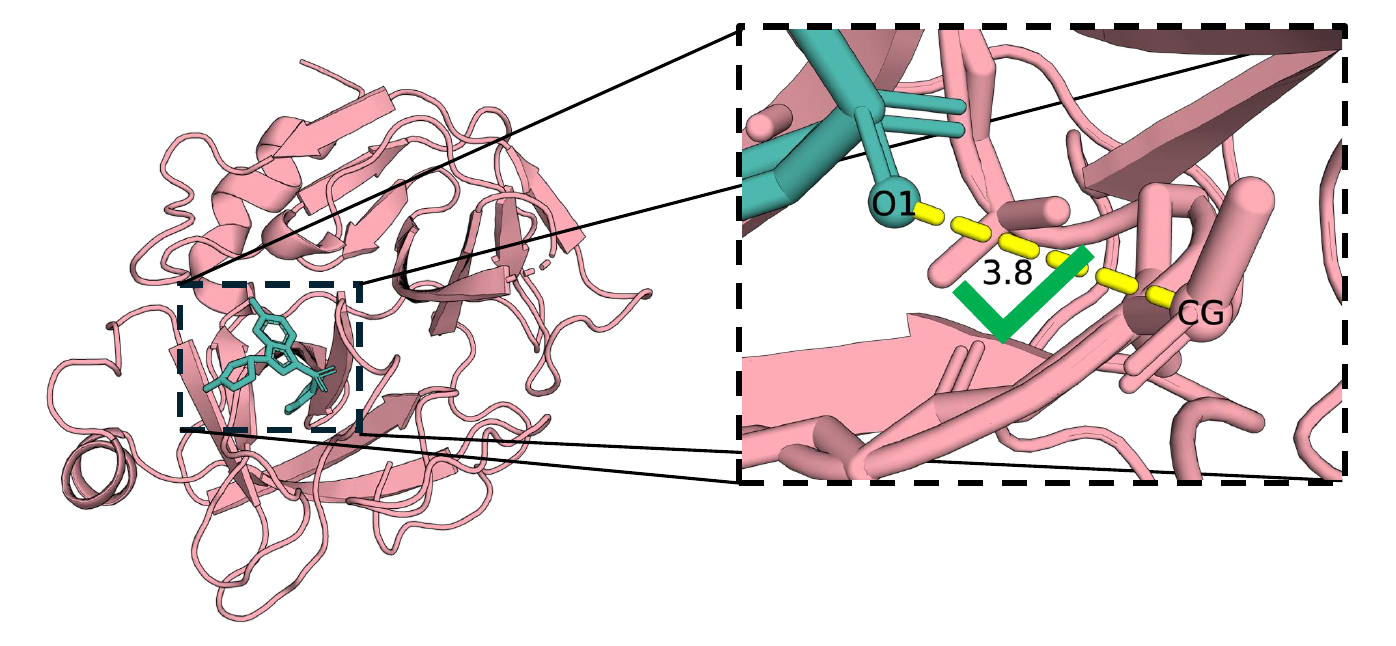}
    \caption{CCFM (ours)}
    \label{fig:protein_sample_ccfm}
  \end{subfigure}
  \hfill
  \begin{subfigure}[t]{0.475\textwidth}
    \centering
    \includegraphics[width=\textwidth]{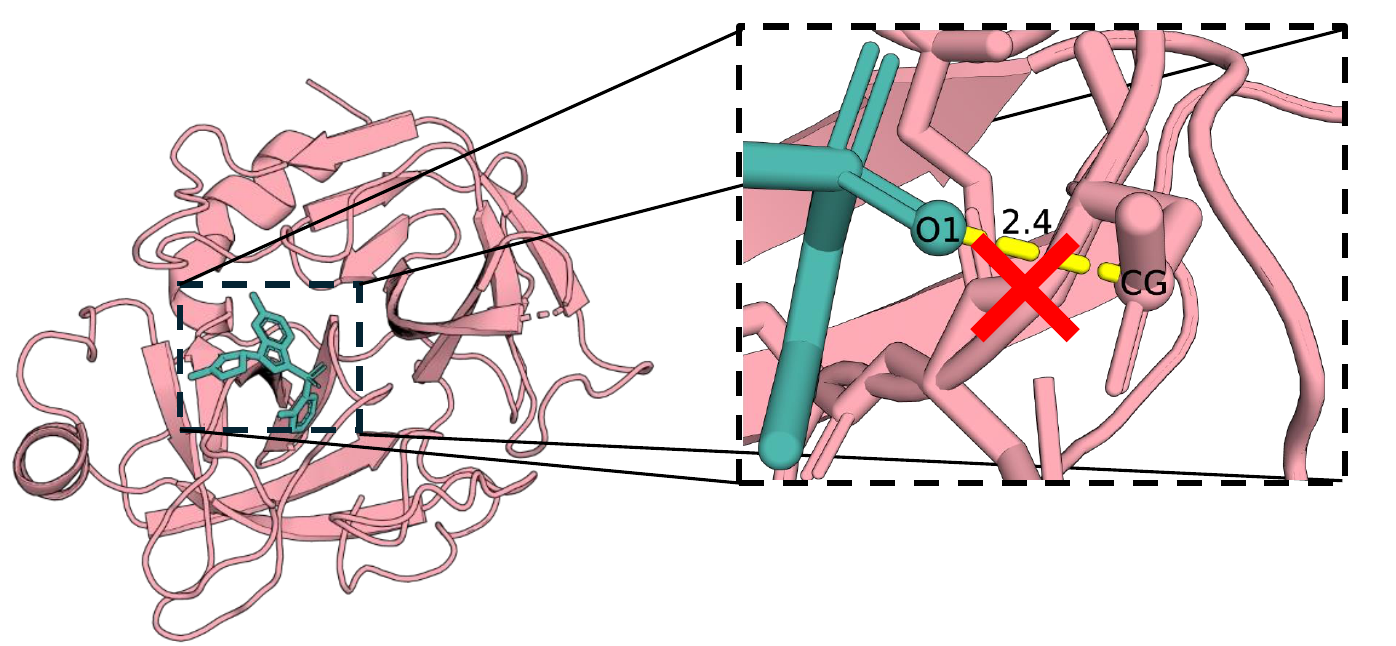}
    \caption{FlexDock}
    \label{fig:protein_sample_flexdock}
  \end{subfigure}
  \caption{\small Structures generated by CCFM (ours) and FlexDock. CCFM enforces the ligand–protein distance constraint, while FlexDock violates it.}
  \label{fig:protein_sample}
\end{figure}

\subsection{Performance Evaluation for PDE}\label{sec:pde_results}
\begin{table}[t]
  \centering
  \resizebox{0.95\linewidth}{!}{%
  \begin{tabular}{
    ll
    c 
    c 
    c 
    c 
  }
    \toprule
    \textbf{Problem} & \textbf{Metric} & \textbf{CCFM (ours)} & \textbf{PCFM} & \textbf{DiffusionPDE} & \textbf{FFM} \\
    \midrule
    \multirow{4}{*}{Reaction--Diffusion}
      & MMSE $\times 10^{-2}$ ($\downarrow$)   & \multicolumn{1}{c}{\bfseries 3.3} & \multicolumn{1}{c}{3.6} & \multicolumn{1}{c}{5.2} & \multicolumn{1}{c}{5.3} \\
      & SMSE $\times 10^{-2}$ ($\downarrow$)   & \multicolumn{1}{c}{\bfseries 2.7} & \multicolumn{1}{c}{2.9} & \multicolumn{1}{c}{3.8} & \multicolumn{1}{c}{3.8} \\
      & CV (IC)  $\times 10^{-2}$ ($\downarrow$) & \multicolumn{1}{c}{\bfseries 0}             & \multicolumn{1}{c}{\bfseries 0} & \multicolumn{1}{c}{8.0} & \multicolumn{1}{c}{8.2} \\
      & CV (CL)  $\times 10^{-15}$ ($\downarrow$) & \multicolumn{1}{c}{\bfseries 9.5}             & \multicolumn{1}{c}{\bfseries 9.5} & \multicolumn{1}{c}{3.9$\times 10^{10}$} & \multicolumn{1}{c}{3.9$\times 10^{10}$} \\
    \midrule
    \multirow{5}{*}{Navier--Stokes}
      & MMSE $\times 10^{-1}$ ($\downarrow$)        & {\bfseries 0.8} & 1.7 & 1.4 & 1.5 \\
      & SMSE $\times 10^{-1}$ ($\downarrow$)        & {\bfseries 0.8} & 1.6 & 8.8 & 9.0 \\
      & CV (IC) ($\downarrow$)                       & {\bfseries 0} & {\bfseries 0} & 0.12 & 0.13 \\
      & CV (CL)  $\times 10^{-15}$ ($\downarrow$)    & \multicolumn{1}{c}{3.9} & \multicolumn{1}{c}{\bfseries 3.4} & \multicolumn{1}{c}{1.2 $\times 10^{10}$} & \multicolumn{1}{c}{1.2 $\times 10^{9}$} \\
    \bottomrule
  \end{tabular}%
  }
  \caption{Comparison of model performance in solution generation tasks for 1D Reaction--Diffusion equation and 2D
  Navier--Stokes equation across metrics.
  A lower value indicates better performance. Bold indicates the best (smallest) value in
  each row.}
 \label{tab:ccfm-unified}
\end{table}

We next focus on generating PDE solutions. 
Table~\ref{tab:ccfm-unified} summarizes the performance of the different models tested. For the Reaction--Diffusion, the unconstrained \emph{FFM} baseline yields relatively poor accuracy (MMSE $5.3\times 10^{-2}$), while DiffusionPDE improves marginally (to $5.2\times 10^{-2}$) by incorporating penalty terms during sampling that steer samples towards the prescribed initial condition and enforce the conservation law. However, this approach fails to handle non-linear constraints and does not guarantee feasibility.
\emph{PCFM} achieves a substantial accuracy gain (MMSE reduced to $3.6\times 10^{-2}$) while simultaneously ensuring minimal constraint violations. Nevertheless, this procedure incurs a computational burden (see Appendix~\ref{app:additional_results}) due to the projections at every sampling step, resulting in a runtime that is approximately $28\times$ higher than the unconstrained FFM baseline 
\emph{CCFM} further advances performance (MMSE $3.3\times 10^{-2}$, SMSE $2.7\times 10^{-2}$) while providing \textbf{zero constraint violations}.
Moreover, \emph{CCFM} achieves about a 30\% faster runtime than PCFM (see Appendix~\ref{app:additional_results}). Figure~\ref{fig:rd_mse} further illustrates its MMSE across physical time, where the prevalence of darker regions indicates improved temporal stability.

On the Navier--Stokes task, the increase in dimension and the setting of the distribution test exacerbate the limitations of the baseline methods. 
\emph{PCFM} yields higher MMSE and SMSE than DiffusionPDE despite preserving feasibility. Its reliance on one-step predictions from intermediate states leads to inaccuracies in high dimensions, and projection–interpolation steps amplify these errors, reducing accuracy.
In contrast, \emph{CCFM} avoids this failure mode by introducing only minimal corrective adjustments toward the feasible region at each step, without dependence on unstable one-step extrapolations. Consequently, CCFM achieves the best overall performance (MMSE $8.4\times 10^{-2}$, SMSE $8.3\times 10^{-2}$), while simultaneously ensuring feasibility. Figure~\ref{fig:ns_mse} visualizes these results, showing that \emph{CCFM} consistently suppresses error accumulation and aliasing under nonlinear dynamics.

\begin{figure}[t]
  \centering
  \begin{subfigure}[t]{0.24\textwidth}
    \centering
    \includegraphics[width=\textwidth]{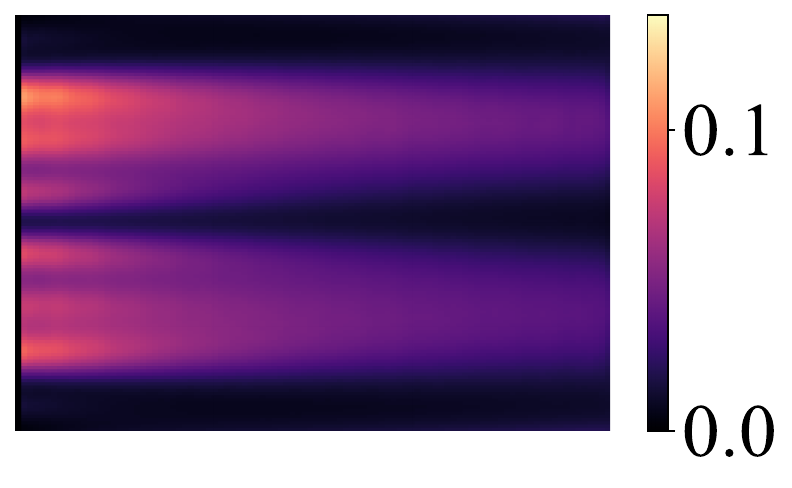}
    \caption{CCFM (ours)}
  \end{subfigure}
  \hfill
  \begin{subfigure}[t]{0.24\textwidth}
    \centering
    \includegraphics[width=\textwidth]{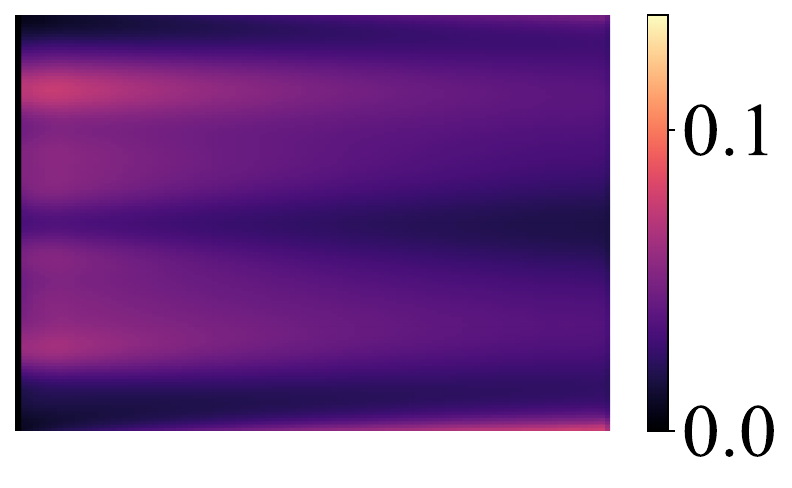}
    \caption{PCFM}
  \end{subfigure}
  \hfill
  \begin{subfigure}[t]{0.24\textwidth}
    \centering
    \includegraphics[width=\textwidth]{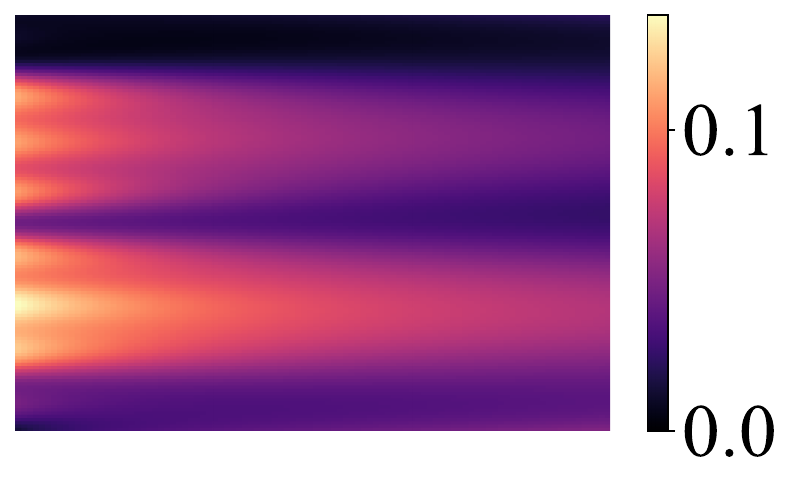}
    \caption{DPDE}
  \end{subfigure}
  \hfill
  \begin{subfigure}[t]{0.24\textwidth}
    \centering
    \includegraphics[width=\textwidth]{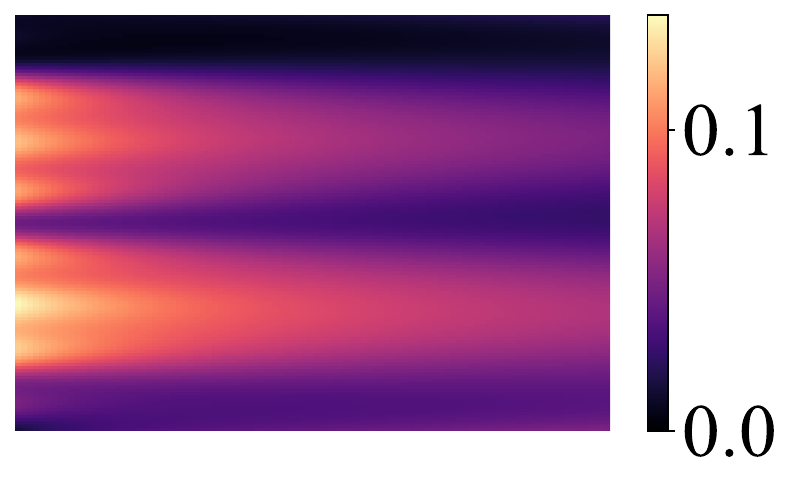}
    \caption{FFM}
  \end{subfigure}
  \caption{\small MMSE (the darker the better) of Reaction--Diffusion solutions over physical time, with the horizontal axis representing time evolution and the vertical axis indicating the state at each time step.}
  \label{fig:rd_mse}
\end{figure}

\begin{figure}[t]
  \centering
  \begin{subfigure}[t]{0.24\textwidth}
    \centering
    \includegraphics[width=\textwidth]{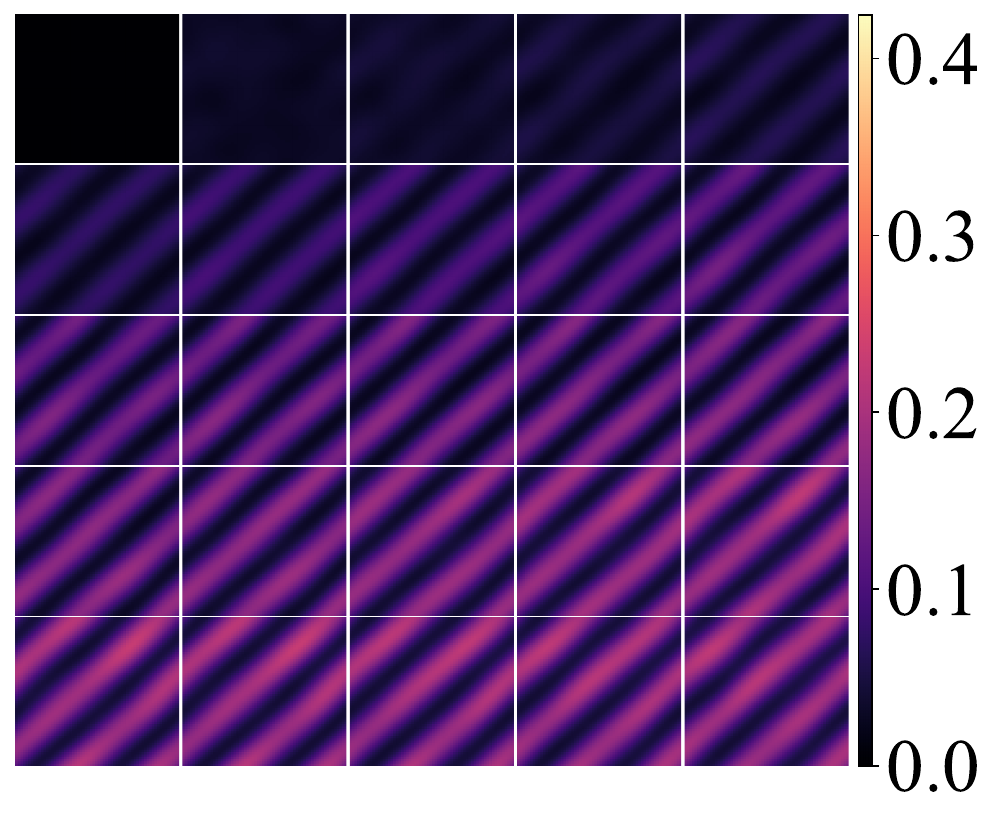}
    \caption{CCFM (ours)}
  \end{subfigure}
  \hfill
  \begin{subfigure}[t]{0.24\textwidth}
    \centering
    \includegraphics[width=\textwidth]{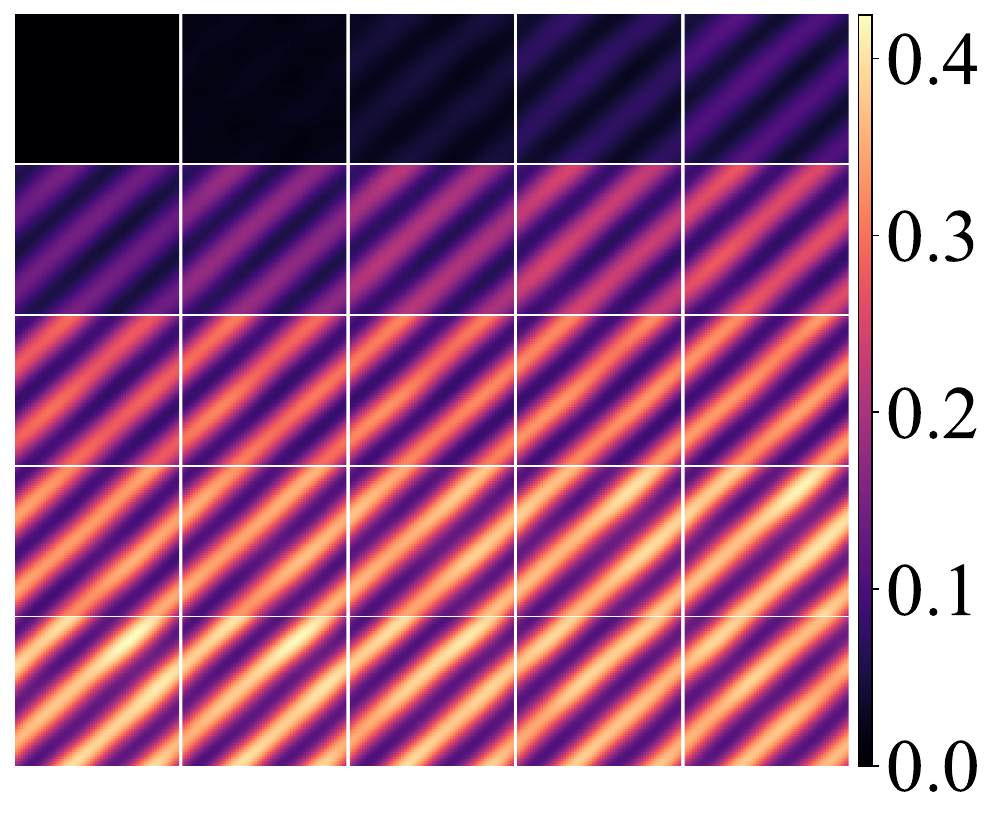}
    \caption{PCFM}
  \end{subfigure}
  \hfill
  \begin{subfigure}[t]{0.24\textwidth}
    \centering
    \includegraphics[width=\textwidth]{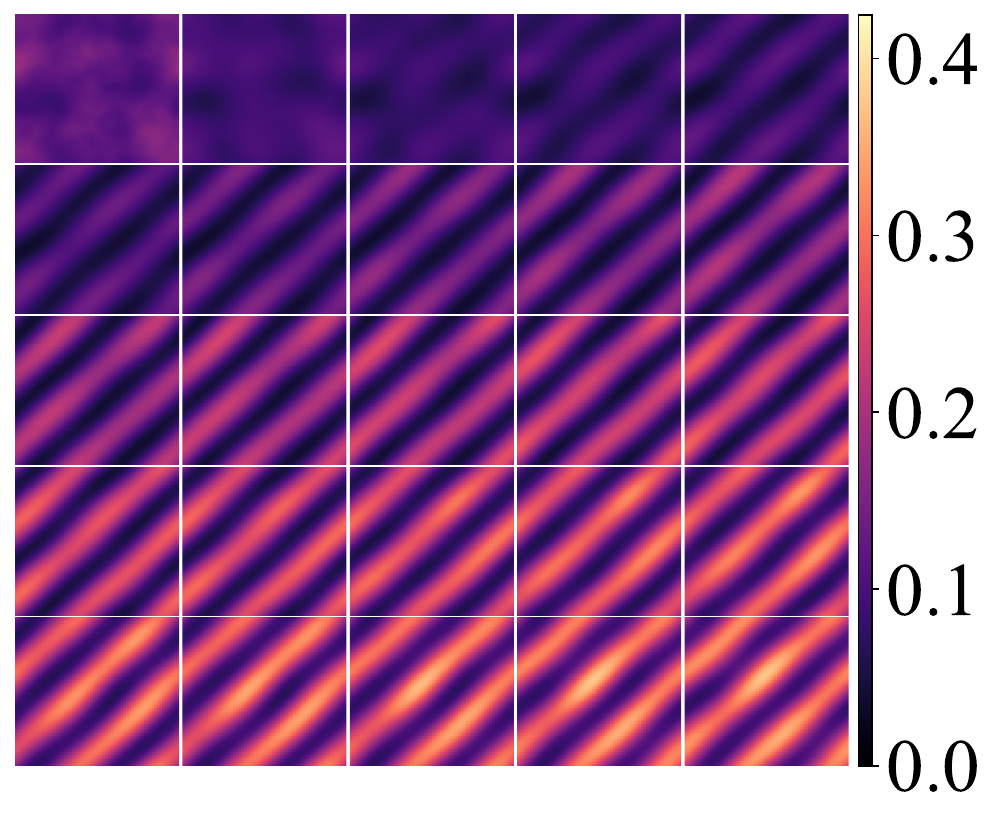}
    \caption{DPDE}
  \end{subfigure}
  \hfill
  \begin{subfigure}[t]{0.24\textwidth}
    \centering
    \includegraphics[width=\textwidth]{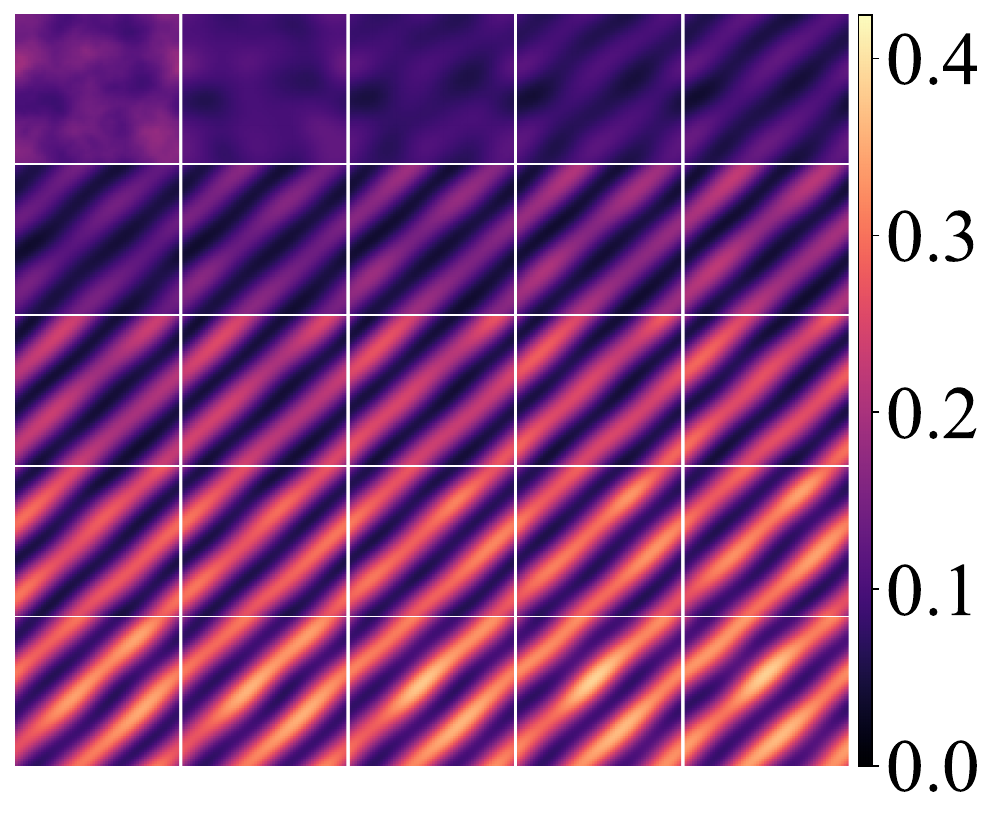}
    \caption{FFM}
  \end{subfigure}
  \caption{\small MMSE (the darker the better) of Navier--Stokes solutions over physical time, downsampled from 50 to 25 frames for clarity. Each frame corresponds to the system state at a given physical time.}
  \label{fig:ns_mse}
\end{figure}

\section{Conclusion}
This paper introduced \emph{Chance-constrained Flow Matching} (CCFM), a training-free framework for constraint-aware generative modeling. Its key idea is a reformulation of the repeated projection  \emph{during flow-matching sampling} through a stochastic-optimization lens. This enables 
the effective enforcement of hard constraints without introducing multi-stage procedures or additional train-time complexity. Our analysis established feasibility guarantees and, under the flow-linearity assumption used in our sampler, showed equivalence to projecting onto the clean feasible set, thus providing guarantees on constraint satisfaction. 
The empirical evaluation against the current state-of-the-art on molecular docking and PDE generation showed that CCFM attains strong gains, with especially large improvements under tight budgets (e.g., single-sample, few-step regimes) and markedly higher sample–step efficiency.

\textbf{Limitations and future work.} Despite the strong potential of CCFM, several directions remain open. In domains such as robotics and other scientific simulation applications, constraints may not admit closed forms. This aspect is out of the scope of the work, and the challenge of embedding non-differentiable black-box simulators (e.g., collision checkers, invariants) remains an interesting direction for future work.

\section*{Acknowledgment}
The material is based upon work supported by National Science Foundations (NSF) awards 2533631, 2401285, 2334448, and 2334936, and Defense Advanced Research Projects Agency (DARPA) under Contract No.~\#HR0011252E005. 
The research was also supported by the U.S. Department of Energy, Advanced Scientific Computing Research, through the SciDAC-RAPIDS2 institute, and Competitive Portfolios project on Energy Efficient Computing: A Holistic Methodology, under contract DE-AC02-06CH11357.
The authors acknowledge Research Computing at the University of Virginia, the Argonne Leadership Computing Facility, which is a DOE Office of Science User Facility supported under Contract DE-AC02-06CH11357, Laboratory Computing Resource Center (LCRC) at the Argonne National Laboratory for providing computational resources that have contributed to the results reported within this paper. 
Any opinions, findings, conclusions, or recommendations expressed in this material are those of the authors and do not necessarily reflect the views of NSF, DARPA, or DOE.
\newpage

%Bibliography
\bibliographystyle{plainnat}  
\bibliography{ref}  

\newpage
\appendix

\section{Missing Proof}
\label{app:missing_proof}
% This section provides the theoretical results.

\subsection{Proof of Proposition~\ref{prop:tractable_reformulation}}
\label{app:proof_prop_1}

\begin{proof}[Proof of Proposition~\ref{prop:tractable_reformulation}]
Throughout, let $z_{1-\alpha}:=\Phi^{-1}(1-\alpha)$ and $z_{1-\alpha/2}:=\Phi^{-1}(1-\alpha/2)$. Write $\sigma_a:=\sigma\|a\|_2$ and $\mu:=t^{-1}a^\top x_t$. Note that $a^\top\xi\sim\mathcal N(0,\sigma_a^2)$.

\paragraph{Linear constraints.}
Consider $g(x)=a^\top x-b$. The chance constraint
\[
\mathbb{P}_\xi\big(a^\top(t^{-1}x_t-\xi)-b\le 0\big)\ \ge\ 1-\alpha
\]
is equivalent to
\[
\mathbb{P}\big(a^\top\xi\ \ge\ \mu-b\big)\ \ge\ 1-\alpha.
\]
Since $a^\top\xi\sim\mathcal N(0,\sigma_a^2)$, the quantile characterization gives
\[
\mu - b \leq -\sigma_a z_{1-\alpha}.
\]
and
\[
a^\top x_t\ \le\ t\,b\ -\ t\,\sigma_a\,z_{1-\alpha}
\ =\ t\,b\ -\ t\,\sigma\,\|a\|_2\,z_{1-\alpha}.
\]

\paragraph{Quadratic constraints.}
Consider $g(x)=(a^\top x)^2-b$ with $b>0$. The chance constraint
\[
\mathbb{P}_\xi\big((a^\top(t^{-1}x_t-\xi))^2\le b\big)\ \ge\ 1-\alpha
\]
can be written as $\mathbb{P}(|S|\le\sqrt b)\ge 1-\alpha$, where $S:=a^\top(t^{-1}x_t-\xi) = \mu - a^\top\xi\sim\mathcal N(\mu,\sigma_a^2)$.
By the union bound (Boole's inequality)~\cite{vershynin2018high},
\[
\mathbb{P}(|S|\le\sqrt b)\ \ge\ 1-\alpha
\quad\Longleftarrow\quad
\begin{cases}
\mathbb{P}(S\le \sqrt b)\ \ge\ 1-\tfrac{\alpha}{2},\\[2pt]
\mathbb{P}(S\ge -\sqrt b)\ \ge\ 1-\tfrac{\alpha}{2}.
\end{cases}
\]
Using the Gaussian quantile characterization, these two one-sided constraints are respectively equivalent to
\[
\sqrt b\ \ge\ \mu+\sigma_a\,z_{1-\alpha/2}
\quad\text{and}\quad
-\sqrt b\ \le\ \mu-\sigma_a\,z_{1-\alpha/2}.
\]
Combining them yields
\[
\sqrt b\ \ge\ \max\{\mu,-\mu\}+\sigma_a\,z_{1-\alpha/2}
\quad\Longleftrightarrow\quad
|\mu|\ \le\ \sqrt b - \sigma_a\,z_{1-\alpha/2}.
\]
Equivalently, in terms of $x_t$,
\[
|a^\top x_t|\ \le\ t\Big(\sqrt b - \sigma\,\|a\|_2\,z_{1-\alpha/2}\Big).
\]
For non-vacuous feasibility, it is necessary that $\sqrt b\ge\sigma_a\,z_{1-\alpha/2}$.
Under these conditions, the original two-sided chance constraint is conservatively enforced with risk split $\alpha/2$ on each tail; the bound is tight when $\mu=0$ (i.e., $a^\top x_t=0$), since then
\[
\mathbb{P}\big(|S|\le\sqrt b\big)=\mathbb{P}\big(|\mathcal N(0,\sigma_a^2)|\le\sqrt b\big)
=1-2\big(1-\Phi(\tfrac{\sqrt b}{\sigma_a})\big),
\]
and the condition $\sqrt b\ge\sigma_a\,z_{1-\alpha/2}$ is necessary and sufficient for $\mathbb{P}(|S|\le\sqrt b)\ge 1-\alpha$.

Finally, for anisotropic noise $\xi\sim\mathcal N(0,\Sigma)$, replace $\sigma\|a\|_2$ by $\sqrt{a^\top\Sigma a}$ throughout; the above derivations remain unchanged since $a^\top\xi\sim\mathcal N(0,a^\top\Sigma a)$.
\end{proof}

\subsection{Proof of Corollary~\ref{cor:cc_feas}}
\begin{proof}
From \eqref{eq:cc}, we have $\xi = (1-t)t^{-1}x_0$. At \(t=1\), it follows that \(\xi=0\). Hence, for any \(x'\),
\[
\mathbb{P}_{\xi}\!\left(g(t^{-1}x_t - \xi)\le 0\right)
=
\mathbb{P}_{\xi}\!\left(g(t^{-1}x_t)\le 0\right)
=
\begin{cases}
1, & g(t^{-1}x_t)\le 0,\\
0, & \text{otherwise}.
\end{cases}
\]
Therefore, at \(t=1\), the chance-constrained feasible set coincides with the deterministic set
\(\mathcal{F}_{\mathrm{det}}=\{x' : g(t^{-1}x_t)\le 0\}\). Consequently, \(\mathcal{P}_{\mathrm{cc}}(x_1,1-\alpha)\) reduces to the Euclidean projection onto \(\mathcal{F}_{\mathrm{det}}\), which completes the proof.
\end{proof}

\subsection{Proof of Theorem~\ref{thm:feasibility_propagation}}

\begin{proof}
Since $t^{-1}x_t=t^{-1}\big((1-t)x_0+t x_1\big)=x_1+\frac{1-t}{t}x_0$ and $\xi=\frac{1-t}{t}x_0$, we have the pathwise identity
\[
t^{-1}x_t-\xi \;=\; x_1,
\]
which holds for every realization of $x_0$. Applying $g$ to both sides yields $g(t^{-1}x_t-\xi)=g(x_1)$ almost surely, hence the probability is $1$ or $0$ depending on whether $g(x_1)\le 0$.
\end{proof}

\subsection{Proof of Theorem~\ref{thm:projection_commutation}}

\begin{proof}
For any \(x_t\in\mathbb{R}^d\),
\[
x_t\in\mathcal C_t(x_0)\ \Longleftrightarrow\ g\!\big(M_t(x_t)\big)\le 0\ \Longleftrightarrow\ M_t(x_t)\in\mathcal C_1.
\]
Thus \(x_t\in\mathcal C_t(x_0)\) iff there exists \(x_1\in\mathcal C_1\) with \(x_t=(1-t)x_0+t x_1\), which yields \(\mathcal C_t(x_0)=(1-t)x_0+t\,\mathcal C_1\).

For the projection identity, substitute \(x_t=(1-t)x_0+t x_1\) with \(x_1\in\mathcal C_1\):
\[
\min_{x_t\in\mathcal C_t(x_0)}\|x_t-x\|_2^2
=\min_{x_1\in\mathcal C_1}\|(1-t)x_0+t x_1-x\|_2^2
=\min_{x_1\in\mathcal C_1} t^2\|x_1-M_t(x)\|_2^2.
\]
Since \(\mathcal C_1\) is nonempty, closed, and convex, the Euclidean projection \(\mathbb{P}_1(M_t(x))\) exists and is unique. Let $x'^*=\mathbb{P}_1(M_t(x))$. The unique minimizer in the original variable is
\[
\mathbb{P}_t(x)=(1-t)x_0+t x'^*=(1-t)x_0+t\,\mathbb{P}_1\!\big(M_t(x)\big),
\]
as claimed.
\end{proof}

\section{Implementation Details}
\label{app:implementation}

\subsection{Molecular Docking Formulation}
\label{app:docking}

Given a receptor with atom coordinates \(\mathcal{Y} \in \mathbb{R}^{N_{\mathrm{rec}}\times 3}\) and a ligand template conformer \(\mathcal{X}_0 \in \mathbb{R}^{N_{\mathrm{lig}}\times 3}\), we parameterize a candidate pose by a rigid-body motion \((R,t)\in \mathrm{SO}(3)\times\mathbb{R}^3\) together with a vector of rotatable-bond torsions \(\phi \in \mathbb{T}^{K}\). The posed ligand is expressed as
\[
\mathcal{X}(R,t,\phi) \;=\; R\,T_{\mathrm{lig}}(\mathcal{X}_0;\phi) \;+\; \mathbf{1}\,\Delta^\top,
\]
where \(T_{\mathrm{lig}}(\cdot;\phi)\) applies torsional rotations about designated bond axes. The goal of molecular docking is to recover ligand poses that faithfully reproduce experimentally observed binding modes while maintaining chemical validity and physical plausibility. This work considers four classes of constraints:

\paragraph{Bond Lengths.}
For each ligand bond \((i,j) \in \mathcal{E}_{\mathrm{bond}}\),
\[
(1-\delta_b)\,\ell_{ij}^{\mathrm{low}} \;\le\; \|\mathcal{X}_i - \mathcal{X}_j\| \;\le\; (1+\delta_b)\,\ell_{ij}^{\mathrm{up}},
\]
where \(\mathcal{E}_{\mathrm{bond}}\) is the set of ligand covalent bonds, $\delta_b =0.25$ denotes the tolerance, \(\ell_{ij}^{\mathrm{low}}, \ell_{ij}^{\mathrm{up}}\) are chemically derived lower/upper bounds (in \(\text{\AA}\)) for bond \((i,j)\), \(\mathcal{X}_i\in\mathbb{R}^3\) and \(\mathcal{X}_j\in\mathbb{R}^3\) denote the Cartesian coordinates (in \(\text{\AA}\)) of the \(i\)-th and \(j\)-th ligand atom, respectively. 

\paragraph{Bond Angles.}
For every bond-angle triplet \((i\!-\!j\!-\!k) \in \mathcal{E}_{\angle}\),
\[
(1-\delta_\alpha)\,\alpha_{ijk}^{\mathrm{low}} \;\le\; \angle(i\!-\!j\!-\!k) \;\le\; (1+\delta_\alpha)\,\alpha_{ijk}^{\mathrm{up}},
\]
where \(\mathcal{E}_{\angle}\) denotes the set of bond-angle triplets, \(\angle(i\!-\!j\!-\!k)\) is the geometric angle at vertex \(j\), \(\alpha_{ijk}^{\mathrm{low}}, \alpha_{ijk}^{\mathrm{up}}\) are lower/upper bounds (in radians) for angle \((i\!-\!j\!-\!k)\), $\delta_\alpha =0.25$ denotes the bond angles tolerance.

\paragraph{Internal Steric Clash (Intra-ligand).}
For all ligand atom pairs \((i,j)\) not directly bonded or forming an angle, i.e., \((i,j)\notin \mathcal{E}_{\mathrm{bond}}\cup\mathcal{E}_{\angle}\),
\[
\|\mathcal{X}_i - \mathcal{X}_j\| \;\ge\; (1-\delta_{\mathrm{steric}})\, d_{ij}^{\mathrm{low}},
\]
where \(d_{ij}^{\mathrm{low}}\) is the lower bound (in \(\text{\AA}\)) on permissible separation for nonbonded, nonangle ligand pairs, \(\delta_{\mathrm{steric}} = 0.2\) is the internal steric clash tolerance.

\paragraph{Minimum Protein–Ligand Contact.}
Defining the closest cross-molecule distance as
\[
d_{\min}(\mathcal{X},\mathcal{Y}) \;=\; \min_{i\in\mathrm{lig},\, j\in\mathrm{rec}} \|\mathcal{X}_i - \mathcal{Y}_j\|,
\]
We require that the van der Waals overlap does not exceed a tolerance~$\delta_\text{pl}=0.75$\AA:
\[
\forall (i,j): \; d_{lp} \;\ge\; (\rho_i + \rho_j) - \delta_\text{pl},
\quad \text{equivalently } \;
\max_{i,j}\,\max\!\big(0,\, \rho_i + \rho_j - d_{ij}\big) \;<\; \delta_\text{pl},
\]
where $d_{ij}$ is the distance between ligand atom $i$ and protein atom $j$, and $\rho_i, \rho_j$ denote their van der Waals radii.  
Second, to ensure the ligand is not placed too far from the binding pocket, we require the existence of at least one contact:

\subsection{PDE Problem Formulation}
\label{app:pde}
\subsubsection{Reaction--Diffusion Equation}
We consider the one-dimensional nonlinear reaction--diffusion equation
\[
\frac{\partial v}{\partial t} \;=\; \nu \frac{\partial^2 v}{\partial s^2} + \rho\, v (1 - v),
\]
where $v(s, t)$ denotes the state of the species at position $s$ and time $t$, $\nu > 0$ is the diffusion coefficient, and $\rho > 0$ is the reaction rate constant.

The solution $v(s,t)$ must satisfy the following two constraints:

\paragraph{Initial Condition.}
\[
- \delta_\text{pde} \;\leq\; v(s,0) - v_{\mathrm{IC}}(s) \;\leq\; \delta_\text{pde},
\]
where $v_{\mathrm{IC}}(s)$ denotes the initial state, $\delta_\text{pde}=1 \times 10^{-13}$ is the tolerance. This condition ensures that the solution starts from a specific initial condition $v_{\mathrm{IC}}(s)$ at time $t = 0$.

\paragraph{Nonlinear Mass Conservation.}
\[
- \delta_\text{pde} \;\leq\; m(t) - \left( m(0) \;+\; \int_0^t \rho  \nu (1-\nu) dt \;+\; \int_0^t (g_L - g_R)\, dt \right) \;\leq\; \delta_\text{pde},
\]
where $m(t) = \int v(s,t)\, ds$ is the total mass at time $t$, and $g_L$, $g_R$ are the Neumann boundary fluxes at the left and right boundaries, respectively.

These constraints ensure consistency with the prescribed initial condition and the correct nonlinear evolution of the total mass due to reaction and boundary effects.

\subsubsection{Navier–-Stokes Equation}
The dynamics of incompressible fluids are governed by the Navier--Stokes equations:
\[
\frac{\partial \mathbf{v}}{\partial t} + (\mathbf{v}\cdot\nabla)\mathbf{v}
= -\nabla \pi + \nu \nabla^2 \mathbf{v} + \mathbf{f}, 
\qquad \nabla \cdot \mathbf{v} = 0,
\]
where $\mathbf{v}(s,t)$ is the velocity field, $\pi(s,t)$ is the pressure field, $\nu > 0$ is the kinematic viscosity, $\mathbf{f}(s,t)$ is an external force (e.g., gravity).

The solution must satisfy the following two constraints:

\paragraph{Initial Condition.}
\[
- \delta_\text{pde} \;\leq\;  \mathbf{v}(s,0) - \mathbf{v}_{\mathrm{IC}}(s) \;\leq\; \delta_\text{pde},
\]
where $\mathbf{v}_{\mathrm{IC}}(s)$ is the initial state. This condition ensures the velocity field starts from a prescribed initial condition.

\paragraph{Global Mass Conservation.}
\[
- \delta_\text{pde} \;\leq\;  \iint \mathbf{v}(s,t)\,ds \;-\; \iint \mathbf{v}(s,0)\,ds, \;\leq\; \delta_\text{pde},
\]
ensuring that the total mass of the incompressible fluid remains constant over time.

\subsection{Constraints Enforcement}
\label{app:constrain_enforcement}
\subsubsection{Molecular Docking}
To enforce geometric feasibility on molecular conformations, we adopt an iterative, gradient-free projection strategy. Starting from an initial state $x^{(0)}$, the algorithm generates a sequence of iterates $x^{(k+1)}=\text{Project}(x^{(k)})$, where each projection step reduces the violation of geometric constraints. The process terminates once no constraint is violated or a fixed iteration budget is reached, yielding a final feasible conformation.

At each step, we first identify the active set of violated constraints, denoted $\mathcal{A}(x)$. For each violated constraint, we compute a corrective displacement along the axis connecting the two atoms involved. The displacement magnitude is proportional to the violation and scaled by a small fixed step size, producing a push or pull effect that nudges atoms toward feasibility. The updates across all active constraints are aggregated into a single correction $\Delta x$, with per-atom displacements capped to ensure numerical stability. The coordinates are then updated as $x \leftarrow x + \Delta x$.

While a single update reduces constraint violations, it may not achieve strict feasibility. Iterative application guarantees convergence to a geometrically valid conformation. By restricting updates to the active set, the method avoids unnecessary corrections on already satisfied constraints, while still ensuring that all geometric conditions are satisfied at convergence.

\subsubsection{PDE}
Following the enforcement strategy of~\cite{utkarsh2025physics}, we also adopt the Gauss--Newton method for enforcing constraints in PDE formulations. For linear constraints, the projection is exact and guarantees feasibility. For nonlinear constraints, the single projection provides only an approximate correction, and strict feasibility is deferred to the final step, where multiple Gauss--Newton iterations are applied.
In addition, since our setting considers inequality constraints, we adopt an \emph{active set strategy}. Specifically, at each step we identify the subset of constraints that are currently violated and apply projection updates only to this active set. By restricting correction to infeasible constraints, this approach avoids unnecessary computations on already satisfied constraints, while still ensuring that all constraints are satisfied at convergence.

It is important to highlight that, although all constraint enforcement is based on the Gauss--Newton method, our approach differs from PCFM. In PCFM, projections are repeatedly applied to the original constraints defined on the clean sample, regardless of sampling time. In contrast, our method introduces a \emph{chance-constrained projection operator}, which adaptively adjusts the tightness of constraints according to the sampling time. This adaptive mechanism is supported by theoretical guarantees, ensuring both efficiency and correctness.

\paragraph{Linear Projection}
For linear inequality constraints, projection is exact. 
At each sampling step, we solve
\[
x_{\mathrm{proj}} \;=\; \arg\min_{z} \|z-x\|_2^2 
\quad \text{s.t.}\quad g(z)\le 0,
\]
which admits a closed-form solution due to the rank-1 structure of the Jacobian. 
For instance, in the Navier–-Stokes Equation, the total-vorticity constraint reduces to a constant-shift correction
\[
\mathbf{v}(\cdot,\cdot,k) \;\leftarrow\; \mathbf{v}(\cdot,\cdot,k) - \text{average residual}.
\]
This closed-form update yields exact feasibility at negligible cost and is applied after every sampling step.

\paragraph{Gauss--Newton Projection}
For general nonlinear constraints, we employ an approximate correction. 
Given a candidate $x\in\mathbb{R}^n$ subject to $g_i(x)\le 0$, we first identify the active set of violated constraints
\[
\mathcal{A}(x) = \{\, i \mid g_i(x) > 0 \,\},
\]
with residuals and Jacobian
\[
r = g_{\mathcal{A}(x)}(x),\qquad 
J = \nabla g_{\mathcal{A}(x)}(x)^\top \in \mathbb{R}^{n \times |\mathcal{A}(x)|}.
\]
A single Gauss--Newton update
\[
x_{\mathrm{proj}} = x - J^\top (J J^\top)^{-1} r
\]
reduces the residuals of the active constraints, pushing the iterate back toward the feasible region. 
In PCFM and CCFM, this step is applied after each sampling iteration.

Specifically, we implement the active set via hinge residuals $r=\mathrm{ReLU}(g(x))$ for efficient GPU batching. 
The linear system is stabilized by adding the regularization
\[
y = (J J^\top + \lambda I)^{-1} r, \quad \Delta x = -J^\top y, \quad x \leftarrow x + \Delta x,
\]
with $\lambda \approx 10^{-6}$. 

\paragraph{Final Projection for Strict Feasibility}
To guarantee exact satisfaction of all constraints in the generated samples, we perform Newton--Schur iterations restricted to the active set:
\[
x^{(k+1)} 
= x^{(k)} - J_k^\top (J_k J_k^\top)^{-1} g_{\mathcal{A}(x^{(k)})}\!\big(x^{(k)}\big), 
\quad J_k = \nabla g_{\mathcal{A}(x^{(k)})}(x^{(k)})^\top.
\]
Under smoothness and full-rank assumptions, these iterations converge quadratically~\citep{utkarsh2025physics}. 
In both PCFM and CCFM, we allocate a fixed budget of 30 iterations after the final sampling step to ensure strict feasibility.

\subsection{Dataset Details}

\subsubsection{Molecular Docking}
This work utilizes the same dataset as adopted in~\citep{corso2025composing}, namely the PDBBind benchmark~\citep{liu2017forging}, which has become a standard for evaluating docking models. Following previous work~\citep{stark2022equibind,corso2025composing}, we employ the time-based split, where approximately 17,000 complexes released before 2019 are used for training and validation, while the 363 complexes released after 2019 constitute the test set.

To ensure consistency between apo and holo structures, we first sanitize the raw PDBBind files using PDBFixer~\citep{eastman2017openmm} to replace non-standard residues and add missing heavy atoms. 
For the apo state, we extract protein sequences from the processed files and generate structural predictions with ESMFold~\citep{lin2022language}. These predictions are further refined by PDBFixer to add missing terminal atoms, while the crystallographic bound conformations from PDBBind serve as the holo state. 
Finally, hydrogen atoms are removed during alignment to reduce inconsistencies arising from variable hydrogen placement.

\subsubsection{PDE}
Following the dataset construction strategies in~\cite{cheng2024gradient,utkarsh2025physics}, we generate training and testing trajectories by combining randomized initial conditions with varying boundary or forcing configurations. The key distinction is that, whereas prior works typically employ only 10 initial conditions in the test set, we adopt 90. This design probes model performance under more diverse and complex scenarios, thereby yielding a more challenging and discriminative evaluation.

\paragraph{Reaction--Diffusion Equation}
The 1D reaction--diffusion system is simulated on the spatial domain $s \in [0,1]$ with $n_s=128$ grid points and over $n_t=100$ discrete time steps. The numerical simulations were performed using FiPy~\citep{guyer2009fipy}, a Python-based finite volume solver for partial differential equations. We employ a semi-implicit scheme with a fixed diffusion coefficient and reaction rate of $(\nu, \rho) = (0.005, 0.01)$.

The initial conditions are drawn from randomized combinations of sinusoidal modes and localized perturbation functions, providing both smooth and sharp spatial profiles. Neumann boundary conditions are enforced with time-varying fluxes $(g_L, g_R)$ sampled from a prescribed range.

The dataset is split into training and testing sets. To form the training set, we pair 100 distinct initial conditions with 100 unique boundary flux configurations, yielding a total of 10,000 solution trajectories. For the test set, a separate pool of data is first generated by combining 90 new initial conditions with 90 new boundary flux configurations; from this pool of 8,100 trajectories, 1,000 are then randomly selected for evaluation. The conditions used for the training and test sets are entirely disjoint. 

\paragraph{Navier–-Stokes Equation}
We consider the 2D incompressible Navier--Stokes equations in vorticity form on a periodic square domain discretized with a $64 \times 64$ spatial grid. Temporal evolution is simulated over $T=49$ time units, with 50 uniformly sampled snapshots recorded per trajectory. We adopt a Crank–Nicolson spectral solver to ensure stability for the viscosity parameter $\nu = 10^{-3}$. The initial vorticity fields are sampled from Gaussian random fields with prescribed covariance structure, while the external forcing is defined as
\[
f(x) \;=\; \tfrac{0.1}{\sqrt{2}} \,\sin\!\big(2\pi(x_1 + x_2) + \varphi\big),
\qquad \varphi \sim \mathcal{U}(0, \pi/2).
\]
To construct the training dataset, we generate 10,000 trajectories by sampling 100 random initial conditions and 100 random forcing phases. For evaluation, a separate dataset is first generated using 90 new initial conditions and 90 new forcing phases. From this pool of 8,100 trajectories, we then randomly select 1,000 trajectories to form the final test set. The initial conditions used for training and evaluation are disjoint.

\subsection{Model Details}
\subsubsection{Molecular Docking}
For our experiments, we adopt the FlexDock framework, a state-of-the-art generative model for flexible molecular docking based on Unbalanced Flow Matching (UFM)~\citep{corso2025composing}. FlexDock integrates two complementary modules: a manifold docking flow, which operates in reduced degrees of freedom to efficiently capture ligand rigid-body motions and protein side-chain flexibility, and a subsequent relaxation flow, which refines atomic coordinates in Euclidean space under energy-based constraints to ensure physically valid conformations.

This work directly utilizes the official pre-trained FlexDock models provided by the authors, trained on the PDBBind dataset with time-based splits. This setup allows us to leverage a robust, well-validated backbone without additional fine-tuning, thereby ensuring reproducibility, comparability with prior benchmarks, and fairness in experimental comparisons.

\subsection{PDE}
For all PDE benchmarks, we adopt the Functional Flow Matching (FFM) framework parameterized by a Fourier Neural Operator (FNO) backbone~\citep{kerrigan2023functional}. The FNO encoder processes the concatenation of the current state, a sinusoidal positional embedding of the spatial grid, and a Fourier time embedding, thereby enabling spatiotemporal generalization across functional domains. The resulting time-dependent vector field defines the flow dynamics used in all experiments. Table~\ref{tab:hyper_rd} and Table~\ref{tab:hyper_ns} summarize the hyperparameters used during training for reaction--Diffusion and Navier--Stokes equations, respectively.

\begin{table}[h!]
\centering % Center the two minipages on the page

\begin{minipage}{0.45\textwidth}
\small
    \centering
    \caption{Params for Reaction--Diffusion.}
    \begin{tabular}{cc}
    \toprule % Using booktabs for better style
    Hyperparameter & Value \\
    \midrule
    Fourier Layers     & 6 \\
    Fourier Modes  & [32,32] \\
    Hidden Channels      & 64\\
    Projection Channels     & 256 \\
    Batch Size & 256 \\
    Optimizer    & adam \\
    Scheduler & plateau \\
    Prior Distribution         & randn \\
    Training Steps & 20000 \\
    \bottomrule
    \end{tabular}
    \label{tab:hyper_rd}
\end{minipage}
\hfill 
\begin{minipage}{0.45\textwidth}
\small
    \centering
    \caption{Params for Navier--Stokes.}
    \begin{tabular}{cc}
    \toprule
    Hyperparameter & Value \\
    \midrule
    Fourier Layers     & 4 \\
    Fourier Modes  &  [16,16,16] \\
    Hidden Channels      & 64\\
    Projection Channels     & 256 \\
    Batch Size & 16 \\
    Optimizer    & adamw \\
    Scheduler & plateau \\
    Prior Distribution         & randn \\
    Training Steps & 200000 \\
    \bottomrule
    \end{tabular}
    \label{tab:hyper_ns} % Changed the label to be unique
\end{minipage}

\end{table}

\subsection{Sampling Setup}
\subsubsection{Molecular Docking}
FlexDock employs two distinct flow-matching models for molecular docking: one dedicated to docking and the other to relaxation. Constraint enforcement is primarily handled by the second model, which applies an Euler-based update scheme. At each inference step, multiple candidate samples are generated, and a discriminator is subsequently used to select the single most promising configuration as the final output. We evaluate FlexDock under varying sample counts (1, 5, 10, 15, 20) and inference steps (2, 5, 10, 15, 20), reporting the detailed results in Section~\ref{sec:experiments} and the remaining results in the appendix~\ref{app:additional_results}.

CCFM adopts a sampling setup largely consistent with FlexDock, employing the same ranges of sample counts and inference steps. The only distinction is that CCFM enforces constraints after each inference step using a gradient-free method shown in Appendix~\ref{app:constrain_enforcement}, followed by a final refinement of 20 iterations once the sampling trajectory is complete. 

\subsubsection{PDE}
We adopt high-order ODE solvers (Dopri5) for unconstrained generation. For all constraint-aware methods, we employ the modified Euler scheme (Heun’s method) with 200 sampling steps to ensure a fair comparison.

\paragraph{CCFM (ours).}
Our method performs explicit Euler integration with constraint correction at every step. Each intermediate state is projected onto the constraint manifold using a single Newton update. As shown in Proposition~\ref{prop:tractable_reformulation}, the deterministic reformulation of chance constraints is governed by both the variance of the induced random variable and the probability of constraint satisfaction. Specifically, the variance depends on the sampling time $t$, while the satisfaction probability is given by $\phi(t)=(t/2)^n$. The parameter $n$ thus controls the tightening schedule: smaller values (e.g., $n=0.1$) enforce stricter constraints at later stages, while larger values (e.g., $n=0.5$) apply stronger corrections earlier. We adopt $n=0.5$ for the reaction–diffusion task and $n=0.1$ for the Navier–Stokes task. A detailed sensitivity analysis with respect to $n$ is provided in the Appendix~\ref{app:additional_results}. After the sampling process, we apply 30 iterations of Gauss–Newton refinement to guarantee the strict feasibility of all constraints.

\paragraph{PCFM.}
This baseline relies on iterative extrapolation–correction–interpolation steps combined with periodic mixing and noise resampling. To align with our fixed sampling budget, we restrict the number of mixing steps to 2. Constraint correction follows the same projection-based strategy as in CCFM.

\paragraph{Diffusion PDE.}
We follow the gradient-guided sampling procedure~\citep{huang2024diffusionpde} on a pretrained FFM backbone. Constraint satisfaction is encouraged by augmenting the generation loss with explicit penalties on both the initial condition and the conservation law. The composite loss is weighted by a factor of 200 to balance fidelity and feasibility.

\subsection{Hardware}
For the molecular docking experiments, inference was conducted on six NVIDIA A6000 GPUs. For the Reaction--Diffusion, both training and inference were performed on a single NVIDIA A100 GPU. For the Navier--Stokes, the training process was done on six NVIDIA A6000 GPUs, while inference was performed on a single NVIDIA A100 GPU.

\section{Additional Results}
\label{app:additional_results}

\subsection{Molecular Docking}
\label{app:molecular_docking}
\begin{figure}[t]
  \centering
  \begin{subfigure}[t]{0.95\textwidth}
    \centering
    \includegraphics[width=\linewidth]{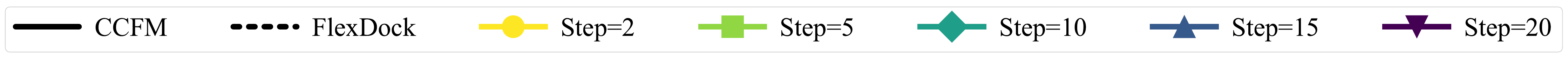}
  \end{subfigure}
  \hfill
  \begin{subfigure}[t]{0.4\textwidth}
    \centering
    \includegraphics[width=\linewidth]{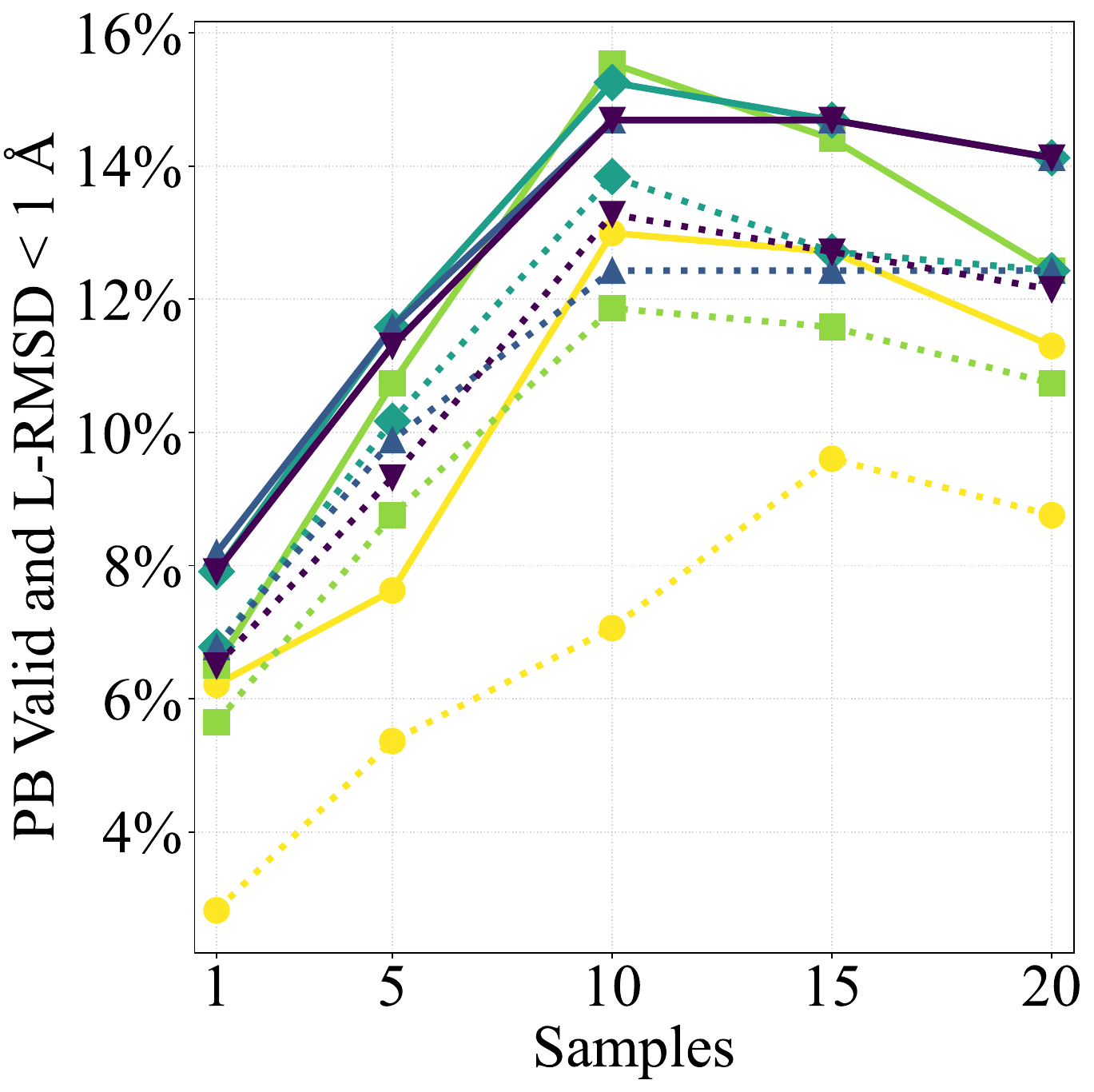}
    \caption{PB Valid and L-RMSD $<$1 \AA}
  \end{subfigure}
  \hfill
  \begin{subfigure}[t]{0.4\textwidth}
    \centering
    \includegraphics[width=\linewidth]{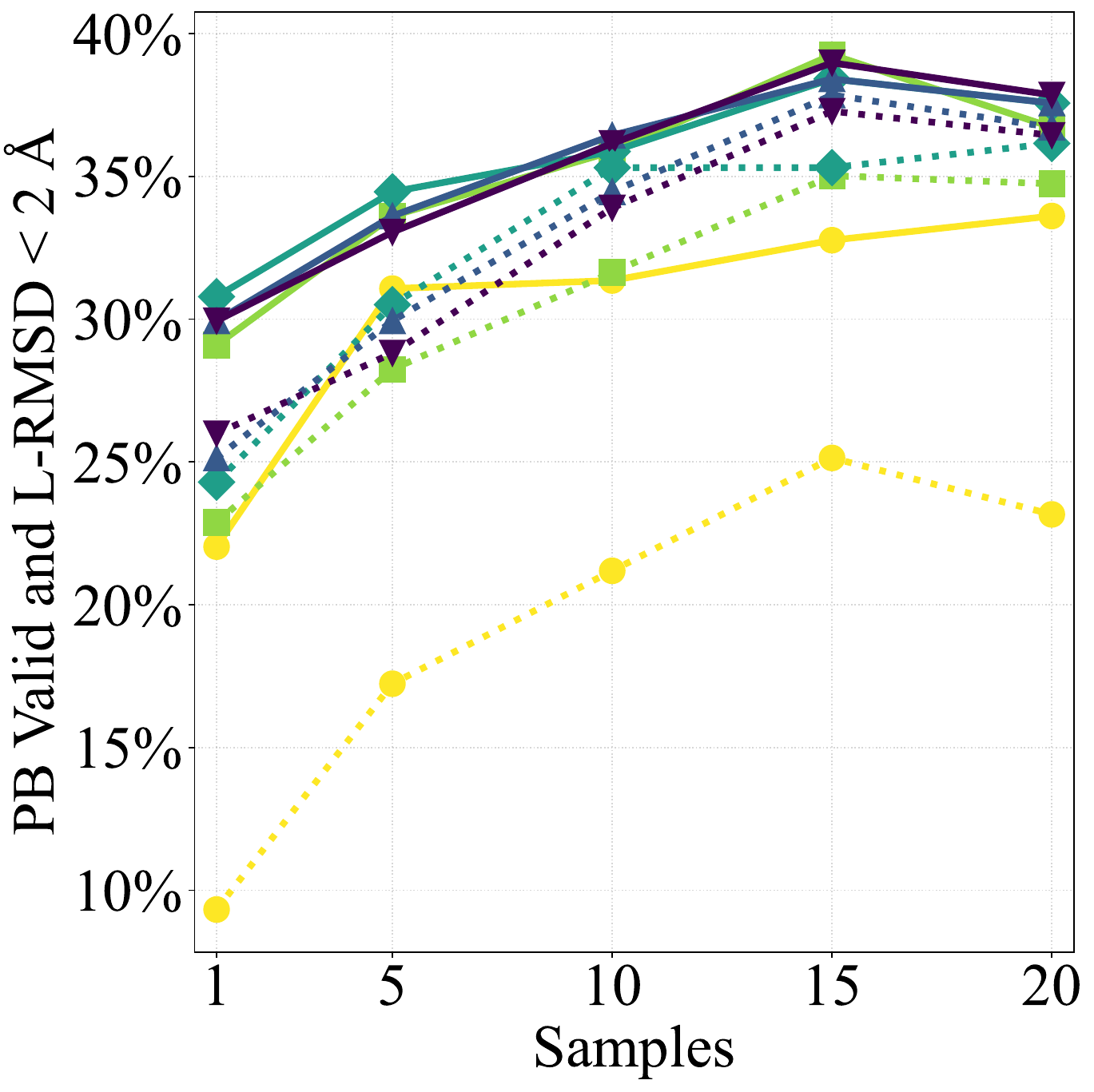}
    \caption{PB Valid and A-RMSD $<2$\AA}
  \end{subfigure}
  \hfill
  \begin{subfigure}[t]{0.4\textwidth}
    \centering
    \includegraphics[width=\linewidth]{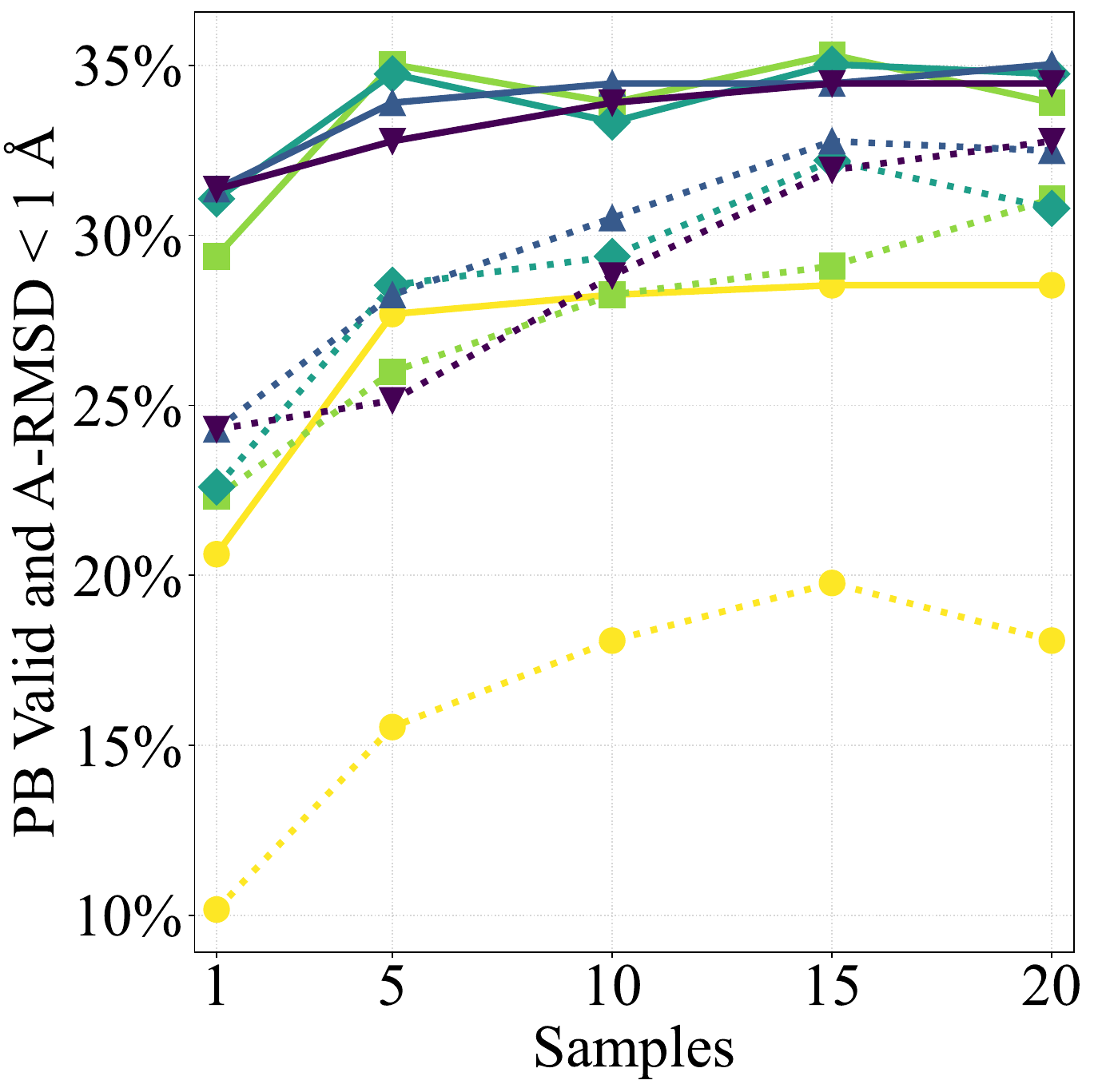}
    \caption{PB Valid and L-RMSD $<$1 \AA}
  \end{subfigure}
  \hfill
  \begin{subfigure}[t]{0.4\textwidth}
    \centering
    \includegraphics[width=\linewidth]{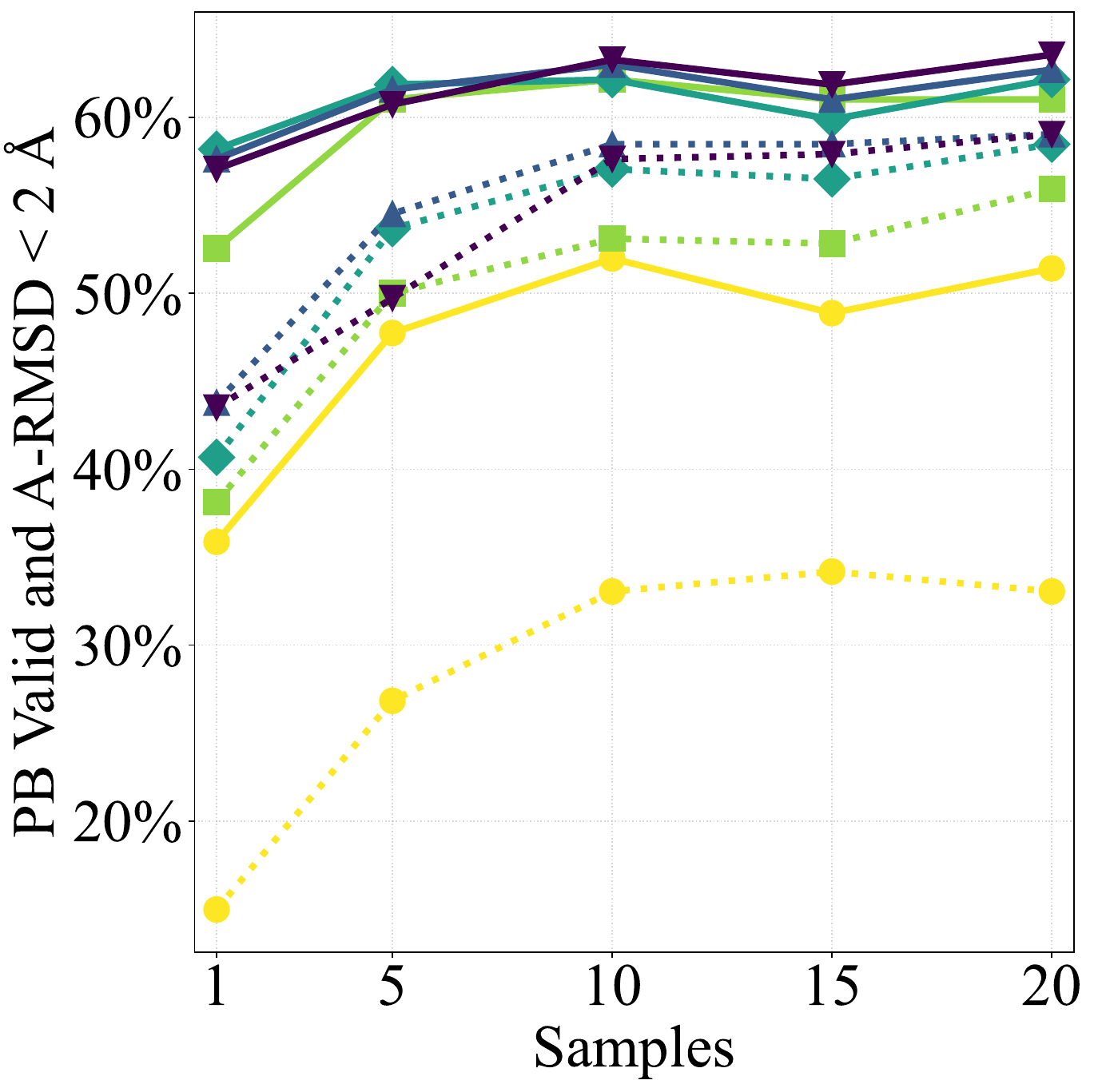}
    \caption{PB Valid and A-RMSD $<2$\AA}
  \end{subfigure}
  \hfill
  \begin{subfigure}[t]{0.4\textwidth}
    \centering
    \includegraphics[width=\linewidth]{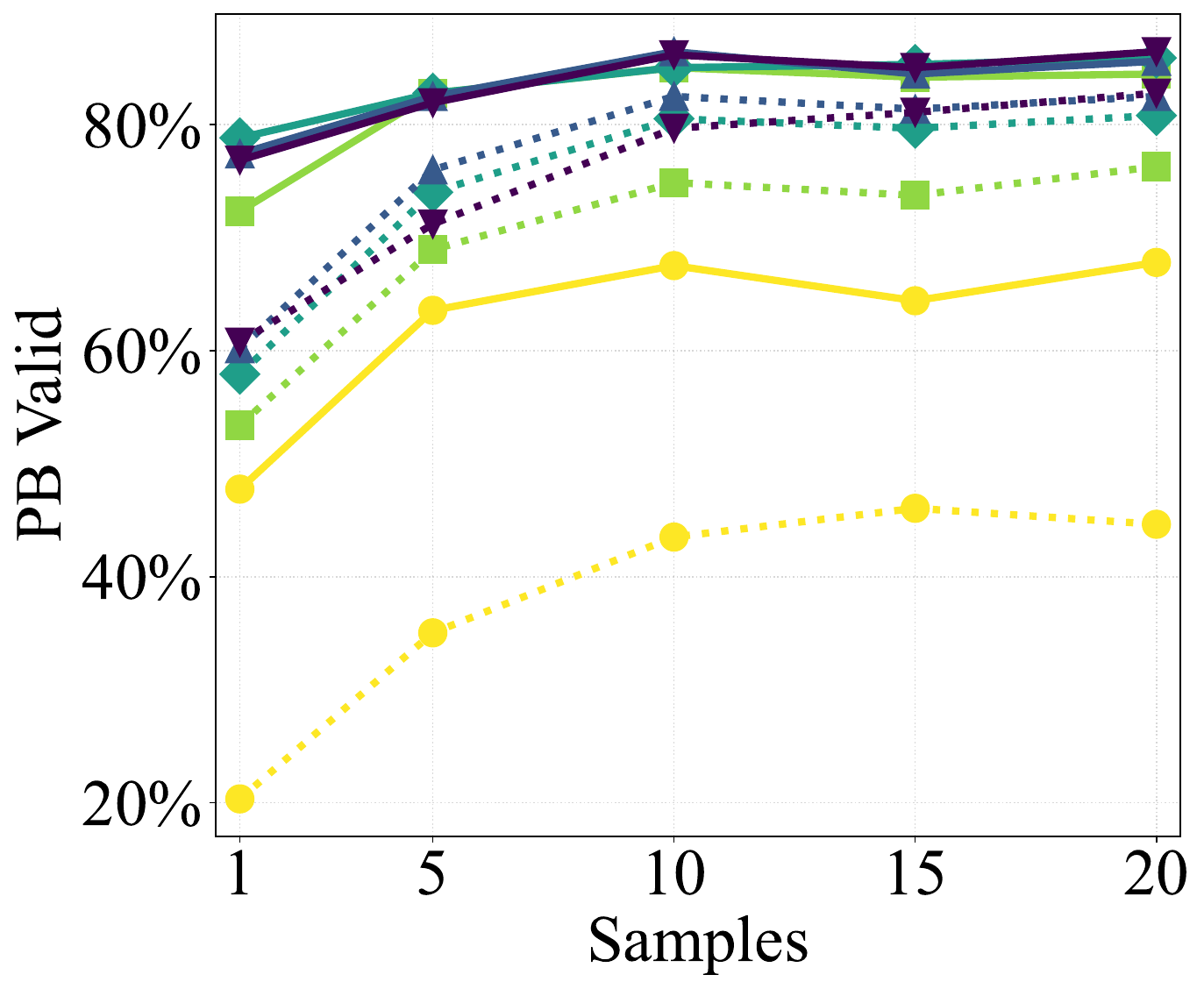}
    \caption{PB Valid}
  \end{subfigure}
\caption{Comparison of CCFM (solid lines) and FlexDock (dashed lines) under different sampling budgets and evaluation metrics.}
  \label{fig:various_metrics_md}
\end{figure}

A comparison between CCFM and FlexDock across varying sample sizes and inference steps is shown in Figure~\ref{fig:various_metrics_md}. 

At the strictest setting (PB Valid with L-RMSD $<$ 1 \AA, step = 2, sample = 1), CCFM attains over 6\% success rate, more than doubling FlexDock’s ~3\%. A similar advantage persists when relaxing the cutoff to L-RMSD $<$ 2 \AA, where CCFM consistently surpasses FlexDock across all sample and step budgets.

When evaluated on A-RMSD, the margin becomes even more pronounced: for PB Valid with A-RMSD $<$ 1 \AA, CCFM exceeds FlexDock by at least 5\% in nearly every configuration, and maintains comparable superiority at the A-RMSD $< 2$ \AA \, threshold.

For PB Valid, with a moderate budget (5 samples, 5 steps), CCFM achieves a PB Valid level comparable to FlexDock with 20 samples and 20 steps, reducing the effective sample-step cost \textbf{from 400 to just 25}. These results highlight the overall trend: CCFM consistently matches or exceeds FlexDock validity within a fraction of the computational expense. 

The improvements are most striking under limited budgets, where CCFM achieves strong validity and fidelity with far fewer samples and steps. Even as the computational budget increases, CCFM continues to deliver higher feasibility and fidelity while maintaining a clear advantage in efficiency.

\begin{figure}[t]
  \centering
  \begin{subfigure}[t]{0.45\textwidth}
    \centering
    \includegraphics[width=\linewidth]{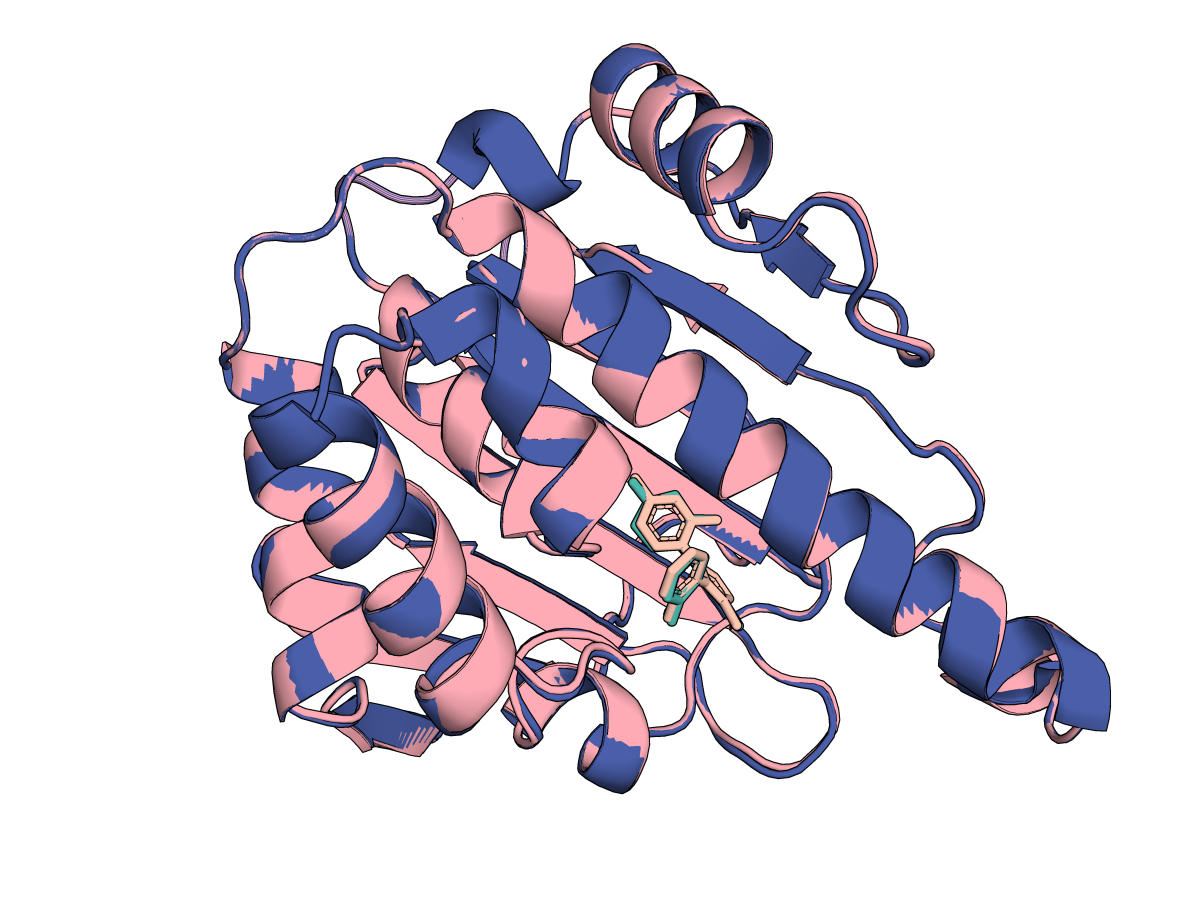}
    \caption{6HHR}
  \end{subfigure}
  \hfill
  \begin{subfigure}[t]{0.45\textwidth}
    \centering
    \includegraphics[width=\linewidth]{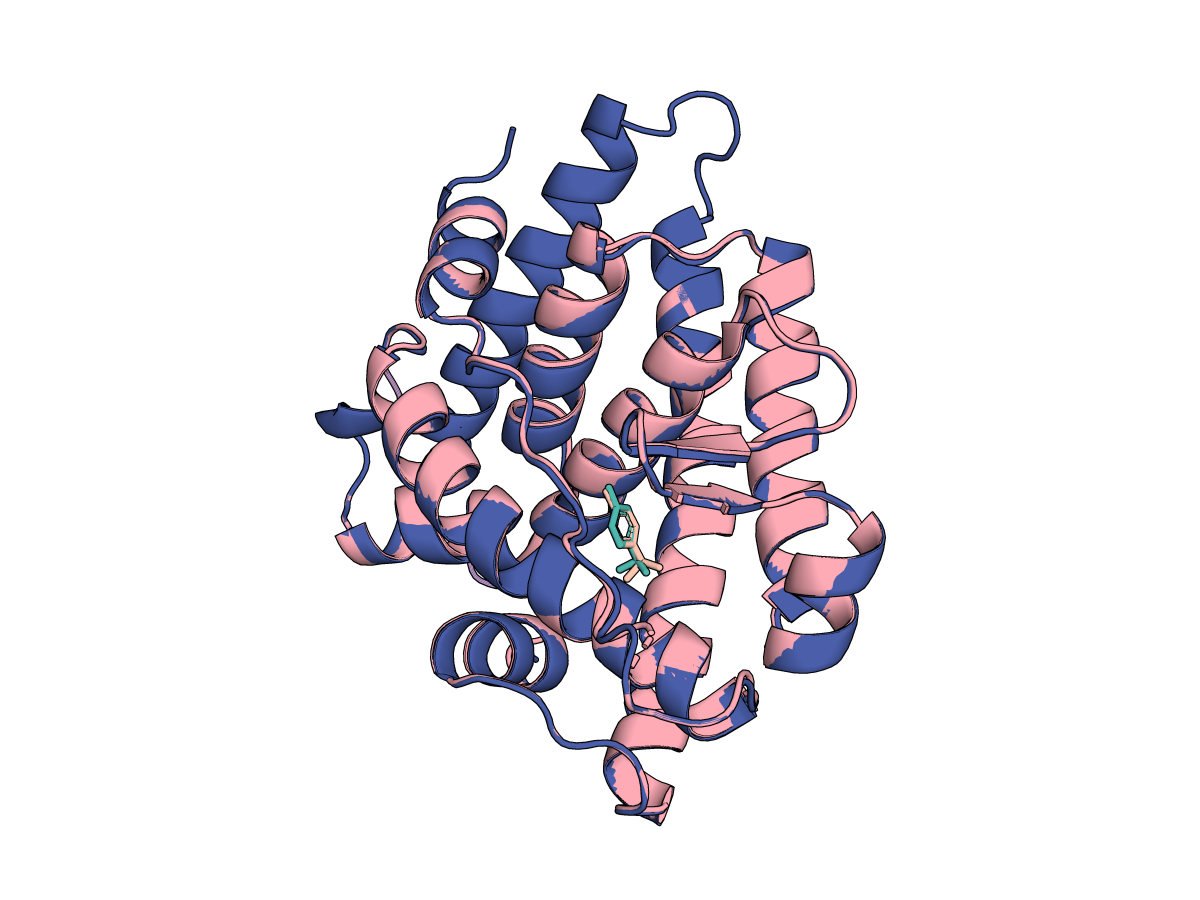}
    \caption{6I65}
  \end{subfigure}
  \hfill
  \begin{subfigure}[t]{0.45\textwidth}
    \centering
    \includegraphics[width=\linewidth]{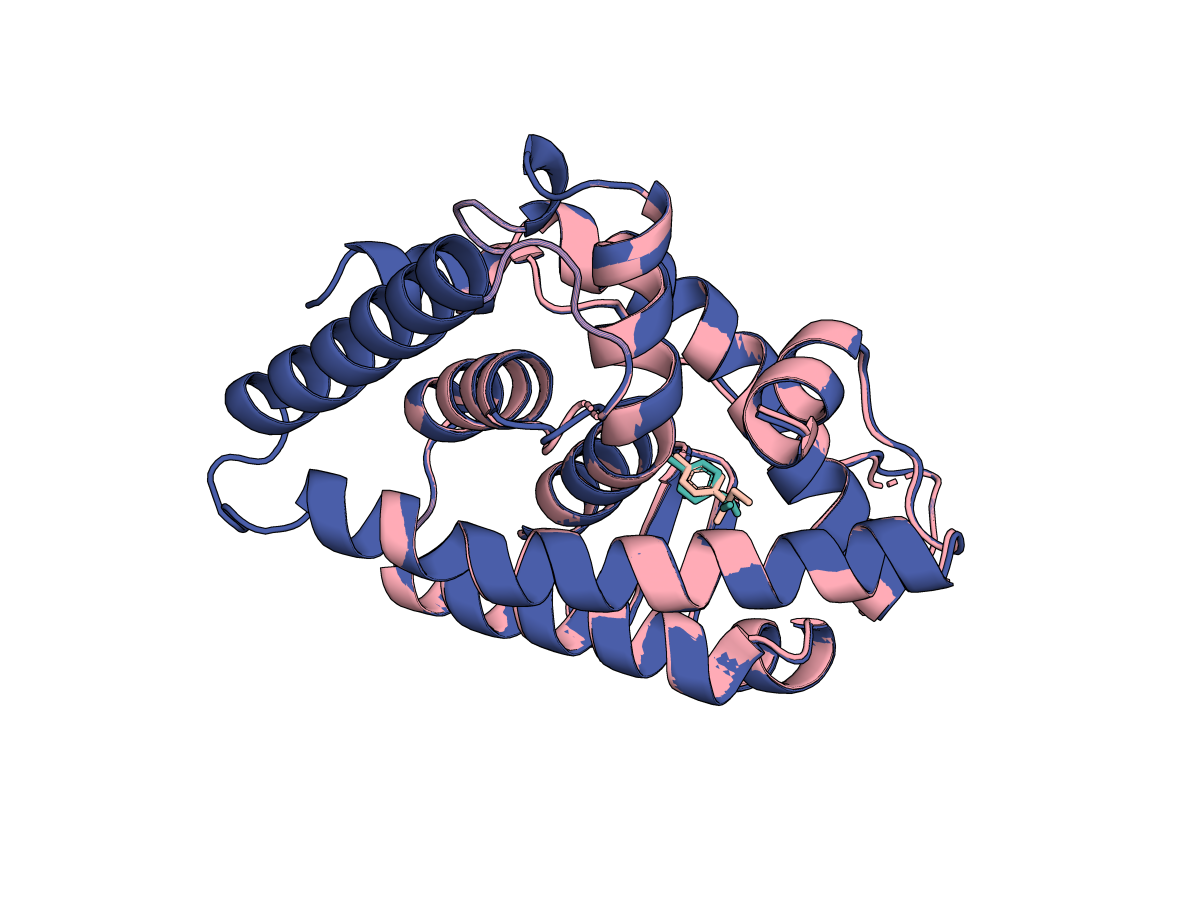}
    \caption{6I66}
  \end{subfigure}
  \hfill
  \begin{subfigure}[t]{0.45\textwidth}
    \centering
    \includegraphics[width=\linewidth]{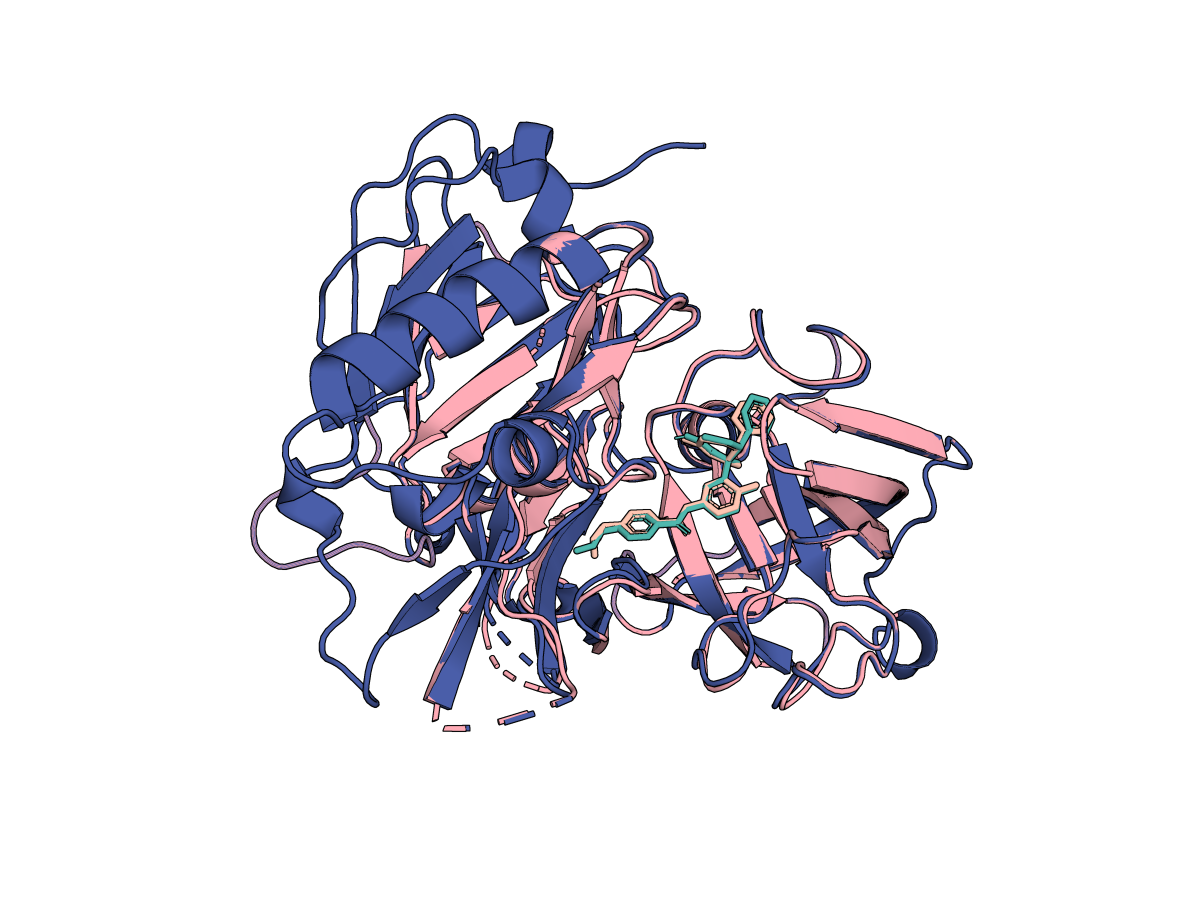}
    \caption{6JSE}
  \end{subfigure}
  \hfill
  \begin{subfigure}[t]{0.45\textwidth}
    \centering
    \includegraphics[width=\linewidth]{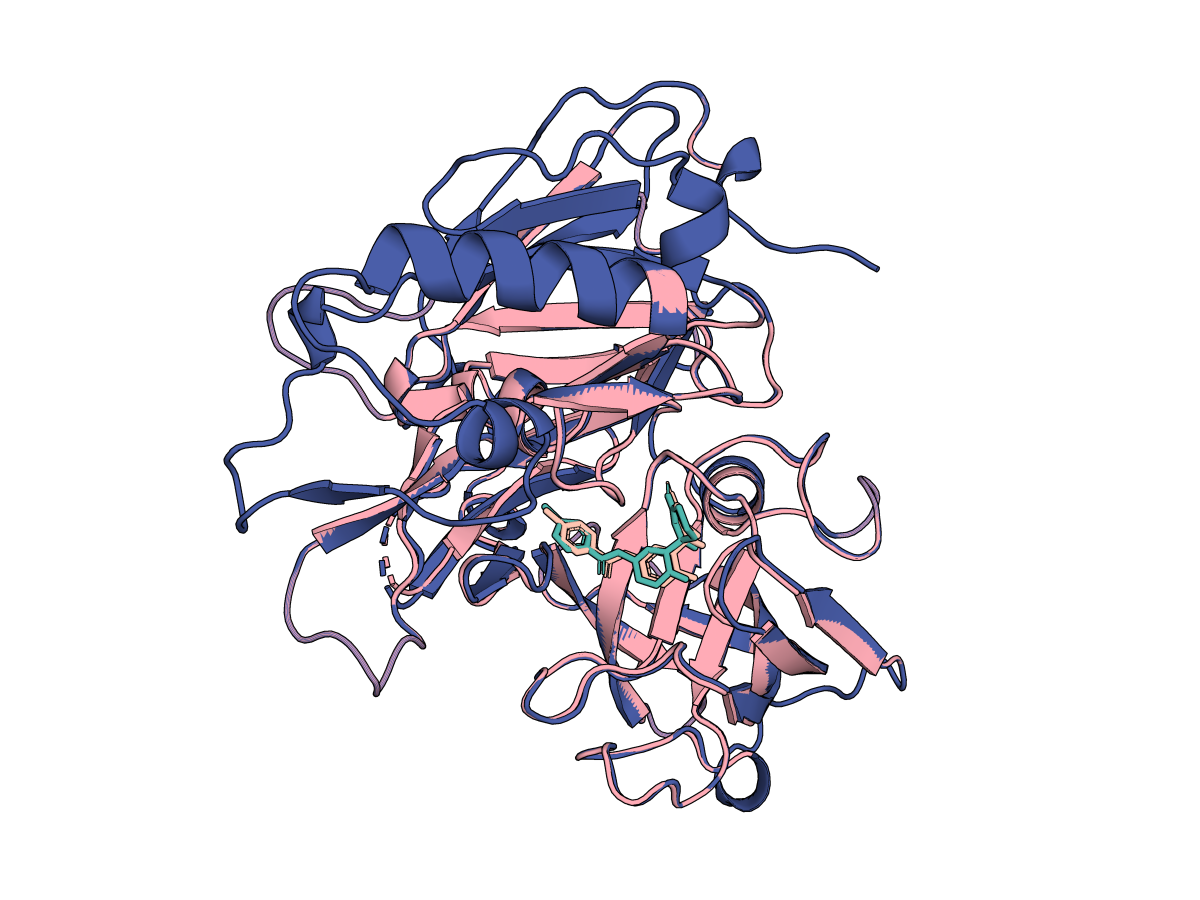}
    \caption{6JSG}
  \end{subfigure}
  \hfill
  \begin{subfigure}[t]{0.45\textwidth}
    \centering
    \includegraphics[width=\linewidth]{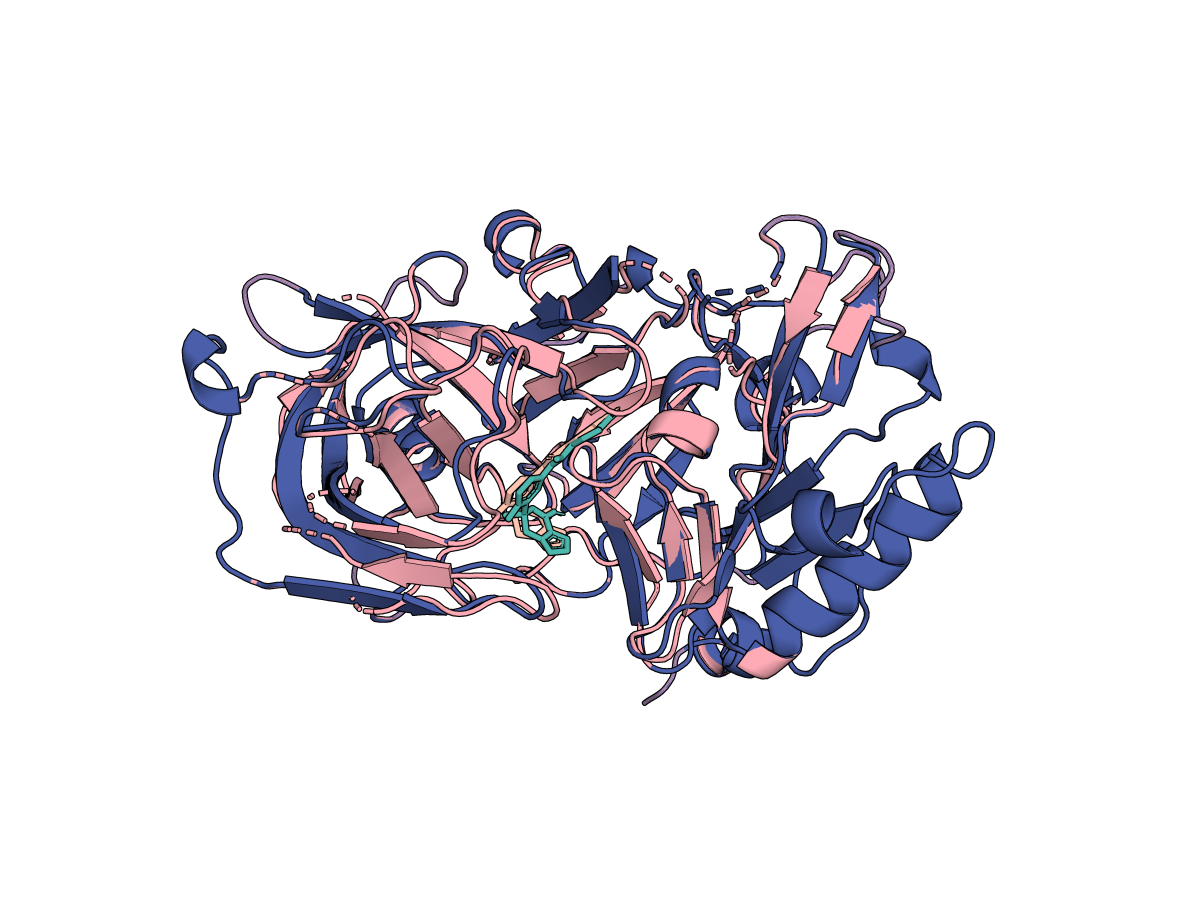}
    \caption{6OD6}
  \end{subfigure}
\caption{Visualization for the generated complexes(6HHR, 6I65, 6I66, 6JSE, 6JSG, and 6OD6) from the PDBBind test dataset. Predicted protein structures are shown in pink, predicted ligands in teal, ground-truth proteins in dark blue, and ground-truth ligands in melon.}
  \label{fig:vis_protein_sample}
\end{figure}
Figure~\ref{fig:vis_protein_sample} illustrates representative docking results on six PDBBind test complexes, highlighting the alignment between predicted and ground-truth protein–ligand structures.

\subsection{PDE solution generation}
\label{app:pde_additional_res}

We report additional results of the PDE solution generation tasks. This section
presents the running time for different methods, sensitivity analysis of the probability scheduler, MMSE curves of the generated solution fields varying with physical time, the visualization of SMSE and MMSE with all time frames (Navir--Stokes), the point-wise average residuals varying with physical time, and the comparison of models using an example of the generated solution field.

Table~\ref{tab:ccfm-runtime} shows that CCFM runs faster than PCFM while still ensuring feasibility. The gain is most pronounced in Reaction–Diffusion, where nonlinear projections slow down PCFM. By adaptively relaxing constraints with the chance-constrained operator, CCFM avoids unnecessary projections and reduces computational cost.
\begin{table}[t]
  \caption{Comparison of running time between CCFM and PCFM on Reaction--Diffusion and Navier--Stokes equations. }
  \centering
  \resizebox{0.6\linewidth}{!}{%
  \begin{tabular}{
    ll
    c % CCFM
    c % PCFM
  }
    \toprule
    \textbf{Problem} & \textbf{Metric} & \textbf{CCFM (ours)} & \textbf{PCFM} \\
    \midrule
    \multirow{1}{*}{Reaction--Diffusion}
      & Running Time (s) ($\downarrow$) & \multicolumn{1}{c}{2.703} & \multicolumn{1}{c}{3.874} \\
    \midrule
    \multirow{1}{*}{Navier--Stokes}
      & Running Time (s) ($\downarrow$) & \multicolumn{1}{c}{3.065} & \multicolumn{1}{c}{3.077} \\
    \bottomrule
  \end{tabular}%
  }
 \label{tab:ccfm-runtime}
\end{table}

\subsubsection{Reaction--Diffusion}
\begin{table}[t]
  \centering
  \caption{Sensitivity analysis for probability of constraint satisfaction. 
  The probability is determined by the function $\phi(t) = (t/2)^n$, 
  and we adjust $n$ to observe the results.}
  \resizebox{0.95\linewidth}{!}{%
  \begin{tabular}{
    ll
    c 
    c 
    c 
    c 
    c 
  }
    \toprule
    \textbf{Problem} & \textbf{Metric} & \textbf{CCFM (0.1)} & \textbf{CCFM (0.3)} & \textbf{CCFM (0.5)} & \textbf{CCFM (0.7)} & \textbf{CCFM (0.9)} \\
    \midrule
    \multirow{4}{*}{Reaction--Diffusion}
      & MMSE $\times 10^{-2}$ ($\downarrow$)   & 3.6 & 3.6 & 3.3 & 3.2 & 3.2 \\
      & SMSE $\times 10^{-2}$ ($\downarrow$)   & 2.8 & 2.9 & 2.7 & 2.6 & 2.6 \\
      & CV (IC)  $\times 10^{-2}$ ($\downarrow$) & 0 & 0 & 0 & 0 & 0 \\
      & CV (CL)  $\times 10^{-15}$ ($\downarrow$) & 9.9 & 9.6 & 9.5 & 9.5 & 9.5 \\
    \bottomrule
  \end{tabular}%
  }
 \label{tab:sensitivity-phi}
\end{table}

Table~\ref{tab:sensitivity-phi} presents the sensitivity analysis of the probability scheduler hyperparameter. We observe that varying the parameter $n$ has almost no impact on constraint satisfaction, consistently ensuring feasibility across all settings. The effects on MMSE and SMSE are also marginal, with only slight fluctuations that remain within a reasonable range. This indicates that the method is robust to the choice of the probability scheduler.

\begin{figure}
\centering
    \begin{subfigure}{0.4\textwidth}
        \centering
        \includegraphics[width=\linewidth]{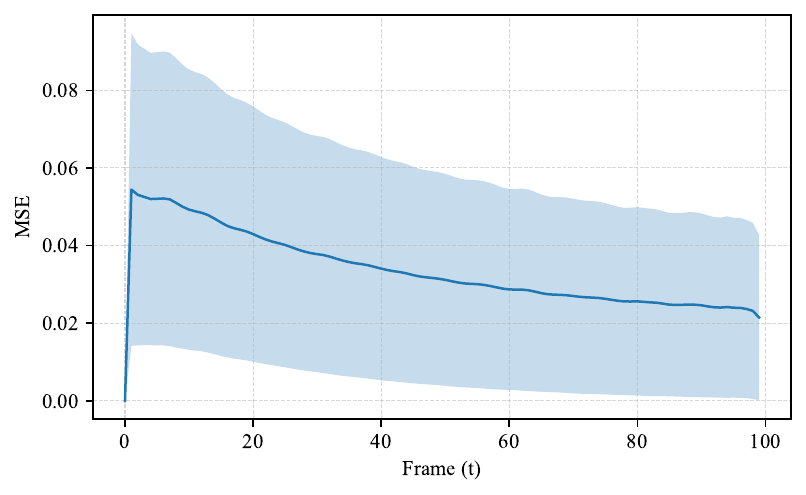}
        \caption{CCFM}
    \end{subfigure}
    \begin{subfigure}{0.4\textwidth}
        \centering
        \includegraphics[width=\linewidth]{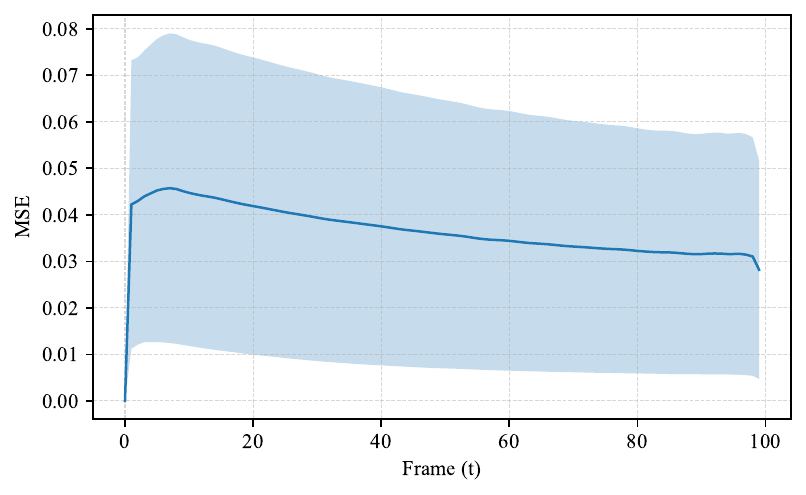}
        \caption{PCFM}
    \end{subfigure}
    
    \begin{subfigure}{0.4\textwidth}
        \centering
        \includegraphics[width=\linewidth]{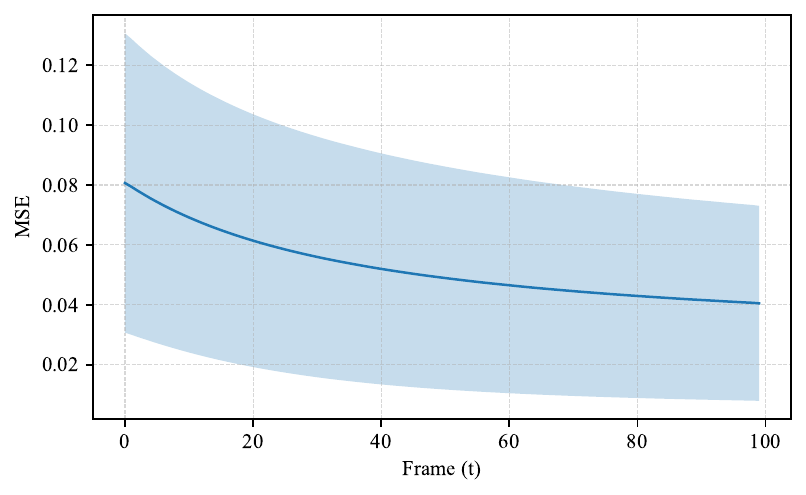}
        \caption{DPDE}
    \end{subfigure}
    \begin{subfigure}{0.4\textwidth}
        \centering
        \includegraphics[width=\linewidth]{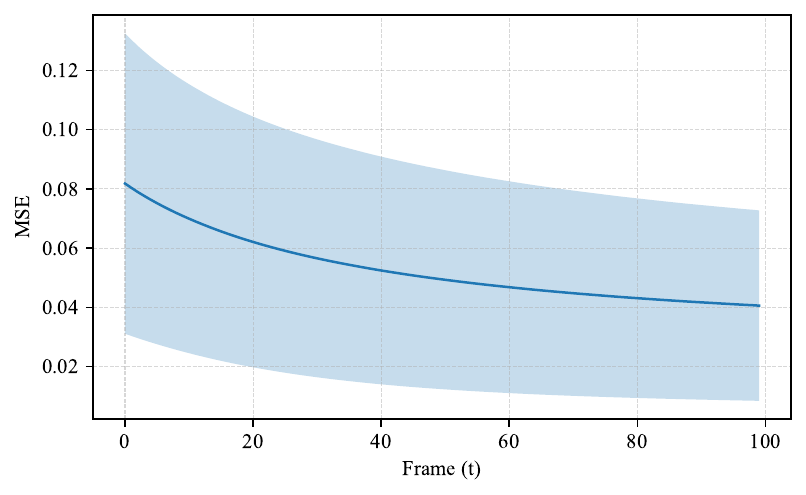}
        \caption{FFM}
    \end{subfigure}
    \caption{Comparison of generated solutions for the Reaction--Diffusion equation in terms of MMSE varying with physical time.}
    \label{fig:rd-mmse-time}
\end{figure}

\begin{figure}[t]
  \centering
  \begin{subfigure}[t]{0.23\textwidth}
    \centering
    \includegraphics[width=\textwidth]{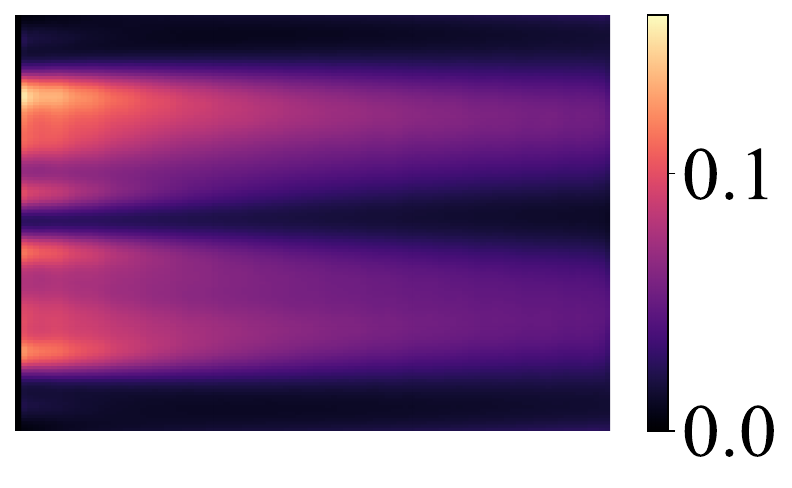}
    \caption{CCFM}
  \end{subfigure}
  \hfill
  \begin{subfigure}[t]{0.23\textwidth}
    \centering
    \includegraphics[width=\textwidth]{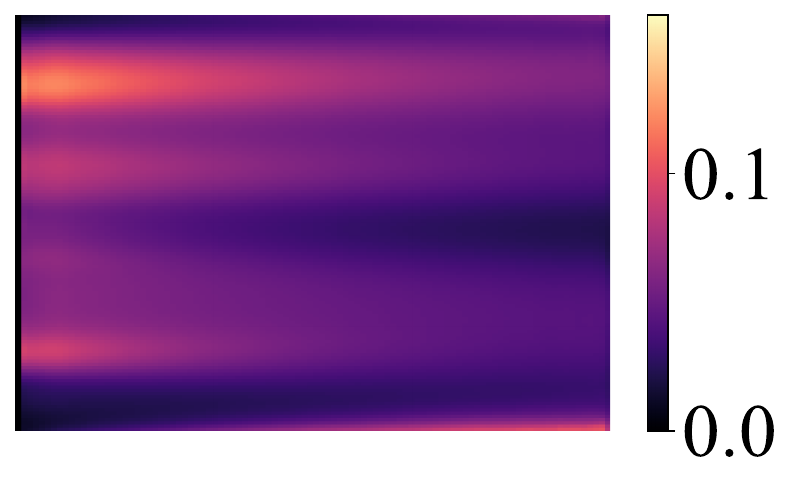}
    \caption{PCFM}
  \end{subfigure}
  \hfill
  \begin{subfigure}[t]{0.23\textwidth}
    \centering
    \includegraphics[width=\textwidth]{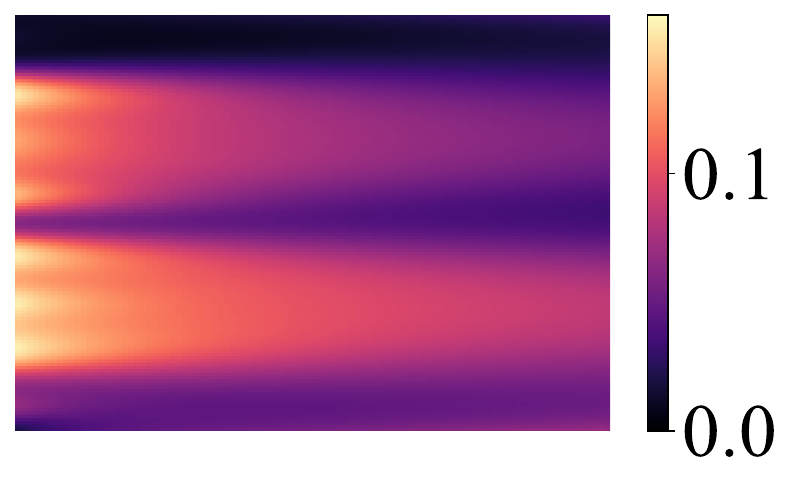}
    \caption{DPDE}
  \end{subfigure}
  \hfill
  \begin{subfigure}[t]{0.23\textwidth}
    \centering
    \includegraphics[width=\textwidth]{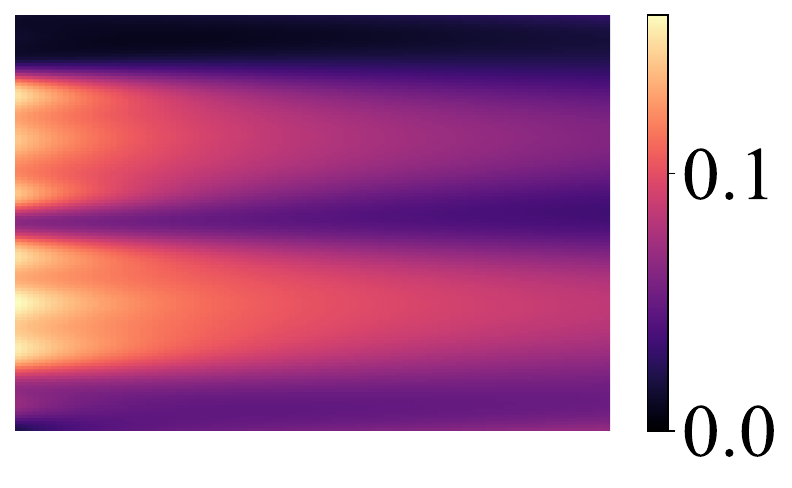}
    \caption{FFM}
  \end{subfigure}
  \caption{SMSE of Reaction--Diffusion solutions over physical time, with the horizontal axis representing time evolution and the vertical axis indicating the state at each time step.}
  \label{fig:rd_smse}
\end{figure}

\begin{figure}
    \centering
    \includegraphics[width=.8\linewidth]{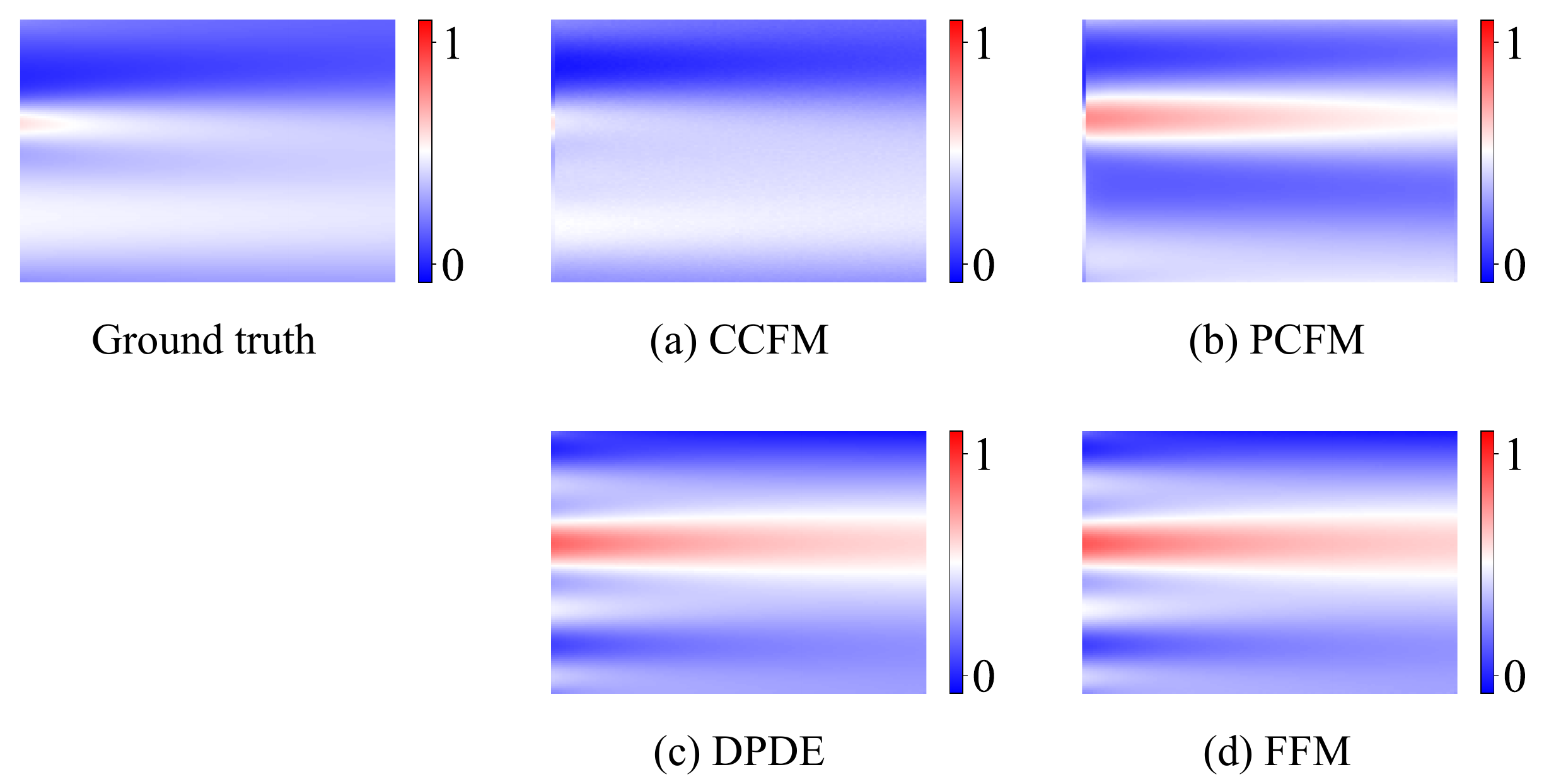}
    \caption{Example solution fields of the Reaction--Diffusion equation generated using the models compared with the ground truth.}
    \label{fig:rd_sample}
\end{figure}

Figure~\ref{fig:rd-mmse-time} shows the MMSE curves as a function of physical time for the Reaction--Diffusion equation. CCFM demonstrates superior performance compared with the other models used in this study, while exactly satisfying the initial condition (with error equal to 0). PCFM has similar performance. Figure~\ref{fig:rd_smse} shows the SMSE, where the x-axis represents physical time and the y-axis represents space. Similar to the MMSE (Figure~\ref{fig:rd_mse}), the proposed CCFM achieves the lowest errors compared with other models. The solution residuals, shown in Figure~\ref{fig:rd_sample} demonstrate the improvement from CCFM compared with other models in generating a specific solution field. The CCFM-generated sample closely adheres to the ground truth, while all other models show substantial discrepancies. These additional results consistently highlight the superiority of the proposed CCFM in generating solutions with high accuracy while ensuring constraint
satisfaction.

\subsubsection{Navier--Stokes}

\begin{figure}
\centering
    \begin{subfigure}{0.4\textwidth}
        \centering
        \includegraphics[width=\linewidth]{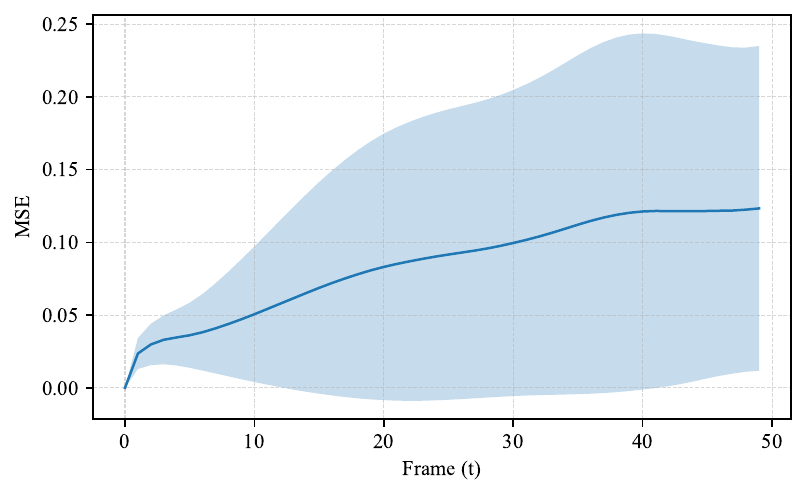}
        \caption{CCFM}
    \end{subfigure}
    \begin{subfigure}{0.4\textwidth}
        \centering
        \includegraphics[width=\linewidth]{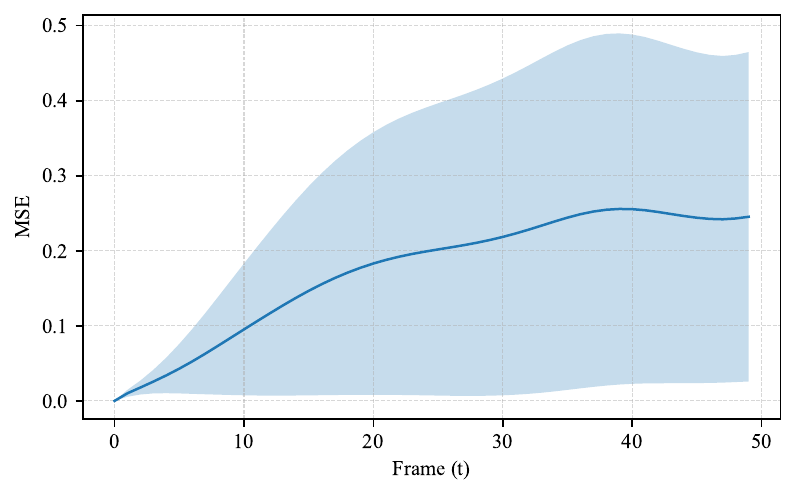}
        \caption{PCFM}
    \end{subfigure}
    
    \begin{subfigure}{0.4\textwidth}
        \centering
        \includegraphics[width=\linewidth]{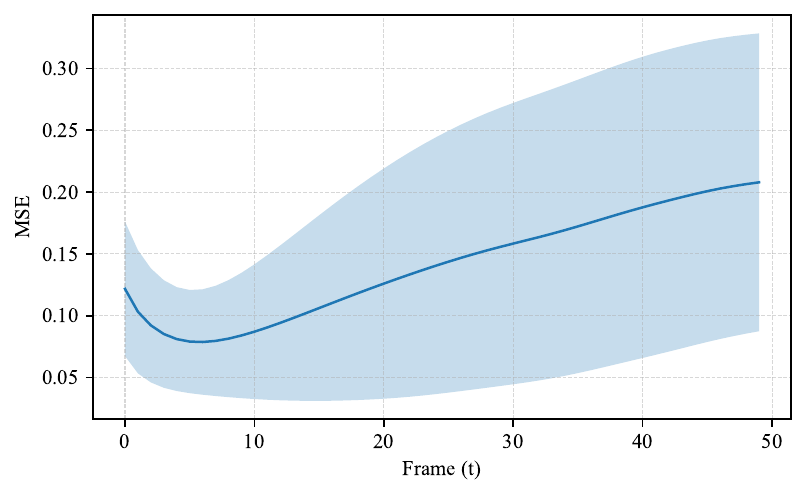}
        \caption{DPDE}
    \end{subfigure}
    \begin{subfigure}{0.4\textwidth}
        \centering
        \includegraphics[width=\linewidth]{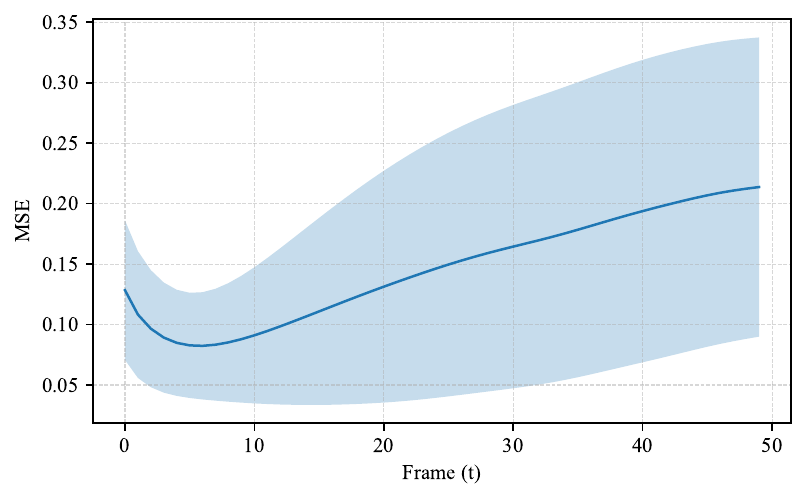}
        \caption{FFM}
    \end{subfigure}
    \caption{Comparison of generated solutions for the Navir--Stokes equation in terms of MMSE varying with physical time.}
    \label{fig:ns_mmse_time}
\end{figure}

\begin{figure}
\centering
    \begin{subfigure}{0.24\textwidth}
        \centering
        \includegraphics[width=\linewidth]{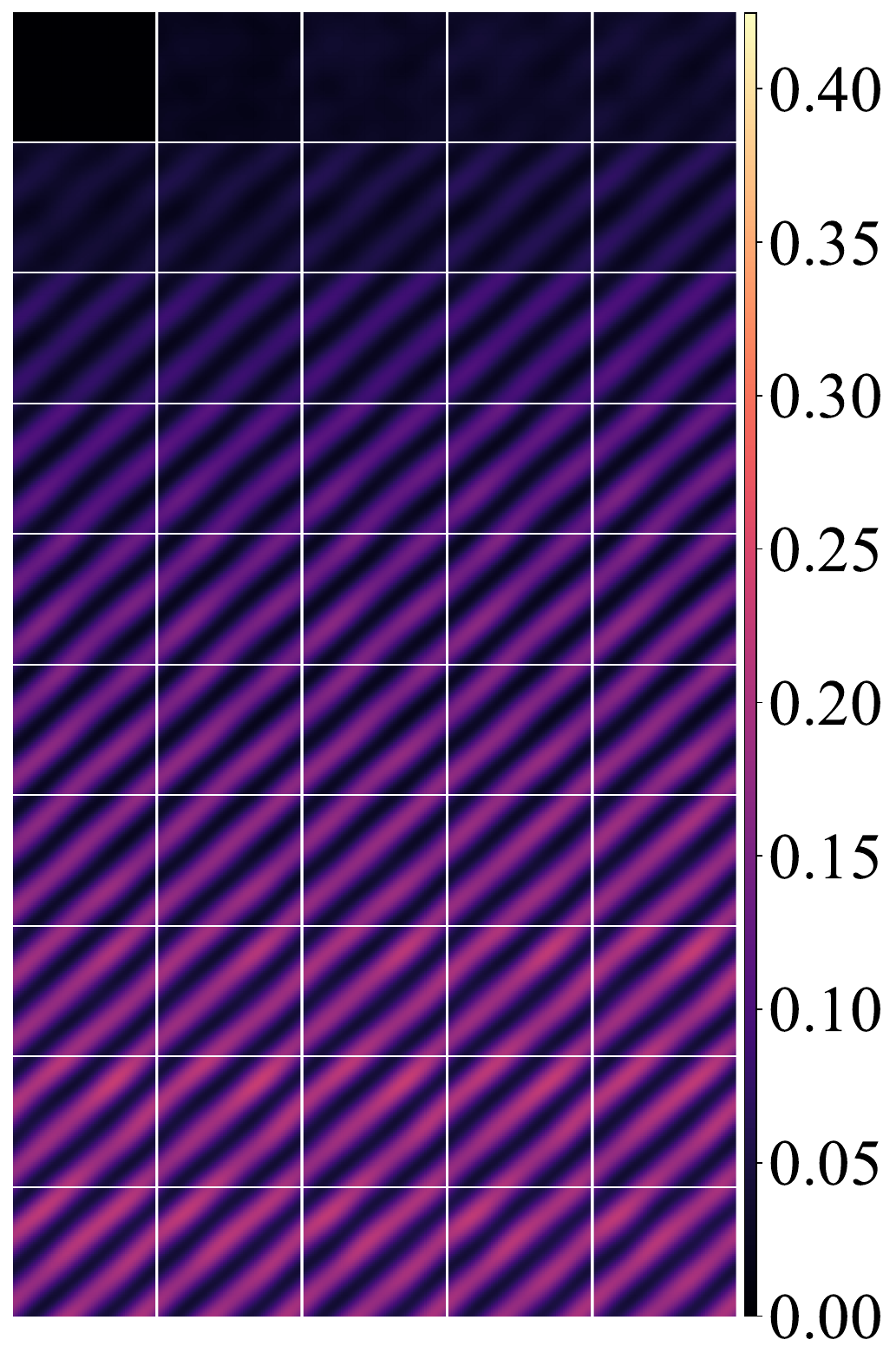}
        \caption{CCFM}
    \end{subfigure}
    \begin{subfigure}{0.24\textwidth}
        \centering
        \includegraphics[width=\linewidth]{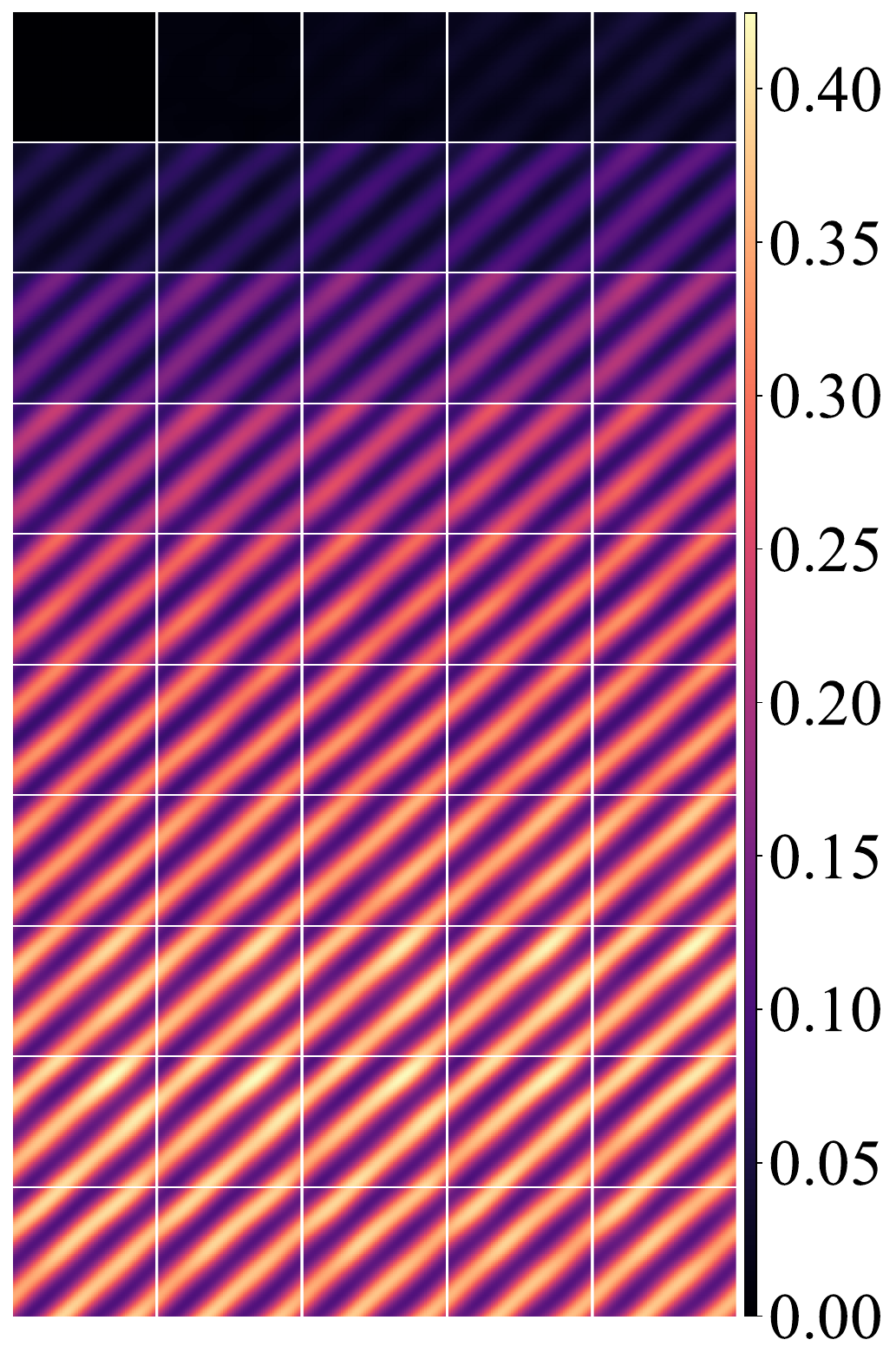}
        \caption{PCFM}
    \end{subfigure}
    \begin{subfigure}{0.24\textwidth}
        \centering
        \includegraphics[width=\linewidth]{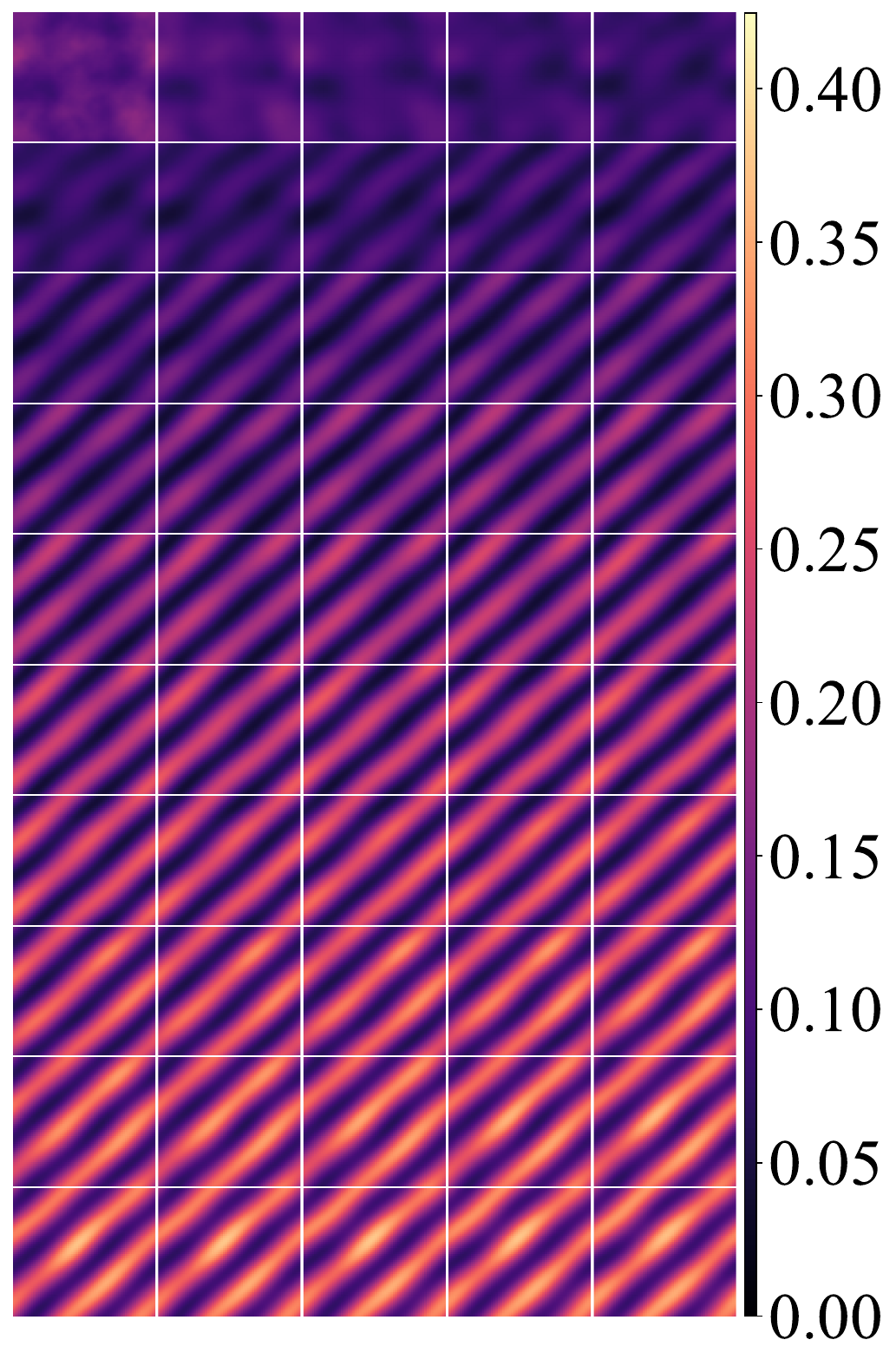}
        \caption{DPDE}
    \end{subfigure}
    \begin{subfigure}{0.24\textwidth}
        \centering
        \includegraphics[width=\linewidth]{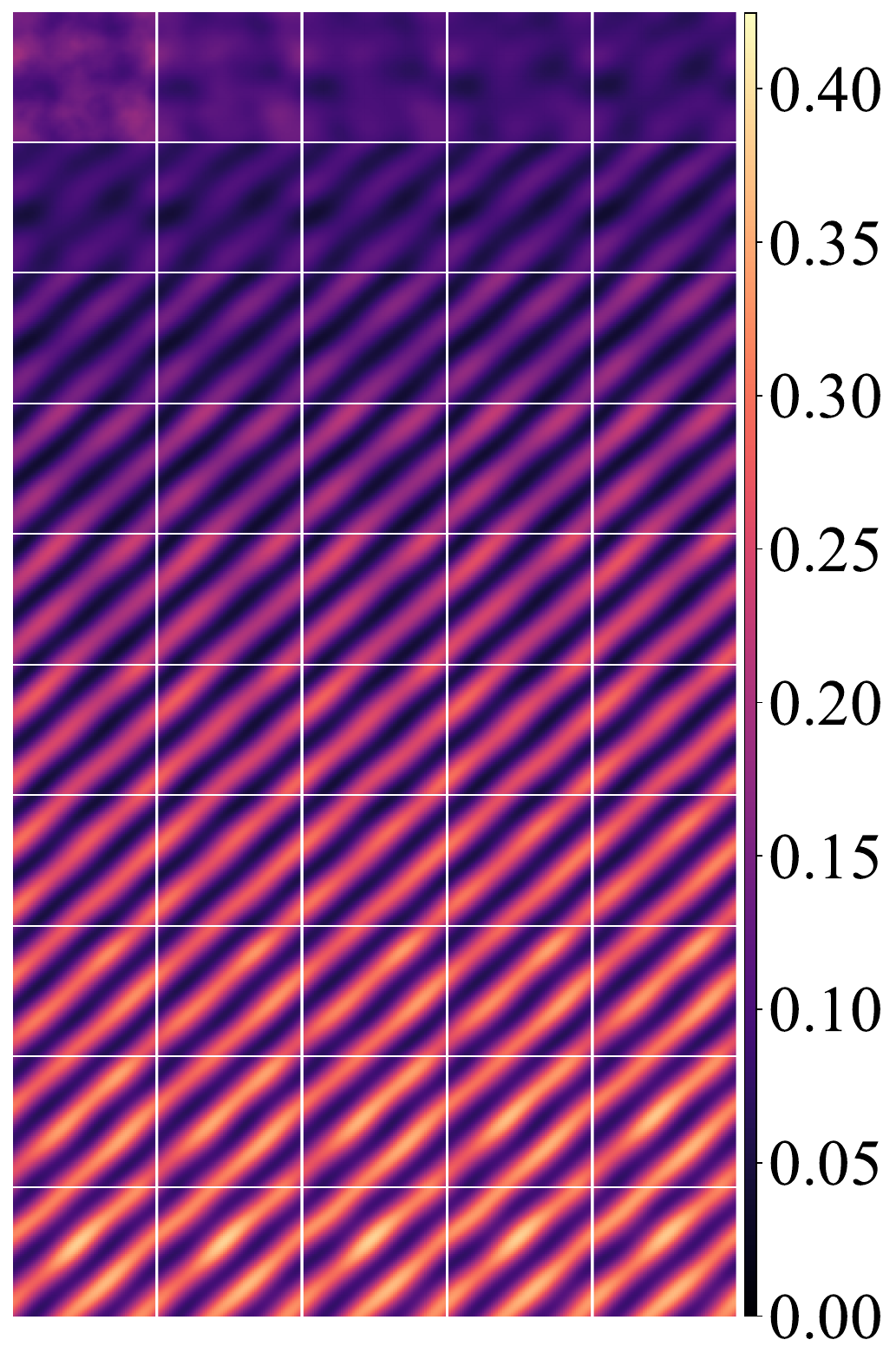}
        \caption{FFM}
    \end{subfigure}
    \caption{MMSE (the darker the better) of Navier--Stokes solutions over physical time with all 50 frames. Each frame corresponds to the system state at a given physical time.}
    \label{fig:ns_MMSE_all_frames}
\end{figure}

\begin{figure}
\centering
    \begin{subfigure}{0.24\textwidth}
        \centering
        \includegraphics[width=\linewidth]{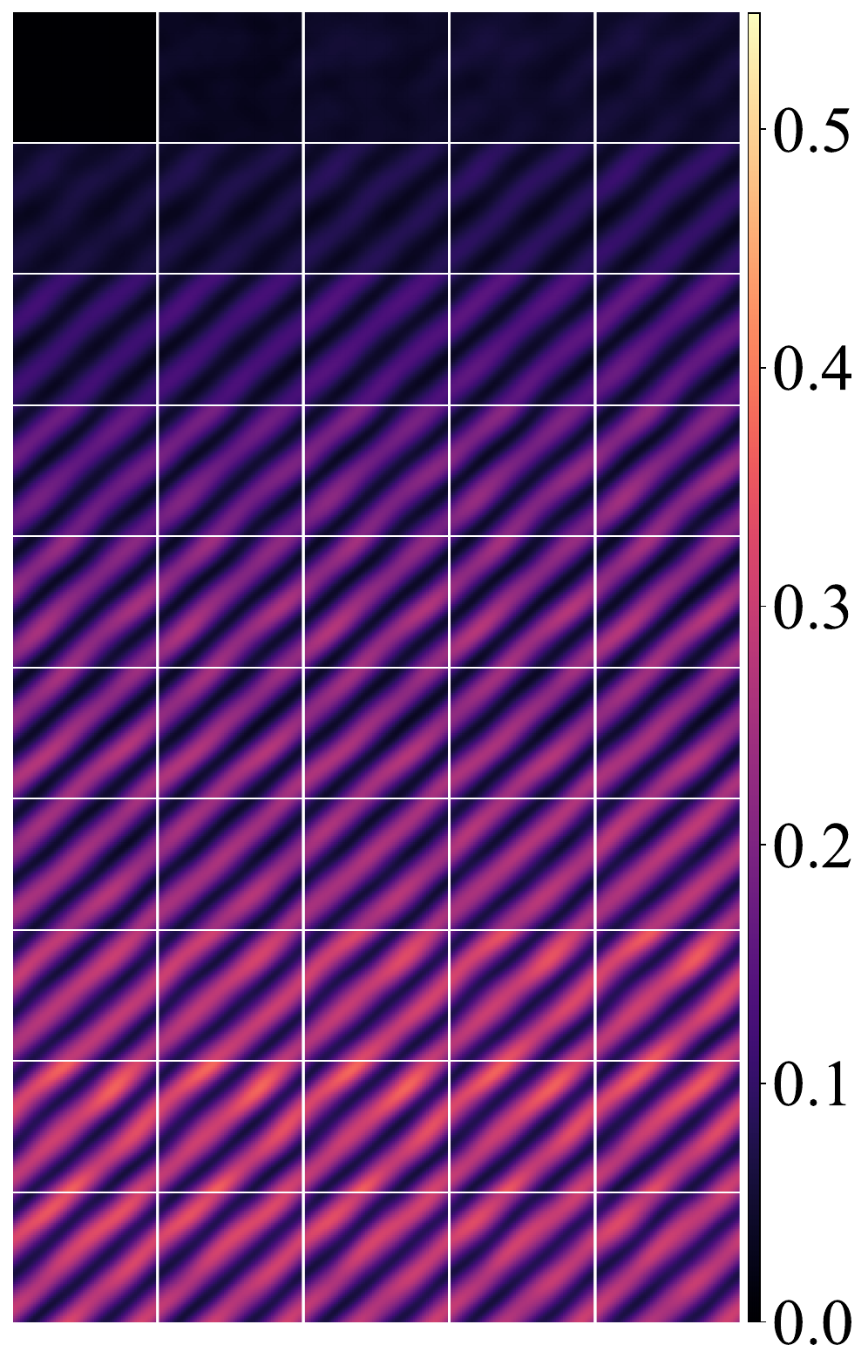}
        \caption{CCFM}
    \end{subfigure}
    \begin{subfigure}{0.24\textwidth}
        \centering
        \includegraphics[width=\linewidth]{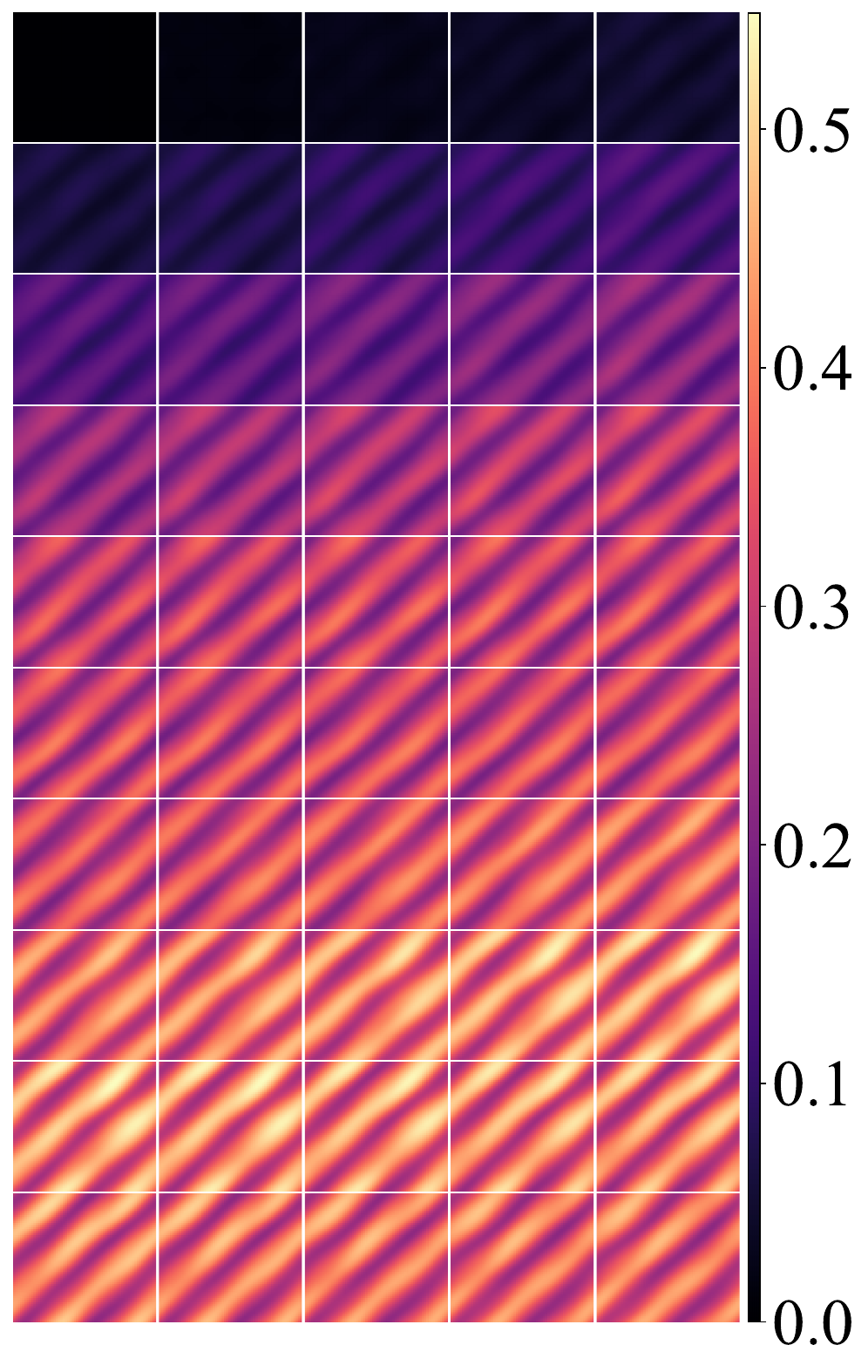}
        \caption{PCFM}
    \end{subfigure}
    \begin{subfigure}{0.24\textwidth}
        \centering
        \includegraphics[width=\linewidth]{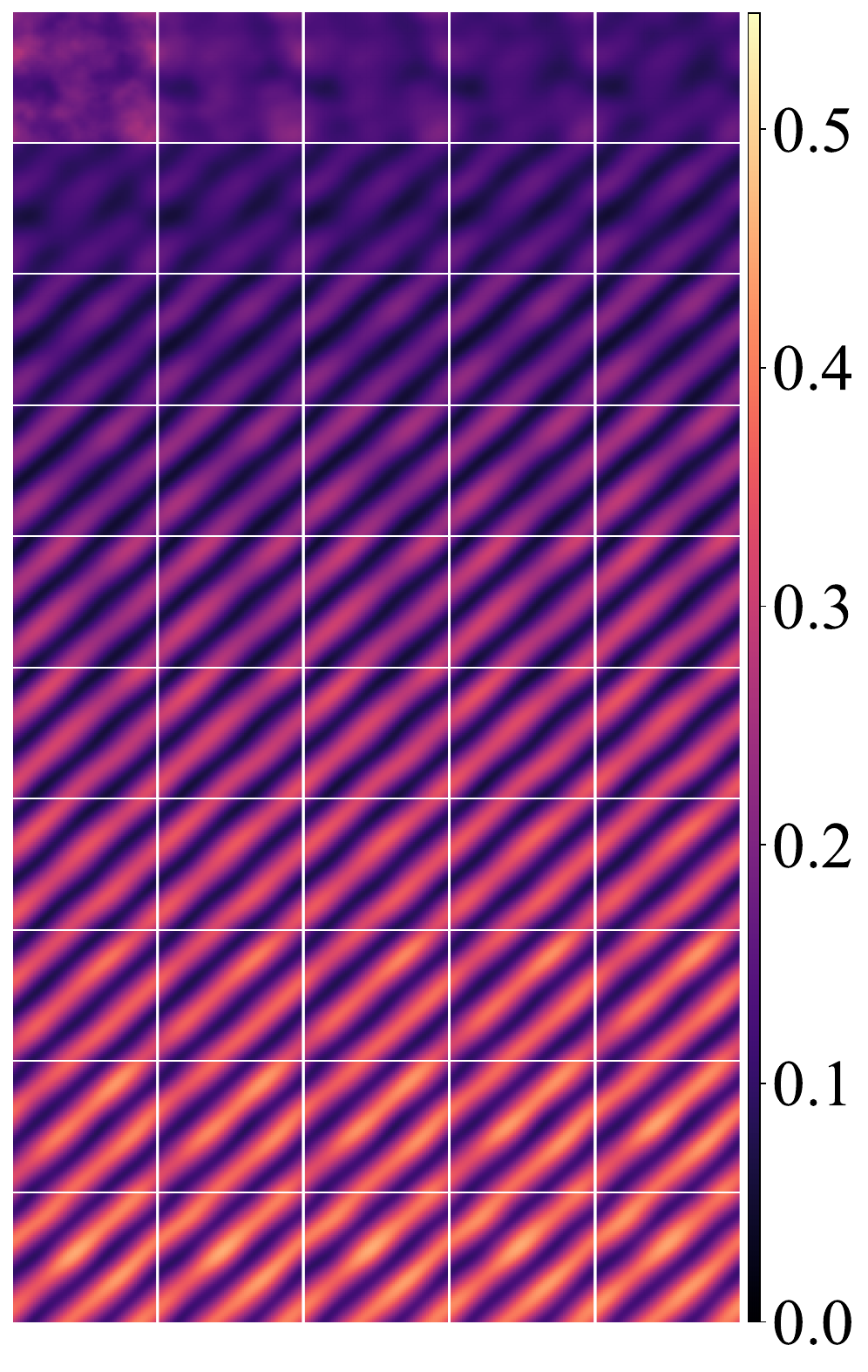}
        \caption{DPDE}
    \end{subfigure}
    \begin{subfigure}{0.24\textwidth}
        \centering
        \includegraphics[width=\linewidth]{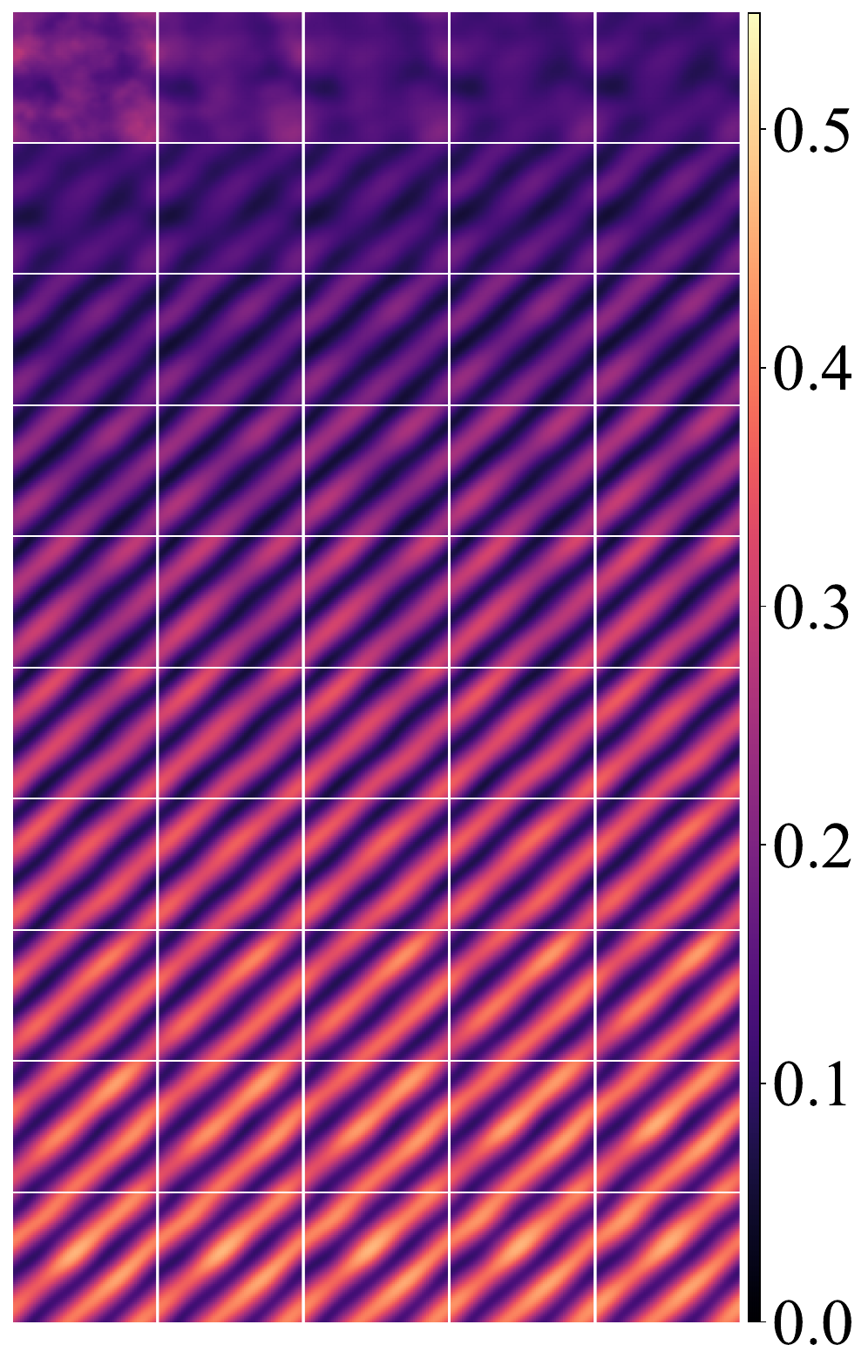}
        \caption{FFM}
    \end{subfigure}
    \caption{SMSE (the darker the better) of Navier--Stokes solutions over physical time with all 50 frames. Each frame corresponds to the system state at a given physical time.}
    \label{fig:ns_SMSE_all_frames}
\end{figure}

\begin{figure}
\centering
    \begin{subfigure}{0.24\textwidth}
        \centering
        \includegraphics[width=\linewidth]{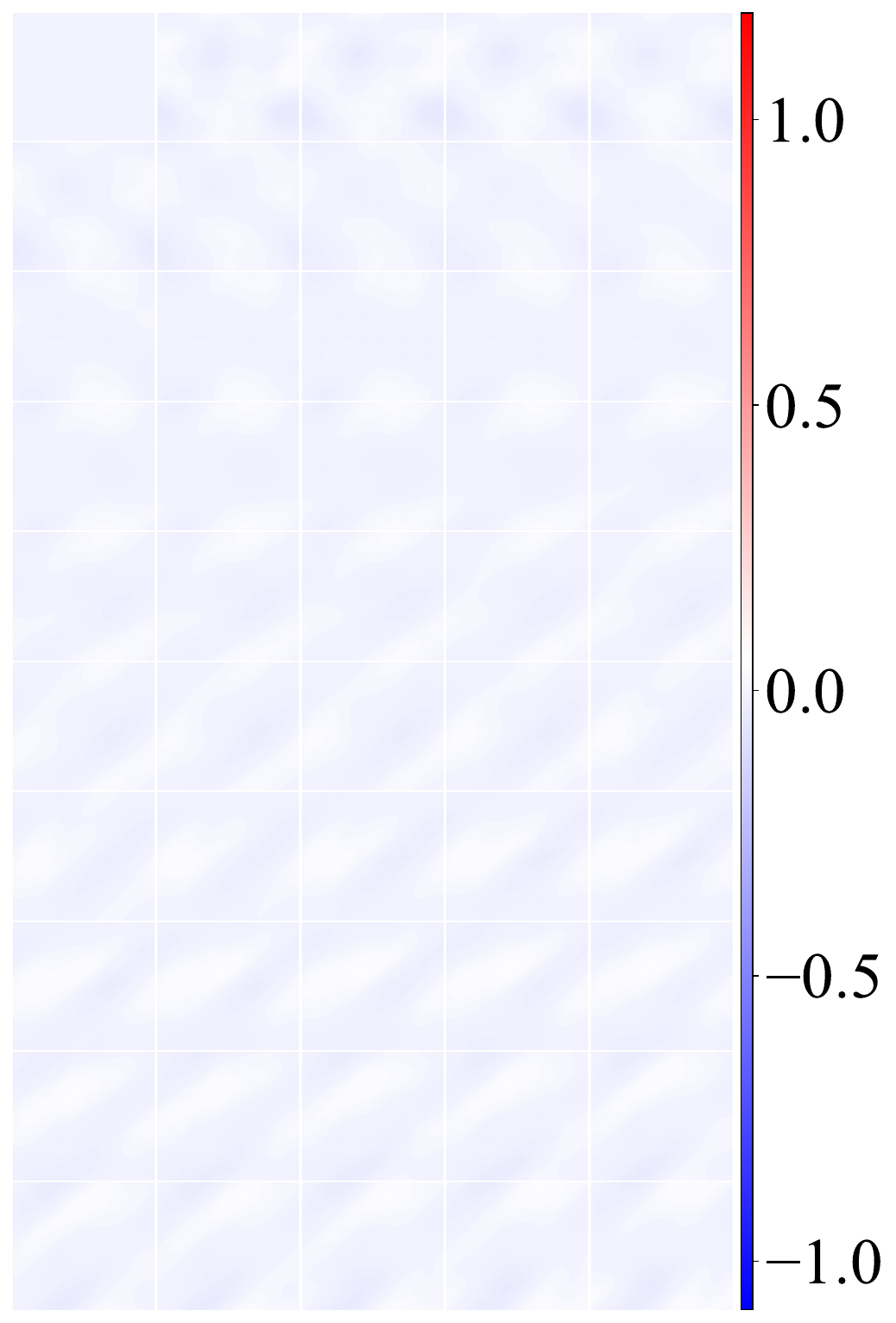}
        \caption{CCFM}
    \end{subfigure}
    \begin{subfigure}{0.24\textwidth}
        \centering
        \includegraphics[width=\linewidth]{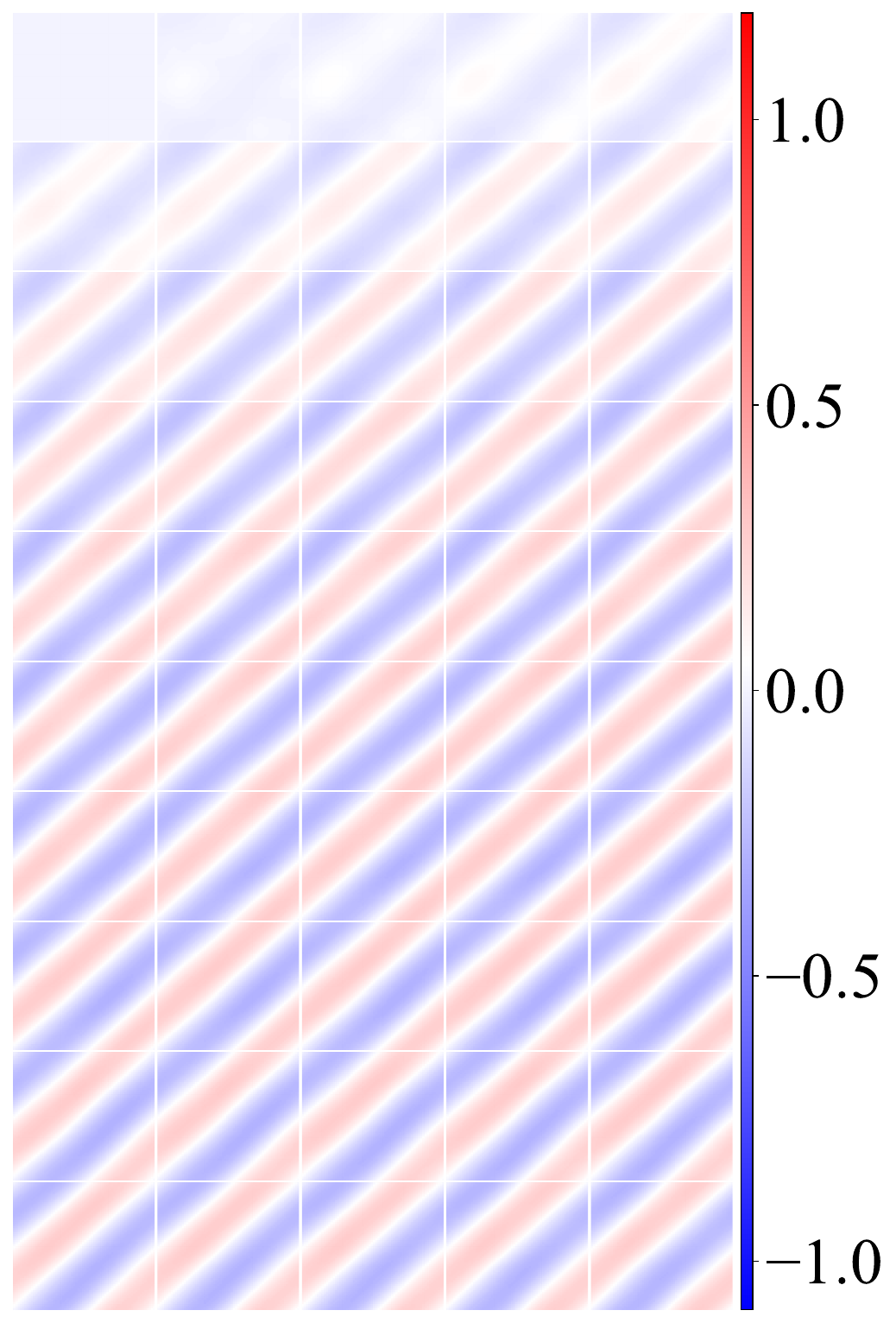}
        \caption{PCFM}
    \end{subfigure}
    \begin{subfigure}{0.24\textwidth}
        \centering
        \includegraphics[width=\linewidth]{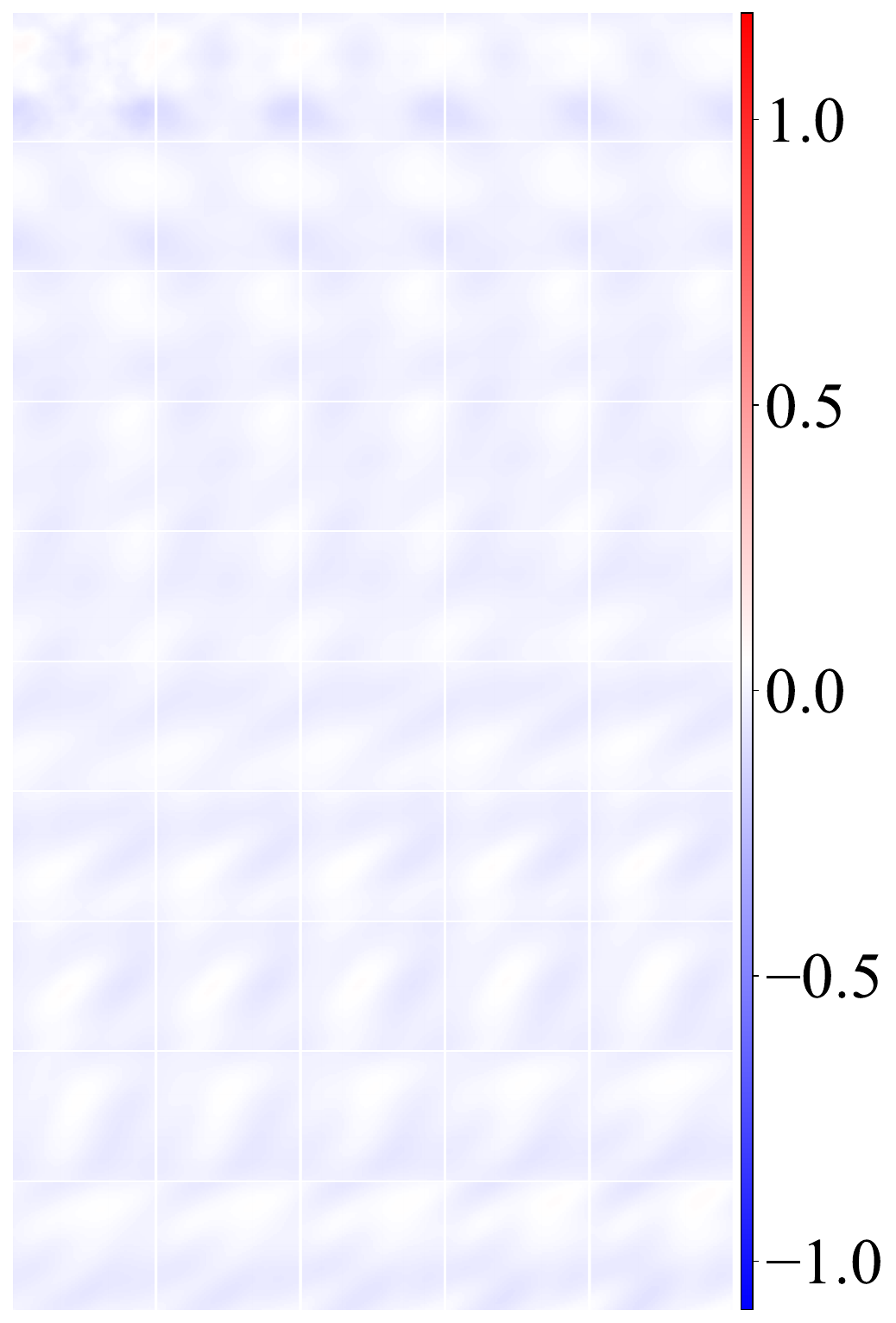}
        \caption{DPDE}
    \end{subfigure}
    \begin{subfigure}{0.24\textwidth}
        \centering
        \includegraphics[width=\linewidth]{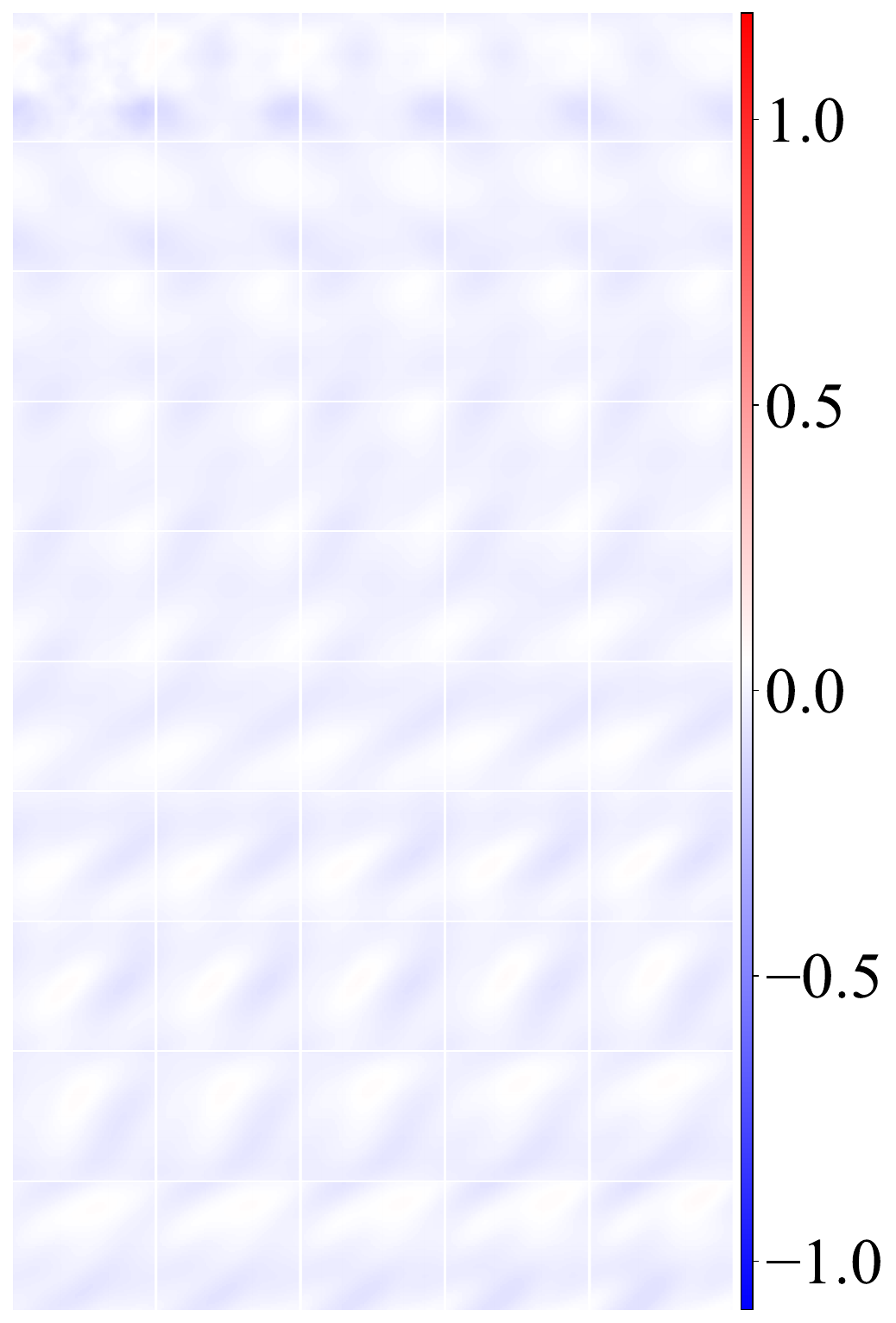}
        \caption{FFM}
    \end{subfigure}
    \caption{Point-wise averaged residuals (the lighter the better) of Navier--Stokes solutions over physical time with all 50 frames. Each frame corresponds to the system state at a given physical time.}
    \label{fig:ns_residual}
\end{figure}

\begin{figure}
\centering
    \begin{subfigure}{0.19\textwidth}
        \centering
        \includegraphics[width=\linewidth]{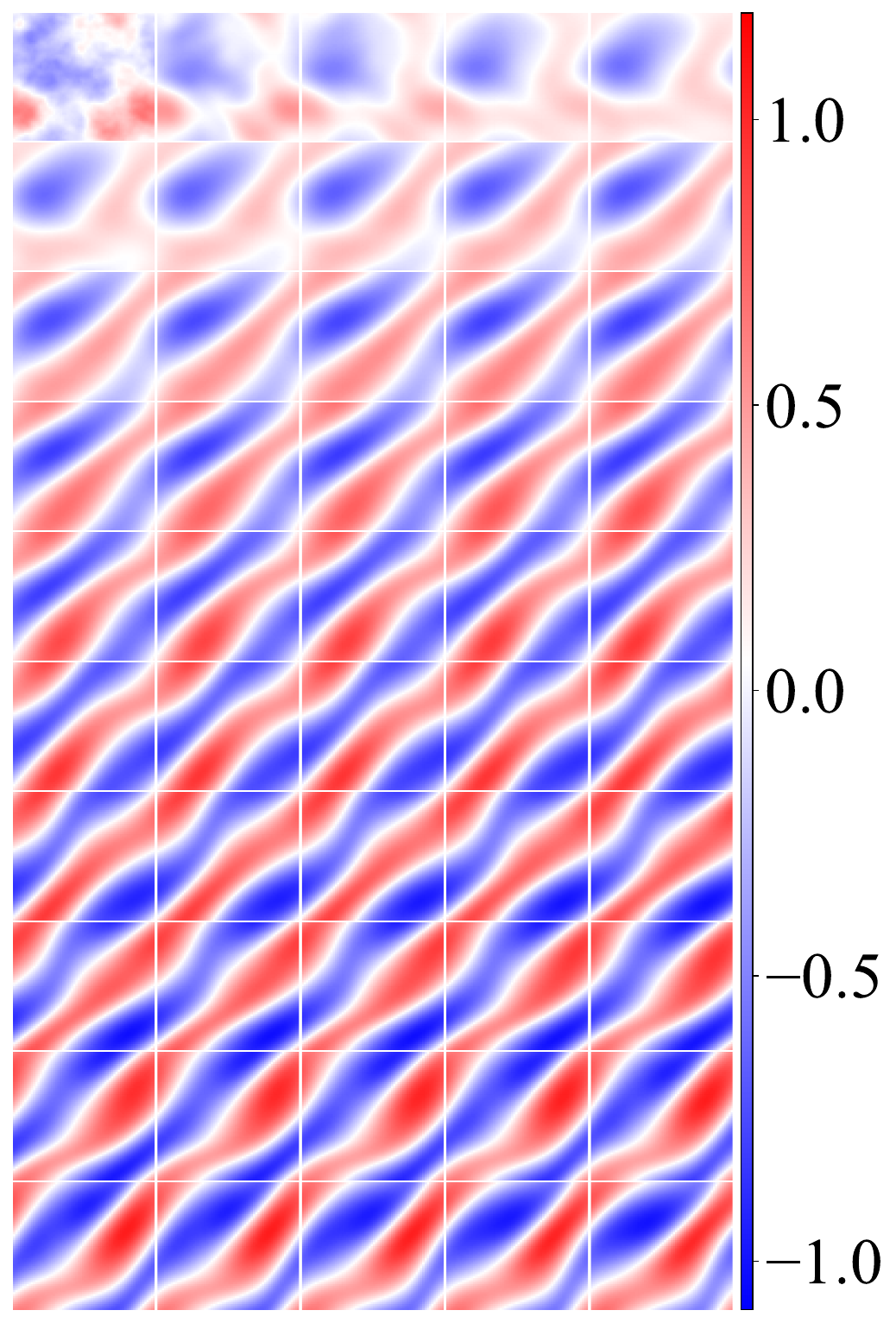}
        \caption{Ground truth}
    \end{subfigure}
    \begin{subfigure}{0.19\textwidth}
        \centering
        \includegraphics[width=\linewidth]{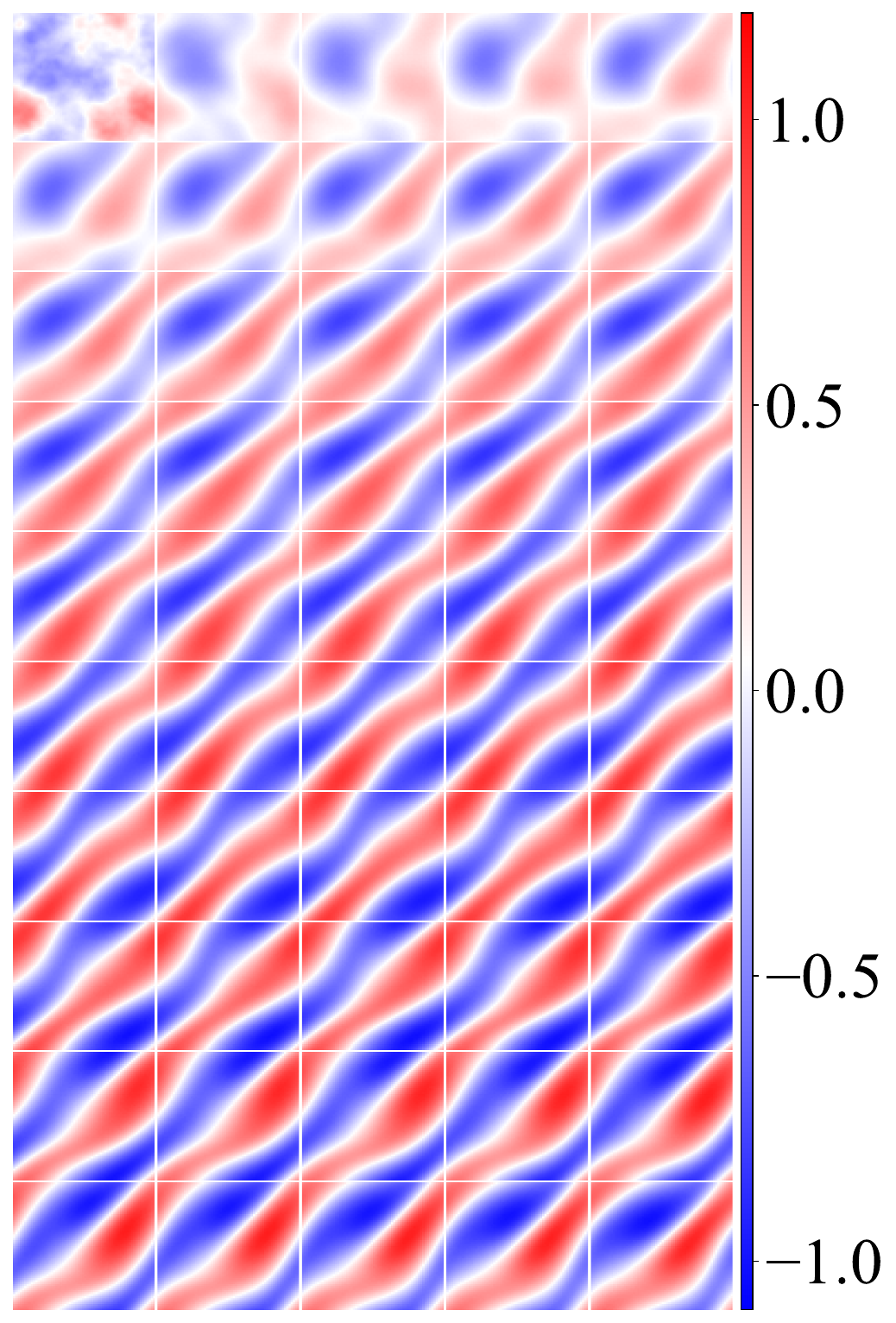}
        \caption{CCFM}
    \end{subfigure}
    \begin{subfigure}{0.19\textwidth}
        \centering
        \includegraphics[width=\linewidth]{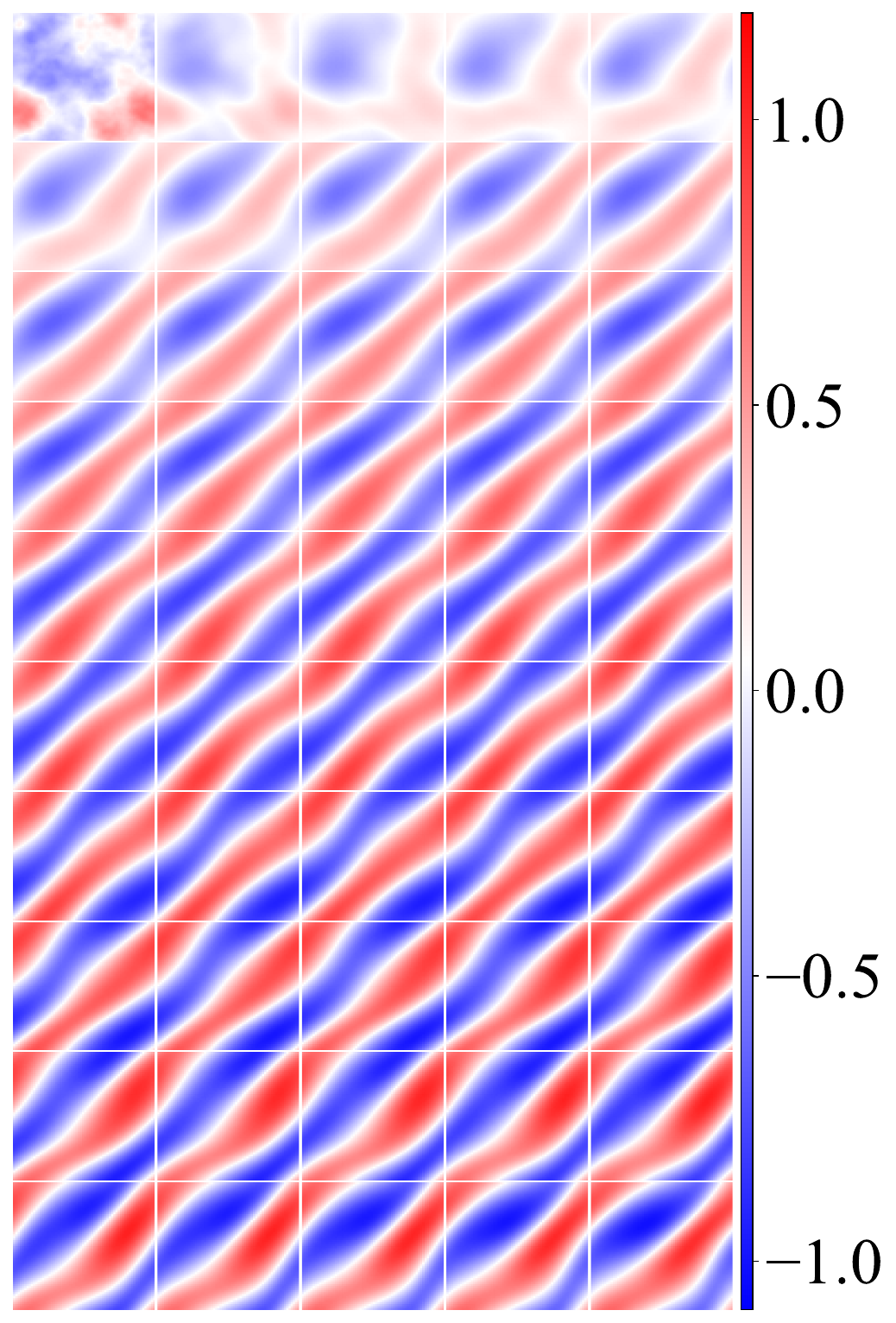}
        \caption{PCFM}
    \end{subfigure}
    \begin{subfigure}{0.19\textwidth}
        \centering
        \includegraphics[width=\linewidth]{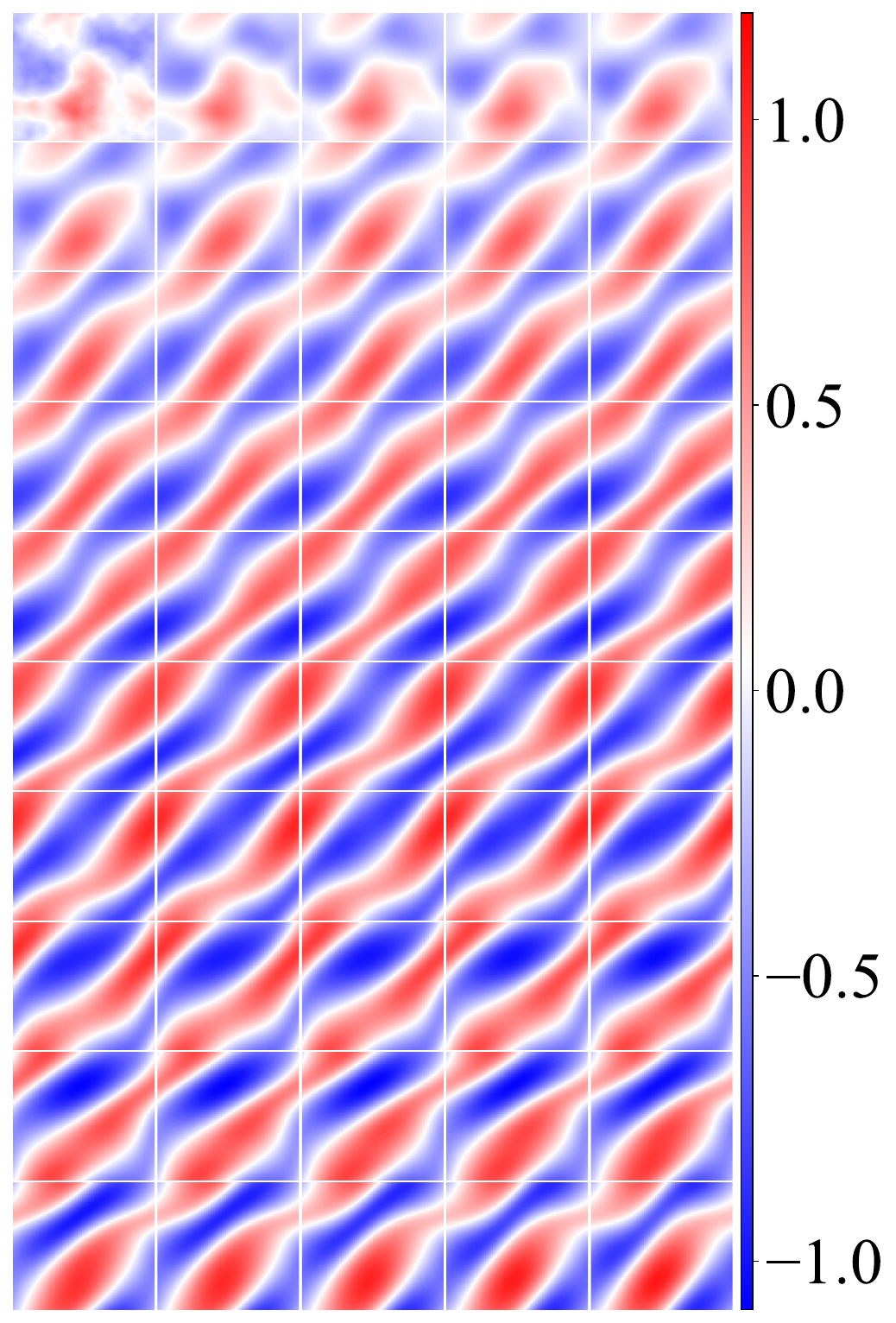}
        \caption{DPDE}
    \end{subfigure}
    \begin{subfigure}{0.19\textwidth}
        \centering
        \includegraphics[width=\linewidth]{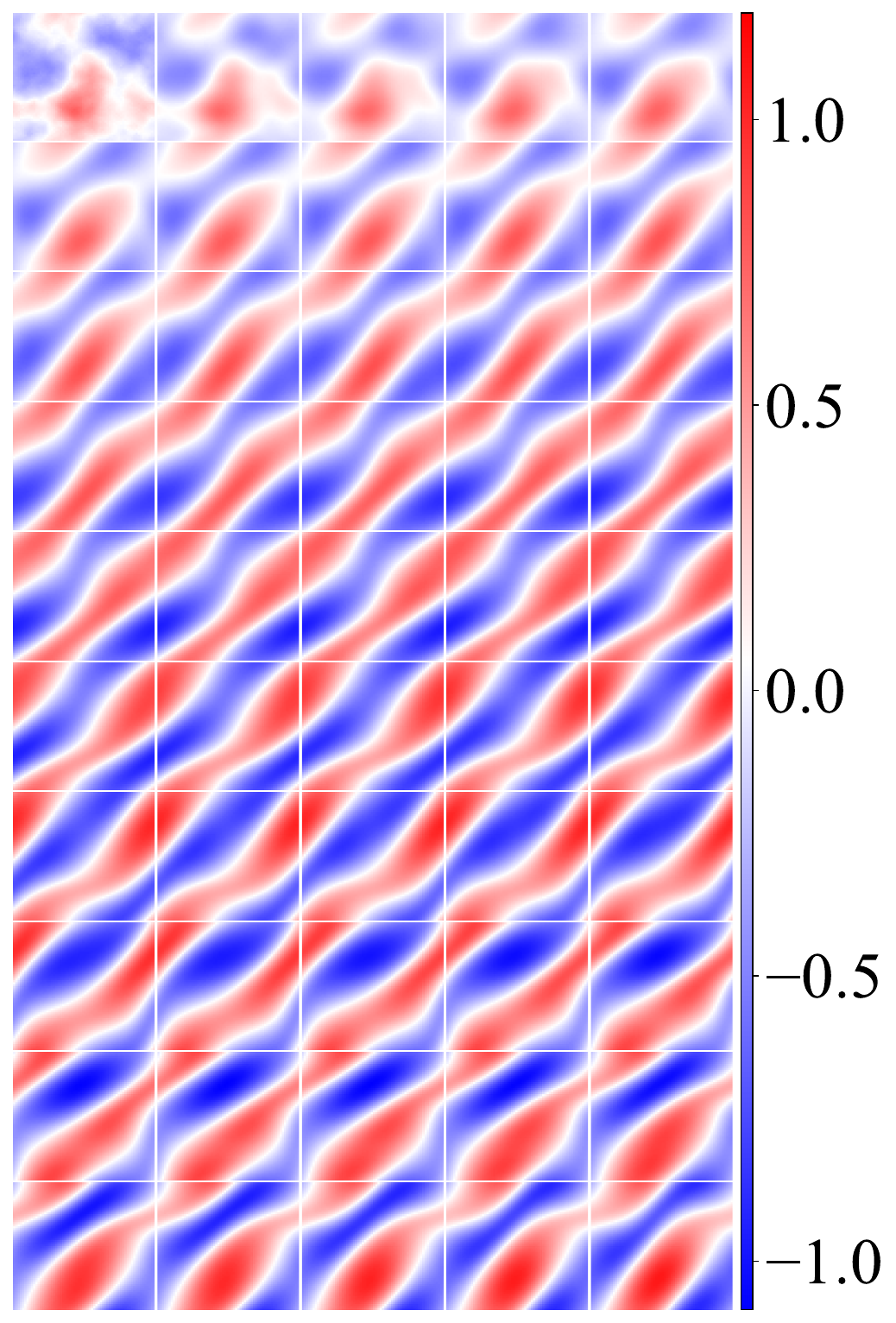}
        \caption{FFM}
    \end{subfigure}
    \caption{Example solution fields of the Navir--Stokes equation generated using the models compared with the ground truth.}
    \label{fig:ns_sample}
\end{figure}
For the Navier--Stokes problem, to complement the results presented in Section~\ref{sec:pde_results}, we report evaluation results for all physical time frames. Figure~\ref{fig:ns_MMSE_all_frames} shows the MMSE of generated samples from the models as a function of physical time. While the errors increase over time, CCFM maintains relatively low error levels across all frames compared with other models. This is also reflected in Figure~\ref{fig:ns_mmse_time}, where CCFM outperforms other models, with the maximum spatially averaged MMSE around 0.12 (others $>$ 0.20). Meanwhile, the average solution residuals from CCFM, shown in Figure~\ref{fig:ns_residual}, are significantly lower than those of other models across all time frames. In all cases, CCFM not only produces solutions with high accuracy but also yields samples that strictly satisfy the constraints. Figure~\ref{fig:ns_sample} shows an example of the true solution field compared with generated ones from the models. Visually, the CCFM-generated sample consistently aligns with the ground truth, especially at early time steps where DPDE and FFM fail. PCFM achieves closer performance in the early steps yet exhibits larger magnitude discrepancies (lighter blue than the ground truth).

\end{document}